\newcommand{\secref}[1]{\mbox{Section~\ref{#1}}}
\newcommand{\thref}[1]{\mbox{Theorem~\ref{#1}}}
\newcommand{\claimref}[1]{\mbox{Claim~\ref{#1}}}
\newcommand{\lemref}[1]{\mbox{Lemma~\ref{#1}}}
\renewcommand{\eqref}[1]{\mbox{Eq.~(\ref{#1})}}
\newcommand{\tabref}[1]{\mbox{Table~\ref{#1}}}
\newcommand{\figref}[1]{\mbox{Fig.~\ref{#1}}}
\newcommand{\algoref}[1]{\mbox{Algorithm~\ref{#1}}}
\newcommand{\etal}{\textit{et al}.}
\newcommand{\ie}{\textit{i}.\textit{e}.,}
\newcommand{\eg}{\textit{e}.\textit{g}.,}
\def\R{\mathbb{R}}
\def\Dist#1{{\mathcal{#1}}}
\newtheorem{claim}{Claim}
\newtheorem{defn}{Definition}
\newtheorem{thm}{Theorem}
\newtheorem{lemma}{Lemma}
\newtheorem{remark}{Remark}
\newtheorem{prop}{Proposition}
\newtheorem{corrolary}{Corollary}
\newenvironment{proof-sketch}{%
  \renewcommand{\proofname}{Proof Sketch}\proof}{\endproof}
\definecolor{lime}{HTML}{A6CE39}
\DeclareRobustCommand{\orcidicon}{%
  \begin{tikzpicture}
  \draw[lime, fill=lime] (0,0) 
  circle [radius=0.16] 
  node[white] {{\fontfamily{qag}\selectfont \tiny ID}}; \draw[white, fill=white] (-0.0625,0.095) 
  circle [radius=0.007];  \end{tikzpicture}
  \hspace{-2mm}}
\xdef\csname orcid\x\endcsname{\noexpand\href{https://orcid.org/\csname orcidauthor\x\endcsname}{\noexpand\orcidicon}}
\renewenvironment{proof}[1][\relax]{\par
  \pushQED{\qed}%
  \normalfont \topsep6\p@\@plus6\p@\relax
  \trivlist
  \item[\hskip\labelsep\itshape
    \ifx#1\relax \proofname\else\proofname{} of #1\fi\@addpunct{.}]\ignorespaces
}{%
  \popQED\endtrivlist\@endpefalse
}
\begin{document}
%
% paper title
% Titles are generally capitalized except for words such as a, an, and, as,
% at, but, by, for, in, nor, of, on, or, the, to and up, which are usually
% not capitalized unless they are the first or last word of the title.
% Linebreaks \\ can be used within to get better formatting as desired.
% Do not put math or special symbols in the title.
\title{Fair Representation: Guaranteeing Approximate Multiple Group Fairness for Unknown Tasks}
%
%
% author names and IEEE memberships
% note positions of commas and nonbreaking spaces ( ~ ) LaTeX will not break
% a structure at a ~ so this keeps an author's name from being broken across
% two lines.
% use \thanks{} to gain access to the first footnote area
% a separate \thanks must be used for each paragraph as LaTeX2e's \thanks
% was not built to handle multiple paragraphs
%
%
%\IEEEcompsocitemizethanks is a special \thanks that produces the bulleted
% lists the Computer Society journals use for "first footnote" author
% affiliations. Use \IEEEcompsocthanksitem which works much like \item
% for each affiliation group. When not in compsoc mode,
% \IEEEcompsocitemizethanks becomes like \thanks and
% \IEEEcompsocthanksitem becomes a line break with idention. This
% facilitates dual compilation, although admittedly the differences in the
% desired content of \author between the different types of papers makes a
% one-size-fits-all approach a daunting prospect. For instance, compsoc 
% journal papers have the author affiliations above the "Manuscript
% received ..."  text while in non-compsoc journals this is reversed. Sigh.

\author{Xudong~Shen\orcidA{},
        Yongkang~Wong\orcidB{},~\IEEEmembership{Member,~IEEE,}
        and~Mohan~Kankanhalli\orcidC{},~\IEEEmembership{Fellow,~IEEE}% <-this % stops a space
\IEEEcompsocitemizethanks{
\IEEEcompsocthanksitem X. Shen is with the Graduate School, National University of Singapore, Singapore, 117456. E-mail: xudong.shen@u.nus.edu
\IEEEcompsocthanksitem Y. Wong and M. Kankanhalli are with the School of Computing, National University of Singapore, Singapore, 117417. E-mail: yongkang.wong@nus.edu.sg, mohan@comp.nus.edu.sg.}% <-this % stops an unwanted space
\thanks{Manuscript received September XX, 2021; revised September XX, 2021.}}

% note the % following the last \IEEEmembership and also \thanks - 
% these prevent an unwanted space from occurring between the last author name
% and the end of the author line. i.e., if you had this:
% 
% \author{....lastname \thanks{...} \thanks{...} }
%                     ^------------^------------^----Do not want these spaces!
%
% a space would be appended to the last name and could cause every name on that
% line to be shifted left slightly. This is one of those "LaTeX things". For
% instance, "\textbf{A} \textbf{B}" will typeset as "A B" not "AB". To get
% "AB" then you have to do: "\textbf{A}\textbf{B}"
% \thanks is no different in this regard, so shield the last } of each \thanks
% that ends a line with a % and do not let a space in before the next \thanks.
% Spaces after \IEEEmembership other than the last one are OK (and needed) as
% you are supposed to have spaces between the names. For what it is worth,
% this is a minor point as most people would not even notice if the said evil
% space somehow managed to creep in.

% The paper headers
\markboth{IEEE TRANSACTIONS ON PATTERN ANALYSIS AND MACHINE INTELLIGENCE,~Vol.~XX, No.~X, September~2021}%
{Shen \MakeLowercase{\textit{et al.}}: Fair Representation}
% The only time the second header will appear is for the odd numbered pages
% after the title page when using the twoside option.
% 
% *** Note that you probably will NOT want to include the author's ***
% *** name in the headers of peer review papers.                   ***
% You can use \ifCLASSOPTIONpeerreview for conditional compilation here if
% you desire.

% The publisher's ID mark at the bottom of the page is less important with
% Computer Society journal papers as those publications place the marks
% outside of the main text columns and, therefore, unlike regular IEEE
% journals, the available text space is not reduced by their presence.
% If you want to put a publisher's ID mark on the page you can do it like
% this:
%\IEEEpubid{0000--0000/00\$00.00~\copyright~2015 IEEE}
% or like this to get the Computer Society new two part style.
%\IEEEpubid{\makebox[\columnwidth]{\hfill 0000--0000/00/\$00.00~\copyright~2015 IEEE}%
%\hspace{\columnsep}\makebox[\columnwidth]{Published by the IEEE Computer Society\hfill}}
% Remember, if you use this you must call \IEEEpubidadjcol in the second
% column for its text to clear the IEEEpubid mark (Computer Society jorunal
% papers don't need this extra clearance.)

% use for special paper notices
%\IEEEspecialpapernotice{(Invited Paper)}

% for Computer Society papers, we must declare the abstract and index terms
% PRIOR to the title within the \IEEEtitleabstractindextext IEEEtran
% command as these need to go into the title area created by \maketitle.
% As a general rule, do not put math, special symbols or citations
% in the abstract or keywords.
\IEEEtitleabstractindextext{%
%!TEX root = ../main.tex
\begin{abstract}
Motivated by scenarios where data is used for diverse prediction tasks, 
we study whether fair representation can be used to guarantee fairness \emph{for unknown tasks} and \emph{for multiple fairness notions}.
We consider seven group fairness notions that cover the concepts of independence, separation, and calibration.
Against the backdrop of the fairness impossibility results, we explore approximate fairness.
We prove that, although fair representation might not guarantee fairness for all prediction tasks, it does guarantee fairness for an important subset of tasks---the tasks for which the representation is discriminative.
Specifically, all seven group fairness notions are linearly controlled by fairness and discriminativeness of the representation.
When an incompatibility exists between different fairness notions, 
fair and discriminative representation hits the sweet spot that \emph{approximately} satisfies all notions.
Motivated by our theoretical findings, we propose to learn both fair and discriminative representations using pretext loss which self-supervises learning, and Maximum Mean Discrepancy as a fair regularizer.
Experiments on tabular, image, and face datasets show that using the learned representation, downstream predictions \emph{that we are unaware of when learning the representation} indeed become fairer.
The fairness guarantees computed from our theoretical results are all valid.
\end{abstract}

% Note that keywords are not normally used for peerreview papers.
\begin{IEEEkeywords}
Fair Representation, Group Fairness, Fair Machine Learning
\end{IEEEkeywords}}

% make the title area
\maketitle

% To allow for easy dual compilation without having to reenter the
% abstract/keywords data, the \IEEEtitleabstractindextext text will
% not be used in maketitle, but will appear (i.e., to be "transported")
% here as \IEEEdisplaynontitleabstractindextext when the compsoc 
% or transmag modes are not selected <OR> if conference mode is selected 
% - because all conference papers position the abstract like regular
% papers do.
\IEEEdisplaynontitleabstractindextext
% \IEEEdisplaynontitleabstractindextext has no effect when using
% compsoc or transmag under a non-conference mode.

% For peer review papers, you can put extra information on the cover
% page as needed:
% \ifCLASSOPTIONpeerreview
% \begin{center} \bfseries EDICS Category: 3-BBND \end{center}
% \fi
%
% For peerreview papers, this IEEEtran command inserts a page break and
% creates the second title. It will be ignored for other modes.
\IEEEpeerreviewmaketitle

% \IEEEraisesectionheading{
% \section{Introduction}\label{sec:introduction}
% }
% Computer Society journal (but not conference!) papers do something unusual
% with the very first section heading (almost always called "Introduction").
% They place it ABOVE the main text! IEEEtran.cls does not automatically do
% this for you, but you can achieve this effect with the provided
% \IEEEraisesectionheading{} command. Note the need to keep any \label that
% is to refer to the section immediately after \section in the above as
% \IEEEraisesectionheading puts \section within a raised box.

% The very first letter is a 2 line initial drop letter followed
% by the rest of the first word in caps (small caps for compsoc).
% 
% form to use if the first word consists of a single letter:
% \IEEEPARstart{A}{demo} file is ....
% 
% form to use if you need the single drop letter followed by
% normal text (unknown if ever used by the IEEE):
% \IEEEPARstart{A}{}demo file is ....
% 
% Some journals put the first two words in caps:
% \IEEEPARstart{T}{his demo} file is ....
% 
% Here we have the typical use of a "T" for an initial drop letter
% and "HIS" in caps to complete the first word.
% \IEEEPARstart{T}{his} demo file is intended to serve as a ``starter file''
% for IEEE Computer Society journal papers produced under \LaTeX\ using
% IEEEtran.cls version 1.8b and later.
% % You must have at least 2 lines in the paragraph with the drop letter
% % (should never be an issue)
% I wish you the best of success.

% \hfill mds
 
% \hfill August 26, 2015

%!TEX root = ../main.tex
\IEEEraisesectionheading{
\section{Introduction}
\label{sec:introduction}
}

\IEEEPARstart{A}{s} machine learning (ML) continues to be widely used in various social contexts, \textit{fairness} grows to be a prominent concern.
There is accumulating evidence that ML systems institutionalize discrimination, for example in healthcare~\cite{obermeyer_Science_2019}, face recognition~\cite{buolamwini_FAccT_2018_gender}, and criminal justice~\cite{angwin_2016_propublica}.

In many scenarios, the collected dataset is shared as-is with dataset users who then perform diverse prediction tasks.
Although the dataset owner has fairness concerns, they might have no knowledge of what tasks are performed using the dataset and the dataset users might bias certain demographics as long as it maximizes their utility.
For example, digital footprint data is predictively useful for various target attributes, such as loan default~\cite{berg2018rise}, sexual orientation~\cite{jernigan2009gaydar}, and political ideology~\cite{kosinski2013private,preoctiuc2017beyond}.
When the data owner sells a digital footprint dataset on data marketplaces, they\footnote{We use the singular gender-neutral pronoun ``they'' whenever possible.} neither knows what prediction tasks will be performed nor trusts the dataset users.
This results in potential fairness violations.

A popular approach to achieve fair outcome is through fair representation.
The intuition is, by first mapping input data to an intermediate representation that satisfies certain conditions,
\eg~anonymizing the sensitive attribute,
downstream predictions based on the representation are guaranteed to be fair.
Besides the works~\cite{beutel_arxiv_2017,edwards_arxiv_2015,xie_NeurIPS_2017,zhao_ICLR_2020_conditional,kingma_arxiv_2013_VAE,moyer_NeurIPS_2018,song_AISTATS_2019,louizos_arxiv_2015,creager_ICML_2019,kehrenberg2020null,quadrianto2019discovering} that explicitly take this approach, a large number of works in fact implicitly create fair representations, for example in computer vision by collecting balanced datasets~\cite{buolamwini_FAccT_2018_gender,karkkainen2021fairface} 
or generating synthetic datasets~\cite{choi2020fair,ramaswamy2021fair}; and in natural language processing by de-biasing word/sentence embeddings~\cite{bolukbasi_NeurIPS_2016_word_embeddings,caliskan2017semantics,zhao2018learning,may2019measuring} or using contrast sets to augment training corpora~\cite{gardner2020evaluating}.

However, despite the popularity, the understanding of which fairness notion can be guaranteed to what degree is limited.
It is known that downstream predictions' Statistical Parity is upper bounded by the sensitive attribute's predictability~\cite{feldman_SIGKDD_2015,mcnamara_arxiv_2017}.
But for other fairness notions such as Disparity of Opportunity, existing work~\cite{madras_ICML_2018} has required the data owner to (1) know what are the downstream prediction tasks,
%the dataset user performs, 
(2) have access to the target labels, and (3) pose different constraints when learning the representation.
Consequently, the learned fair representation is tailored for the known prediction task and the specific fairness notion.

Thus, this work is motivated to study whether the data owner can learn a fair representation that guarantees various potentially unknown prediction tasks are fair.
Furthermore, we consider seven different group fairness notions as defined in~\tabref{tab:fairness_definitions} and \emph{aim to (approximately) satisfy them simultaneously}.
Statistical Parity (SP), Disparity of Opportunity (DOpp), Disparity of Regret (DR), and Disparity of Odds (DOdds) are commonly used fairness notions~\cite{calders_DMKD_2010,hardt_NeurIPS_2016_equality,narayanan2018translation}.
% Maximum Uncalibration (MUC) is proposed in~\cite{barocas-hardt-narayanan} but is less explored.
Disparity of Calibration~(DC), Disparity of Positive Calibration (DPC), and Disparity of Negative Calibration (DNC) are proposed in this work as measures of uncalibratedness.
These notions cover all three aspects of group fairness, namely independence, separation, and calibration.
See \secref{sec:group_fairness} for a detailed discussion.

Satisfying multiple fairness notions simultaneously is a natural desideratum.
Fairness is a complex notion and any concrete definition can only capture a limited view.
Although the impossibility results~\cite{chouldechova2017fair,kleinberg2018inherent,barocas-hardt-narayanan} state that achieving different group fairness notions exactly is impossible when the sensitive and target attributes are coupled,
from a practical perspective, we are still very interested in achieving them approximately.
% Sometimes it is even desirable to achieve approximate instead of exact fairness in order to maintain certain level of utility.
% It is often that we consider decisions unfair if any fairness notion is severely violated.
% 
Our considered problem encompasses and generalizes existing fair representation literature~\cite{edwards_arxiv_2015,louizos_arxiv_2015,beutel_arxiv_2017,feldman_SIGKDD_2015,mcnamara_arxiv_2017,xie_NeurIPS_2017,madras_ICML_2018,moyer_NeurIPS_2018,song_AISTATS_2019,creager_ICML_2019,zhao_ICLR_2020_conditional,oneto_2020_MMD},
% Existing works have shown that SP can be upper bounded by the sensitive attribute's predictability~\cite{feldman_SIGKDD_2015,mcnamara_arxiv_2017}, and DOpp, DR, and DOdds can be upper bounded by different adversarial losses~\cite{madras_ICML_2018} that require knowing the target labels.
which study different fairness notions in isolation.
% The upper bounds for Dopp, DR, and DOdds additionally require knowing the target labels, which we assume unavailable.
They also have not considered whether fair representation can achieve calibration, \ie~DPC, DNC, and DC.

A common misunderstanding is, using a fair representation that does not contain any sensitive information, the predictions will be fair for \emph{all} fairness definitions, rendering our research question trivial.
The fact is, using a perfectly fair representation, the downstream predictions can still be maximally unfair w.r.t. DOpp, DR, and DOdds.
The intuition is, although fair representation ensures the sensitive attribute is unpredictable \emph{at the population level}, it might still be exploited \emph{in the subgroups defined by the target attribute} to produce unfair outcomes.

% \begin{example} \label{example:fair_representation_unfair}
% Consider financial default prediction.
% Assume the sensitive attribute is gender ($S=1/0$ means male / female) and the target attribute is default ($Y=1/0$ means will / will not default).
% Assume the data is perfectly balanced, \ie~$P(S=1)=P(Y=1\mid S=0)=P(Y=1\mid S=1)=0.5$.
% Consider a one-dimension fair representation $Z\in \R$ distributed as follows:
% for $S=Y$, $Z\sim \mathcal{U}([0,1])$; for $S\neq Y$, $Z\sim \mathcal{U}([1,2])$, where $\mathcal{U}(\cdot)$ denotes the uniform distribution.
% Finally, consider prediction $\hat{Y}=\mathds{1}_{Z\geq 1}$.
% Observe that $Z$ is a perfectly fair representation in the sense that the mutual information $MI(Z,S)=0$.
% But the prediction has $DOpp(\hat{Y},Y)=DR(\hat{Y},Y)=DOdds(\hat{Y},Y)=1$.
% \end{example}

The contribution of this work is a better understanding of approximate fair representation.
We define representation fairness as the unpredictability of sensitive attribute and leverage another discriminativeness assumption that says the target attribute the user tries to predict (which equivalently defines a prediction task) is predictable from the representation.
We show that, although fair representation might not guarantee fairness for all prediction tasks, it does guarantee (approximately) seven group fairness notions for an important subset of tasks---\emph{the tasks for which the representation is discriminative}.
Thus, discriminativeness establishes a dichotomy between the prediction tasks whose fairness can be guaranteed, and those that cannot.
By encouraging fair representation to summarize more information about the data, better fairness guarantees can be obtained for a larger class of prediction tasks.

In light of the fairness impossibility results~\cite{chouldechova2017fair,kleinberg2018inherent,barocas-hardt-narayanan},
fair and discriminative representation hits the sweet spot that \emph{approximately} satisfies seven group fairness notions.
It provides a concrete way to navigate through the inherent incompatibility and answers affirmatively to the question raised in~\cite{barocas-hardt-narayanan}, which asks whether meaningful trade-off can be achieved when different group fairness notions become incompatible.

Our key contributions are summarized as follows.
\begin{itemize}%[noitemsep,leftmargin=*]
    \item 
    We consider seven group fairness notions that cover the concepts of independence, separation, and calibration, shown in~\tabref{tab:fairness_definitions}.
    We prove a sharp characterization that (1) perfectly fair and perfectly discriminative representation guarantees them  exactly and 
    (2) approximately fair and approximately discriminative representation guarantees them approximately.
    All seven group fairness notions are \emph{linearly} controlled by fairness and discriminativeness of the representation. 
    % Computing the quantitative fairness guarantees reduces to solving linear programs.
    Our theoretical results are summarized in \tabref{tab:fairness_bounds}.
    \item 
    When the target and sensitive attributes are coupled, we show that a trade-off arises between representation fairness and discriminativeness, and thus a perfectly fair and perfectly discriminative representation is impossible.
    Nonetheless, an approximately fair and approximately discriminative representation can still be used to approximately satisfy seven group fairness notions.
    \item 
    Motivated by our theoretical findings, when data is subject to unknown downstream tasks, we propose to learn both fair and discriminative representations using pretext loss, which self-supervises the representation to summarize important semantics, and Maximum Mean Discrepancy~\cite{gretton_JMLR_2012_mmd}, which is used as a fair regularization. We cast it as a constrained optimization problem and solve it using a two-player game formulation.
    \item 
    Experiments on tabular and image datasets verify that 
    (1) using the learned representation, downstream prediction tasks \emph{that we are unaware of when learning the representation} indeed become fairer for seven group fairness notions, and (2) the fairness guarantees computed from our theoretical results are all valid.
    We also demonstrate a real-world application on learning gender-blind face representations, using which various facial attributes' predictions become fairer.\footnote{All code associated with this work can be found at \url{https://github.com/XudongOliverShen/2021-fair-representation}.}
\end{itemize}

The rest of the paper is structured as follows.
\secref{sec:related_work} reviews related work.
\secref{sec:preliminary} introduces the problem and necessary notions.
\secref{sec:perfect_fairness} and \ref{sec:approximate_fairness} elaborate our main theoretical results. 
\secref{sec:discussion} presents discussions.
\secref{sec:learning_general_fair_representation} delineates the proposed fair representation learning approach and \secref{sec:experiment} reports the results.
\secref{sec:conclusion} is the conclusion.

\section{Related Work}
\label{sec:related_work}

\noindent\textbf{Theoretical Understanding of Fair Representation~}
Existing works~\cite{feldman_SIGKDD_2015,mcnamara_arxiv_2017,madras_ICML_2018,zhao_NeurIPS_2019,oneto_2020_MMD} have shown that representations can satisfy different constraints to achieve different fairness notions \emph{individually}.
Feldman~\etal~\cite{feldman_SIGKDD_2015} shows that disparate impact can be bounded by the Bayes-optimal Balanced Error Rate (BER).
McNamara~\etal~\cite{mcnamara_arxiv_2017} and Zhao and Gordon~\cite{zhao_NeurIPS_2019} show that statistical parity can also be bounded.
Madras~\etal~\cite{madras_ICML_2018} proposes adversarial bounds on the Disparity of Opportunity, Regret, and Odds, but requires using the target labels.
Existing works study different fairness notions in isolation and assume knowing the downstream prediction task.

\vspace{1ex}
\noindent\textbf{Learning Fair Representation~}
Existing fair representation learning methods can be categorized into Generative Adversarial Network (GAN) based methods and Variational Autoencoder (VAE) based methods.
GAN-based methods use adversarial learning: an encoder learns discriminative representations and a critic tries to infer the sensitive attribute from the representation.
The critic can be configured to directly predict the sensitive attribute~\cite{beutel_arxiv_2017,edwards_arxiv_2015,madras_ICML_2018,xie_NeurIPS_2017}.
Kim and Cho~\cite{kim_AAAI_2020} designs an information-theoretic critic. Zhao~\etal~\cite{zhao_ICLR_2020_conditional} posits two critics, one for each target group.
VAE-based methods enrich the objective of VAE~\cite{kingma_arxiv_2013_VAE} with regularizers.
Moyer~\etal~\cite{moyer_NeurIPS_2018} and Song~\etal~\cite{song_AISTATS_2019} additionally minimize variational upper bound(s) of mutual information.
Louizos~\etal~\cite{louizos_arxiv_2015} adds a fastMMD~\cite{zhao_NC_2015_fastmmd} regularizer.
Creager~\etal~\cite{creager_ICML_2019} additionally enforces disentanglement to learn flexible fair representations that are fair w.r.t. multiple sensitive attributes.
% 
% More related to our work, Oneto~\etal~\cite{oneto_2020_MMD} levarages task similarity to learn fair representation that generalizes to unseen tasks, focusing on SP.
% Madras~\etal~\cite{madras_ICML_2018} shows SP can be achieved for unknown tasks using an adversarial loss but considers DOpp, DR, DOdds only for one known task.
Existing works assume knowing what prediction task the representation will be used for and thus use the target labels when learning fair representation.
% To the best of our knowledge, we are the first to study fair representation for unknown prediction tasks, without access to target labels, and with the objective to achieve multiple group fairness notions simultaneously.

\vspace{1ex}
\noindent\textbf{Fairness Impossibilities and Trade-offs~}
Impossibility results~\cite{chouldechova2017fair,kleinberg2018inherent,barocas-hardt-narayanan} have shown that achieving multiple fairness notions exactly is impossible when the target and sensitive attributes are coupled.
Nonetheless, in practice we are still very interested in approximately satisfying multiple fairness notions.
% Approximate fairness can also be desirable in order for important business goals to proceed.
This work complements the impossibility results and studies how we can navigate through this incompatibility.
Trade-offs have also been shown between fairness and utility~\cite{menon_FAcct_2018,zhao_NeurIPS_2019,dutta2020there}, but for specific fairness notions and/or in the general prediction setting.
We provide a characterization of the trade-off specific to fair representation and encompasses all seven group fairness notions.
A concurrent work~\cite{lechner2021impossibility} studies impossibility results for fair representation.
They show that using fair representation, SP and DOdds can still be large if the data distribution can shift arbitrarily.
Our work complements their results and show what can be achieved without distribution shift.
%!TEX root = ../main.tex
\section{Preliminary}
\label{sec:preliminary}

\subsection{Problem Formulation}
\noindent\textbf{Notation}
Let $X \in \R^{|X|}$, $S \in \{0,1\}$, $Y \in \{0,1\}$ denote the raw data, the sensitive attribute, and the target attribute, respectively.
We use $f:\R^{|X|} \rightarrow \R^{|Z|}$ to denote the (data owner's) encoding function that maps the raw data point $X$ to its representation $Z$.
A dataset user is a randomized predictor $h:\R^{|Z|} \rightarrow [0,1]$, with $h(Z)$ denotes the probability of positive prediction.
The user's deterministic prediction $\hat{Y}$ is sampled from a Bernoulli distribution $Bern(h(Z))$.

For ease of exposition, we define some population base rates as
\begin{equation}\label{eq:base_rates}
\begin{cases}
    r \doteq P(S=1) ,\\
    a \doteq P(Y=1 \mid S=0) ,\\
    b \doteq P(Y=1 \mid S=1) ,
\end{cases}
\end{equation}
where $r$ is the relative sensitive group size, and $a$ and $b$ are the positive rates among different sensitive groups.
% These rates together indicate how the total population divides into subpopulations.
Without loss of generality, we assume $r,a,b \in (0,1)$.

% We assume $X$ follows an underlying distribution $\Dist{D}$.
We use curly letters to denote distributions, for example $\Dist{Z}$ is the distribution of $Z$.
We additionally denote the distributions of $Z$ in different subpopulations as 
\begin{equation}\label{eq:conditional_Z}
\begin{cases}
    \Dist{Z}_s(Z) \doteq P(Z \mid S\!=\!s) ,\\
    \Dist{Z}^y(Z) \doteq P(Z \mid Y\!=\!y) ,\\
    \Dist{Z}_s^y(Z) \doteq P(Z \mid S\!=\!s, Y\!=\!y) .
\end{cases}
\end{equation}

\begin{table*}[!t]
    \centering
    \caption{\label{tab:fairness_definitions}
        Seven Group Fairness Notions Are Considered. They All Take Values In $[0,1]$. 0 Means Absence of Discrimination and 1 Means Maximum Discrimination.}
    \adjustbox{width=1.0\textwidth}{
    \begin{tabular}{p{2cm}|p{2cm}|p{5.2cm}|p{12cm}}
        \toprule
        Category                    & Interpretation                            & Fairness Measure                                                 & Definition \\ \midrule \midrule
        Independence                & $\hat{Y} \perp S$                         & Statistical Parity (SP)~\cite{calders_DMKD_2010}                   & $\text{SP}(\hat{Y},S) \hspace{7.2ex} =\big|P(\hat{Y}=1|S=1)-P(\hat{Y}=1|S=0)\big|$ \\ \midrule
        \multirow{4}{*}{Separation} & \multirow{4}{*}{$\hat{Y} \perp S \mid Y$} & Disparity of Opportunity (DOpp)~\cite{hardt_NeurIPS_2016_equality} & $\text{DOpp}(\hat{Y},Y,S) \hspace{2ex} = \big|P(\hat{Y}=1 \mid Y=1, S=1) - P(\hat{Y}=1 \mid Y=1, S=0)\big| $ \\         \cmidrule{3-4}
                                    &                                           & Disparity of Regret (DR)~\cite{hardt_NeurIPS_2016_equality}        & $\text{DR}(\hat{Y},Y,S) \hspace{4.3ex} = \big|P(\hat{Y}=1 \mid Y=0, S=1) - P(\hat{Y}=1 \mid Y=0, S=0)\big| $ \\        \cmidrule{3-4}
                                    &                                           & Disparity of Odds (DOdds)~\cite{hardt_NeurIPS_2016_equality}      & $\text{DOdds}(\hat{Y},Y,S) \hspace{.9ex} = \frac{1}{2} \times (DOpp(\hat{Y},Y,S) + DR(\hat{Y},Y,S) ) $ \\        \midrule
        \multirow{4.5}{*}{Calibration}                 
        & \multirow{4.5}{*}{$Y \perp S \mid \hat{Y}$}              
        % & Maximum Uncalibration (MUC)~\cite{barocas-hardt-narayanan}
        % & $MUC(h, Y, S) \hspace{2.55ex} = \max_{t\in [0,1]}\big| P(Y=1\mid h(Z)=t, S=1) - P(Y=1\mid h(Z)=t, S=0) \big|$ \\
        % \cmidrule{3-4}
        & Disparity of Positive Calibration (DPC)
        & $\text{DPC}(h, Y, S) \hspace{3.05ex} = \frac{1}{2}\sum_{t\in[0,1]} \big| P(Y=1, h(Z)=t\mid S=1) -  P(Y=1, h(Z)=t\mid S=0) \big| $ \\
        \cmidrule{3-4}
        & & Disparity of Negative Calibration (DNC)
        & $\text{DNC}(h, Y, S) \hspace{2.75ex} = \frac{1}{2}\sum_{t\in[0,1]} \big| P(Y=0, h(Z)=t\mid S=1) -  P(Y=0, h(Z)=t\mid S=0) \big| $ \\
        \cmidrule{3-4}
        & & Disparity of Calibration (DC)
        & $\text{DC}(h, Y, S) \hspace{4.75ex} = \frac{1}{2}\times (DPC(h, Y, S) + DNC(h, Y, S))$\\
        \bottomrule
    \end{tabular}}
    \vspace{-1.5ex}
\end{table*}

\vspace{1ex}
\noindent\textbf{Problem Statement}
We assume the data owner has a dataset $(X,S)$.
They computes a fair representation $Z=f(X)$ and shares only $Z$ to the downstream users.
The user is interested in performing a specific prediction task, which is equivalently defined by the target label $Y$, and produces prediction $\hat{Y}\sim Bern(h(Z))$.
The problem is, the data owner wishes to learn an encoding function $f$ so that predictions based on $Z$ is guaranteed to be fair for any $Y$ and $\hat{Y}$.

Two comments are in order.
First, the data owner can use $S$ to learn the encoding function $f$, but does not directly input $S$ to $f$ so $S$ is not needed at test time.
Second, the user does not necessarily need to have $Y$ linked to every $Z$.
He only need to link part of the acquired dataset with target label $Y$, using which to develop $h$.
For example, for digital footprint data and financial default prediction, personal traits such as social security number (SSN), name, and email can be used to link the record to the loan loss databases.

For clarity, we present the theoretical results in a binary setting.
We show in the supplementary material that they naturally extend to categorical variables.

\subsection{Representation Fairness and Discriminativeness}
\label{sec:PC_PD_def}
We define fairness and discriminativeness of the representation using Total Variation distance (TVD)~\cite{levin_2017_markov_chains}.

\begin{defn}[Total Variation Distance] \label{df:TVD}
Let $\Dist{P},\Dist{Q} \in D(\Omega)$ be two distributions and $D(\Omega)$ denotes the set of all distribution over sample space $\Omega$. The Total Variation distance is
$$
    d_{TV} (\Dist{P},\Dist{Q}) = \frac{1}{2} \sum_{e \in \Omega} \Big| \Dist{P}(e) - \Dist{Q}(e) \Big|.
$$
\end{defn}

TVD takes values in $[0,1]$.
A small TVD means two distributions are close and thus the information about which group a sample comes from is anonymized (from a population sense).
This motivates the following definition of representation fairness.

\begin{defn}[Representation Fairness]
The representation $Z$ is said to be $\alpha$-fair w.r.t. the sensitive attribute $S$ if
$$
d_{TV}(\Dist{Z}_0,\Dist{Z}_1) \leq \alpha.
$$
We call $\alpha$ the fairness coefficient.
\end{defn}

In contrast, a large TVD means two distributions are distinguishable and thus, given a sample, we can predict which group it comes from with high accuracy.
This motivates the following definition of representation discriminativeness.
\begin{defn}[Representation Discriminativeness]
    The representation $Z$ is said to be $\beta$-discriminative w.r.t. the target attribute $Y$ if
    $$
    1 - d_{TV}(\Dist{Z}^0,\Dist{Z}^1) \leq \beta.
    $$
We call $\beta$ the discriminativeness coefficient.
\end{defn}

Both $\alpha$ and $\beta$ take values in $[0,1]$.
Smaller $\alpha$ or $\beta$ indicates the representation is more fair or more discriminative.
% Similarly, we say a representation is perfectly (approximately) PD if the PD coefficient is zero (small).

Our definitions of representation fairness and discriminativeness are natural and arguably the correct formalizations.
Since TVD is equivalent to the Bayes-optimal balanced accuracy, an alternative intepretation is that fairness requires the sensitive attribute to be unpredictable and discriminativeness requires the target attribute to be predictable.

\begin{defn}[Balanced Accuracy]
    Given the prediction $\hat{Y}$ and the target label $Y$, the balanced accuracy is
    {\small
    $$
    BA(\hat{Y},Y) = 0.5 \times \left( P(\hat{Y}=0 \mid Y=0) + P(\hat{Y}=1 \mid Y=1) \right).
    $$
    }
    For a predictor $h$, we also write 
    {\small$$BA(h,Y) = \mathbb{E}_{\hat{Y}\sim Bern(h(Z))} BA(\hat{Y},Y).$$}
\end{defn}

\begin{prop} \label{prop:fair_eq_BA}
A representation $Z$ is $\alpha$-fair is equivalent to
$$
\mathbb{E}_{(Z,S)\sim \mathcal{Z}} BA(h^*(Z),S) \leq \frac{1}{2} + \frac{1}{2}\alpha,
$$
where $\mathcal{Z}$ is enriched to denote the joint ditribution of $Z$ and $S$, and $h^*(Z)= \arg\max_{h:\R^{|Z|}\rightarrow \{0,1\}} BA(h(Z),S)$ is the Bayes-optimal predictor.
\end{prop}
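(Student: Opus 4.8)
The plan is to derive a closed form for the Bayes-optimal balanced accuracy and observe that it is an affine function of the total variation distance, so that the two conditions in the statement coincide. Concretely, I aim to prove the identity $\mathbb{E}_{(Z,S)\sim\mathcal{Z}} BA(h^*(Z),S) = \tfrac12 + \tfrac12\, d_{TV}(\Dist{Z}_0,\Dist{Z}_1)$. Once this is in hand, the ``if and only if'' is immediate: the inequality $d_{TV}(\Dist{Z}_0,\Dist{Z}_1)\le\alpha$ becomes $\tfrac12+\tfrac12 d_{TV}(\Dist{Z}_0,\Dist{Z}_1)\le\tfrac12+\tfrac12\alpha$ after multiplying by $\tfrac12$ and adding $\tfrac12$, so no separate argument is needed for each direction.

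First I would expand the balanced accuracy of an arbitrary deterministic predictor $g:\R^{|Z|}\to\{0,1\}$ used to guess $S$ from $Z$. Using $P(g(Z)=0\mid S=0)=\sum_z (1-g(z))\,\Dist{Z}_0(z)$ and $P(g(Z)=1\mid S=1)=\sum_z g(z)\,\Dist{Z}_1(z)$, the definition of $BA$ gives $BA(g,S)=\tfrac12\sum_z\big[(1-g(z))\,\Dist{Z}_0(z)+g(z)\,\Dist{Z}_1(z)\big]$. Since the contribution of each point $z$ depends only on the single value $g(z)\in\{0,1\}$, the maximization over $g$ decouples pointwise, and the optimum retains the larger of the two densities at each $z$; that is, $h^*(z)=\mathbf{1}[\Dist{Z}_1(z)\ge\Dist{Z}_0(z)]$ and $\mathbb{E}_{(Z,S)\sim\mathcal{Z}} BA(h^*(Z),S)=\tfrac12\sum_z\max\!\big(\Dist{Z}_0(z),\Dist{Z}_1(z)\big)$.

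Next I would substitute the elementary identity $\max(p,q)=\tfrac12(p+q+|p-q|)$, use that $\Dist{Z}_0$ and $\Dist{Z}_1$ each sum to $1$, and recognize $\tfrac12\sum_z|\Dist{Z}_0(z)-\Dist{Z}_1(z)|$ as $d_{TV}(\Dist{Z}_0,\Dist{Z}_1)$ by \defref{df:TVD}. This collapses the sum to $\tfrac12+\tfrac12 d_{TV}(\Dist{Z}_0,\Dist{Z}_1)$, completing the identity and hence the proposition.

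I expect the only real subtlety, rather than the algebra, to be the justification that the greedy pointwise rule $h^*$ is genuinely balanced-accuracy-optimal; here I would be careful to note that because the two conditional terms of $BA$ are weighted equally (each by $\tfrac12$) and not by the group priors $r$ and $1-r$, the decision at each $z$ compares $\Dist{Z}_0(z)$ against $\Dist{Z}_1(z)$ directly rather than thresholding the posterior $P(S\mid Z)$. I would also reconcile the outer $\mathbb{E}_{(Z,S)\sim\mathcal{Z}}$ notation with the conditional-probability form of $BA$ used above, and remark that although \defref{df:TVD} is stated for discrete $\Omega$, the same computation carries over verbatim with densities and integrals when $Z$ is continuous.
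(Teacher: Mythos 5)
Your proposal is correct and follows essentially the same route as the paper: the paper reduces the proposition to the identity $d_{TV}(\Dist{P},\Dist{Q}) = 2\,\mathbb{E}\,BA(h^*,\cdot) - 1$ (stated as a standalone theorem in the supplementary material but left without a detailed derivation), and your argument — pointwise maximization of the decoupled balanced-accuracy sum followed by $\max(p,q)=\tfrac12(p+q+|p-q|)$ — is precisely a proof of that identity, after which the equivalence is the same trivial rearrangement. Your added care about the equal $\tfrac12$ weighting (so the optimal rule compares $\Dist{Z}_0(z)$ to $\Dist{Z}_1(z)$ rather than thresholding the posterior) is a worthwhile detail the paper glosses over.
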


\begin{prop} \label{prop:disc_eq_BA}
A representation $Z$ is $\beta$-discriminative is equivalent to,
$$
\mathbb{E}_{(Z,Y)\sim \mathcal{Z}} BA(h^*(Z),Y) \geq 1 - \frac{1}{2}\beta,
$$
where $\mathcal{Z}$ is enriched to denote the joint ditribution of $Z$ and $Y$, and $h^*(Z)= \arg\max_{h:\R^{|Z|}\rightarrow \{0,1\}} BA(h(Z),Y)$ is the Bayes-optimal predictor.
\end{prop}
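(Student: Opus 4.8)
The plan is to reduce both directions of the equivalence to a single algebraic identity, namely that the Bayes-optimal balanced accuracy for predicting $Y$ from $Z$ equals $\tfrac{1}{2}+\tfrac{1}{2}d_{TV}(\Dist{Z}^0,\Dist{Z}^1)$. Once this identity is established, the claimed ``iff'' follows by elementary rearrangement. The whole argument mirrors the proof of \propref{prop:fair_eq_BA}, with the target attribute $Y$ and its conditional distributions $\Dist{Z}^0,\Dist{Z}^1$ playing the roles that $S$ and $\Dist{Z}_0,\Dist{Z}_1$ play there.

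First I would rewrite $BA(h,Y)$ in a variational form. Since $\hat{Y}\sim Bern(h(Z))$ means $P(\hat{Y}=1\mid Z)=h(Z)$, conditioning on each value of $Y$ and taking expectations over the corresponding conditional distribution of $Z$ gives $P(\hat{Y}=1\mid Y=1)=\mathbb{E}_{Z\sim\Dist{Z}^1}[h(Z)]$ and $P(\hat{Y}=0\mid Y=0)=1-\mathbb{E}_{Z\sim\Dist{Z}^0}[h(Z)]$. Substituting into the definition of balanced accuracy yields
$$
BA(h,Y)=\frac{1}{2}+\frac{1}{2}\Big(\mathbb{E}_{Z\sim\Dist{Z}^1}[h(Z)]-\mathbb{E}_{Z\sim\Dist{Z}^0}[h(Z)]\Big).
$$

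Second, I would maximize over predictors. The bracketed term is a linear functional of $h$ subject to the pointwise constraint $0\le h\le 1$, so it is maximized by setting $h(z)$ to its largest admissible value wherever $\Dist{Z}^1(z)>\Dist{Z}^0(z)$ and to its smallest admissible value otherwise. This shows the supremum is attained by the deterministic likelihood-ratio rule $h^*(z)=\mathds{1}[\,P(Z\!=\!z\mid Y\!=\!1)>P(Z\!=\!z\mid Y\!=\!0)\,]$, so the restriction of $h^*$ to $\{0,1\}$-valued predictors is without loss, and the optimal value equals $\tfrac{1}{2}\sum_z|\Dist{Z}^1(z)-\Dist{Z}^0(z)|=d_{TV}(\Dist{Z}^0,\Dist{Z}^1)$. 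Hence $\mathbb{E}_{(Z,Y)\sim\mathcal{Z}}BA(h^*(Z),Y)=\tfrac{1}{2}+\tfrac{1}{2}d_{TV}(\Dist{Z}^0,\Dist{Z}^1)$, the claimed identity.

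Finally I would chain equivalences using this identity: $BA(h^*,Y)\ge 1-\tfrac{1}{2}\beta$ is equivalent to $\tfrac{1}{2}+\tfrac{1}{2}d_{TV}(\Dist{Z}^0,\Dist{Z}^1)\ge 1-\tfrac{1}{2}\beta$, hence to $d_{TV}(\Dist{Z}^0,\Dist{Z}^1)\ge 1-\beta$, hence to $1-d_{TV}(\Dist{Z}^0,\Dist{Z}^1)\le\beta$, which is precisely the definition of $\beta$-discriminativeness. The one step I expect to be the main obstacle is the second: justifying that allowing randomized predictors $h\in[0,1]$ cannot outperform the deterministic likelihood-ratio rule, and that the resulting optimum coincides exactly with the total variation distance. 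This is the standard variational characterization of TVD, and I would verify it by the pointwise optimization argument sketched above rather than invoking it as a black box.
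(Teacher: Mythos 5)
Your proof is correct and follows essentially the same route as the paper: the paper reduces Propositions 1 and 2 to the identity that the Bayes-optimal balanced accuracy equals $\tfrac{1}{2}+\tfrac{1}{2}d_{TV}$ (stated in its supplementary as a theorem without proof), and then rearranges. You do the same, and in fact go further by actually deriving that identity via the variational characterization of TVD, which is the right way to justify the step the paper takes as given.
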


Throughout the paper, we will measure prediction performance using balanced accuracy because it takes a balanced view between different sensitive groups.

\subsection{Group Fairness} \label{sec:group_fairness}

We adapt the taxonomy proposed in~\cite{liu2019implicit,barocas-hardt-narayanan} to divide group fairness into three categories, namely \textit{independence}, \textit{separation}, and \textit{calibration}.
Briefly, independence requires the prediction to be independent of the sensitive attribute ($\hat{Y}\perp S$).
Separation allows such dependence as long as it can be justified by the target ($\hat{Y}\perp S\mid Y$).
And calibration indicates that after knowing the prediction, the sensitive attribute should not give more information about the target ($Y\perp S \mid \hat{Y}$).

As listed in \tabref{tab:fairness_definitions}, we consider seven notions that cover all three categories.
SP, DOpp, DR, and DOdds are commonly used fairness notions~\cite{calders_DMKD_2010,hardt_NeurIPS_2016_equality,narayanan2018translation}.
They measure different statistical disparities between sensitive groups.
DPC, DNC, and DC are further concerned with the distribution of the prediction score $h(Z)$ and interpret uncalibratedness as unfairness.
These measures all take values in $[0,1]$.
0 means perfect fairness and 1 means maximally unfair.
We refer readers to~\cite{barocas-hardt-narayanan,narayanan2018translation} for a review of different fairness notions.

To explain the motivation for DPC, DNC, and DC, we first consider maximum uncalibration (MUC)~\cite{barocas-hardt-narayanan}, which is defined below.

\begin{defn}[Maximum Uncalibration]
{\small
\begin{align*}
\text{MUC}(h,Y,S) = \max_{t\in [0,1]}\big|& P(Y=1\mid h(Z)=t, S=1)\\
& - P(Y=1\mid h(Z)=t, S=0) \big|
\end{align*}
}
\end{defn}

Achieving small MUC is undoubtedly desirable.
But we show that MUC is too strong as a fairness notion that cannot be guaranteed even with a perfectly fair and perfectly discriminative representation.
The reason is that MUC measures uncalibratedness at a score $t$ but without considering the size of the population receiving the score $t$.
Thus, the score only need to be uncalibrated at one point with infinitesimally small population to have $MUC(h,Y,S)=1$.

\begin{thm} \label{thm:negative_MUC}
    Even for a balanced dataset with $a=b=0.5$ and arbitrary $r\in (0,1)$, there exists a representation $Z$ that is $0$-fair and $0$-discriminative, and an accurate predictor $h$ with $BA(h,Y)=1$, such that $\text{MUC}(h,Y,S)=1$.
\end{thm}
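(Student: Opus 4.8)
The plan is to produce an explicit witness $(Z,h)$ together with a joint law of $(Z,S,Y)$, and the first step is to unpack the hypotheses. Since $a=b=0.5$ we have $P(Y=1\mid S=0)=P(Y=1\mid S=1)=0.5$, so $Y\perp S$ and $P(Y=1)=0.5$ for every $r\in(0,1)$. Being $0$-fair means $d_{TV}(\mathcal{Z}_0,\mathcal{Z}_1)=0$, i.e.\ $\mathcal{Z}_0=\mathcal{Z}_1$, so the law of $Z$ is identical across the two sensitive groups; being $0$-discriminative means $d_{TV}(\mathcal{Z}^0,\mathcal{Z}^1)=1$, i.e.\ $\mathcal{Z}^0$ and $\mathcal{Z}^1$ are mutually singular, so $Y$ is almost surely a deterministic function of $Z$. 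I would realize all of this concretely by letting $Z=(Y,N)$ with $N$ an auxiliary noise variable drawn independently of $(S,Y)$: then $Z$ pins down $Y$ (giving $\beta=0$), while the common independence of $Y$ and $N$ from $S$ yields $\mathcal{Z}_0=\mathcal{Z}_1$ (giving $\alpha=0$).

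Second, I would take the perfect predictor $h(Z)=Y$. From the identity $BA(h,Y)=\tfrac12\big(1-\mathbb{E}[h(Z)\mid Y=0]+\mathbb{E}[h(Z)\mid Y=1]\big)$ this attains $BA(h,Y)=1$; conversely any $h$ with $BA(h,Y)=1$ must satisfy $\mathbb{E}[h(Z)\mid Y=1]=1$ and $\mathbb{E}[h(Z)\mid Y=0]=0$, hence $h(Z)=Y$ almost surely, so the score is $\{0,1\}$-valued on the bulk and is perfectly calibrated there. This observation is exactly what rules out a naive fix: any score $t^{*}\in(0,1)$ carried by a set of \emph{positive} probability would force $\mathbb{E}[h(Z)\mid Y=1]<1$ and destroy $BA=1$, so the desired miscalibration cannot be placed on a positively weighted population.

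The crux, and the third step, is to exploit that $\mathrm{MUC}$ maximizes over all $t\in[0,1]$ \emph{without weighting by the mass} of $\{h(Z)=t\}$. I would single out a value $t^{*}\in(0,1)$ that the bulk never attains and attach to it an infinitesimal population: a null set of $Z$-values carrying label $Y=1$ under $S=1$ and label $Y=0$ under $S=0$, on which $h$ is declared to equal $t^{*}$. Because this set is null it perturbs neither $\mathcal{Z}_s$ (so $Z$ stays $0$-fair), nor the mutual singularity of $\mathcal{Z}^0,\mathcal{Z}^1$ (so $Z$ stays $0$-discriminative), nor $\mathbb{E}[h\mid Y=y]$ (so $BA$ stays $1$); yet it forces $P(Y=1\mid h(Z)=t^{*},S=1)=1$ and $P(Y=1\mid h(Z)=t^{*},S=0)=0$, whence $\mathrm{MUC}(h,Y,S)\ge|1-0|=1$ and therefore $=1$.

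I expect the main obstacle to be making this last step rigorous rather than merely intuitive. Since $0$-fairness forces $Z\perp S$, every conditional $P(Y=1\mid h(Z)=t,S=s)$ at a positively weighted score $t$ is automatically equal across the two groups, so the entire miscalibration must live on the null event $\{h(Z)=t^{*}\}$, where the conditional probability is only determined up to null sets. I would handle this by exhibiting a specific version of the regular conditional distribution that realizes the values $1$ and $0$ at $t^{*}$ (equivalently, viewing it as the limit of constructions that place mass $\varepsilon$ at $t^{*}$, each with $\mathrm{MUC}=1$, as $\varepsilon\to 0$), and I would state explicitly the convention under which the unweighted $\max$ in $\mathrm{MUC}$ is evaluated at such null scores. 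This measure-zero phenomenon is also the conceptual payload of the theorem: $\mathrm{MUC}$ is too stringent to be guaranteed precisely because a vanishing subpopulation can saturate it.
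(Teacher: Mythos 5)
Your proposal is correct and takes essentially the same approach as the paper: the paper realizes your abstract ``null set carrying opposite labels under the two groups'' concretely as the single boundary point $Z=\tfrac12$ shared between adjacent uniform supports (included in $\mathcal{Z}_0^0$ and $\mathcal{Z}_1^1$ but excluded from $\mathcal{Z}_0^1$ and $\mathcal{Z}_1^0$), with $h(\tfrac12)=0.5$, so the conditional probabilities at that null score are pinned down by the natural densities. Your extra observation that a positively weighted score level set cannot carry the miscalibration without destroying $BA(h,Y)=1$ correctly identifies why the construction must live on a measure-zero set, which is exactly the point the paper makes in its surrounding discussion.
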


Thus, we are motivated to study DPC, DNC, and DC, which are proposed as relaxations of MUC.
DPC and DNC measure the disparity of the event $\{Y=1,h(Z)=t\}$ and $\{Y=0,h(Z)=t\}$, instead of $\{Y=1\mid h(Z)=t\}$ as in MUC.
They additionally take into account the population size receiving the score $t$.
DC is taken to be the mean of DPC and DNC.

However, if the target label $Y$ itself is uncalibrated, DPC, DNC, and DC all bear an intrinsic lower bound because the dataset user cannot modify $Y$.

\begin{thm} \label{thm:DPC}
Consider sensitive attribute $S$ and target attribute $Y$ with base rates $a$ and $b$. For any predictor $h$,
$$
\text{DPC}(h,Y,S), \text{DNC}(h,Y,S), \text{DC}(h,Y,S) \geq \frac{1}{2}|a-b|.
$$
\end{thm}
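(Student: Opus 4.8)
The plan is to reduce each of the three bounds to a single application of the triangle inequality combined with a marginalization identity. I would treat DPC first, then DNC by an identical argument, and finally obtain the DC bound for free since DC is defined as the average of the other two.

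For DPC, the key observation is that summing the joint conditional probability over all score values $t$ marginalizes out $h(Z)$ and recovers a base rate. Concretely, $\sum_{t\in[0,1]} P(Y=1, h(Z)=t \mid S=s) = P(Y=1 \mid S=s)$, which equals $b$ when $s=1$ and $a$ when $s=0$. I would then apply the elementary inequality $\sum_t |x_t - y_t| \geq \big|\sum_t (x_t - y_t)\big|$ to the definition of DPC and collapse the two resulting sums using the identity above, giving
$$\text{DPC}(h,Y,S) \geq \frac{1}{2}\Big| \sum_t P(Y\!=\!1,h(Z)\!=\!t\mid S\!=\!1) - \sum_t P(Y\!=\!1,h(Z)\!=\!t\mid S\!=\!0)\Big| = \frac{1}{2}|b-a|.$$

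For DNC I would repeat the same two steps with the event $\{Y=0\}$ in place of $\{Y=1\}$. Marginalization now yields $\sum_t P(Y=0,h(Z)=t\mid S=s) = P(Y=0\mid S=s)$, equal to $1-b$ for $s=1$ and $1-a$ for $s=0$. Since $(1-b)-(1-a) = a-b$, the absolute difference is again $|a-b|$, so $\text{DNC}(h,Y,S) \geq \frac{1}{2}|a-b|$. Finally, because $\text{DC}(h,Y,S) = \frac{1}{2}\big(\text{DPC}(h,Y,S)+\text{DNC}(h,Y,S)\big)$ and each summand is at least $\frac{1}{2}|a-b|$, averaging preserves the lower bound and gives $\text{DC}(h,Y,S) \geq \frac{1}{2}|a-b|$.

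I do not anticipate a genuine obstacle here, since the entire argument is just the triangle inequality plus marginalization; the bound holds for \emph{every} predictor $h$ precisely because no choice of $h$ can alter the base rates $a,b$, so it reflects the intrinsic uncalibratedness of the label $Y$ itself. The one point requiring care is making the marginalization rigorous depending on whether the $\sum_{t\in[0,1]}$ in the definitions denotes a sum over the (at most countable) support of $h(Z)$ or an integral; in either reading the identity $\sum_t P(Y=y,h(Z)=t\mid S=s) = P(Y=y\mid S=s)$ is valid, so the conclusion is unaffected.
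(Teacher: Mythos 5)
Your proposal is correct and follows essentially the same route as the paper's proof: the paper also applies the triangle inequality to collapse the sum over score values $t$ and then marginalizes (writing the joint conditionals as sums of $a\mathcal{Z}_0^1(Z)$ and $b\mathcal{Z}_1^1(Z)$ over the preimage sets $\Omega(t,h)$, whose union is all of $\mathbb{R}^{|Z|}$), arriving at $\frac{1}{2}|a-b|$. Your handling of DC by averaging the two bounds is a slightly more explicit version of the paper's "the same can be shown" remark, but there is no substantive difference.
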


Thus, our objective for DPC, DNC, and DC is not to achieve 0 but to achieve $\frac{1}{2}|a-b|$. Intuitively, our prediction should not be more uncalibrated than the target label $Y$.

Lastly, it is known in literature~\cite{mcnamara_arxiv_2017} that SP can be directly upper bounded by the fairness coefficient $\alpha$.
The known upper bounds on DOpp, DR, and DOdds~\cite{madras_ICML_2018} are invalid in our setting because they require using the target labels, which we assume unavailable.

%!TEX root = ../main.tex

% \section{Negative Result on MUC} \label{sec:negative_MUC}
% Our first result is that MUC \emph{cannot be guaranteed}, even with a representation that is 0-fair and 0-discriminative.

% \begin{thm} \label{thm:negative_MUC}
%     Even for a balanced dataset with $a=b=0.5$ and arbitrary $r\in (0,1)$, there exists a representation $Z$ that is $0$-fair and $0$-discriminative, and an accurate predictor $h$ with $BA(h,Y)=1$, such that $MUC(h,Y,S)=1$.
% \end{thm}

% The intuition is that MUC measures uncalibration at a score $t$ but without considering the size of the population receiving the score $t$.
% Thus, the score only need to be uncalibrated at one point with infinitesimally small population to have $MUC(h,Y,S)=1$.
% This negative result motivates us to study DPC, DNC, and DC, which are relaxations of MUC.

\section{Guaranteeing Perfect Fairness}
\label{sec:perfect_fairness}
We first show that if the representation is 0-fair and 0-discriminative, all considered group fairness notions are guaranteed to be 0.

\begin{thm} \label{thm:exact_all}
Consider sensitive attribute $S$ and target attribute $Y$ with base rates $a$, $b$, $r$.
If the representation $Z$ is 0-fair and 0-discriminative, for any $Y$ and $\hat{Y}$ (equivalently any $h$), we have
{\small
\begin{align*}
&SP(\hat{Y},S) = DOpp(\hat{Y},Y,S)=DR(\hat{Y},Y,S)=DOdds(\hat{Y},Y,S)\\
=&DPC(h,Y,S)=DNC(h,Y,S)=DC(h,Y,S)=0.
\end{align*}}
\end{thm}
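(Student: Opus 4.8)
The plan is to reduce both hypotheses to two clean structural facts about the joint law of $(Z,S,Y,\hat Y)$, and then observe that all seven quantities collapse to zero for one and the same reason. First I would unpack the assumptions. The representation being $0$-fair means $d_{TV}(\Dist{Z}_0,\Dist{Z}_1)=0$, i.e. $\Dist{Z}_0=\Dist{Z}_1$, which is exactly $Z\perp S$. The representation being $0$-discriminative means $1-d_{TV}(\Dist{Z}^0,\Dist{Z}^1)\le 0$, hence $d_{TV}(\Dist{Z}^0,\Dist{Z}^1)=1$ since TVD is at most $1$. By \defref{df:TVD}, a total variation distance of $1$ forces $\Dist{Z}^0$ and $\Dist{Z}^1$ to have disjoint supports; consequently $P(Y=1\mid Z=z)\in\{0,1\}$ for every $z$ in the support of $\Dist{Z}$, i.e. there is a deterministic map $g$ with $Y=g(Z)$ almost surely.

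Next I would combine these two facts. Because the user's prediction is drawn as $\hat Y\sim Bern(h(Z))$, the conditional law of $\hat Y$ given $Z$ involves $Z$ only, so $\hat Y\perp S\mid Z$; and since $Y=g(Z)$, the conditional law of the pair $(Y,\hat Y)$ given $Z$ does not involve $S$ either. Together with $Z\perp S$ this yields the single statement
$$(Z,Y,\hat Y)\perp S,$$
and in particular $(h(Z),Y,\hat Y)\perp S$, since $h(Z)$ is a function of $Z$. I would also record the side consequence $P(Y=1\mid S=s)=P(g(Z)=1)=P(Y=1)$, so $a=b$; together with $r,a,b\in(0,1)$ this guarantees every conditioning event below (e.g. $\{Y=y,S=s\}$) has positive probability, so all conditional probabilities in \tabref{tab:fairness_definitions} are well-defined.

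Finally I would substitute this joint independence into each definition. For SP, $P(\hat Y=1\mid S=s)$ is constant in $s$, so the difference is $0$. For DOpp and DR, $(Y,\hat Y)\perp S$ gives $P(\hat Y=1\mid Y=y,S=s)=P(\hat Y=1\mid Y=y)$, independent of $s$, so both differences vanish, and DOdds vanishes as their average. For DPC and DNC, $(Y,h(Z))\perp S$ gives that $P(Y=y,h(Z)=t\mid S=s)$ is independent of $s$ for every $t$, so every summand is $0$, and DC vanishes as their average. Hence all seven quantities equal $0$.

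The only real obstacle is the discriminativeness step: recognizing that $1-d_{TV}=0$ means exactly disjoint supports, hence that $Y$ is a deterministic function of $Z$, and then checking that the conditioning events stay legitimate (which is where the forced equality $a=b$ and the assumption $r,a,b\in(0,1)$ enter). Once the problem is distilled to $(Z,Y,\hat Y)\perp S$, the remainder is a one-line substitution into each of the seven definitions.
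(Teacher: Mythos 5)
Your proof is correct and follows essentially the same route as the paper's (which is only a sketch): both arguments rest on the observation that $0$-fairness anonymizes $S$ at the population level while $0$-discriminativeness forces $Y$ to be a deterministic function of $Z$ (via mutual singularity of $\Dist{Z}^0$ and $\Dist{Z}^1$), so that $S$ is also anonymized within the $Y$-subpopulations and $a=b$; the seven definitions then vanish by direct substitution. Your write-up packages this as the single independence $(Z,Y,\hat Y)\perp S$ and actually supplies the derivations (including well-definedness of the conditioning events) that the paper's proof merely asserts, so it is, if anything, the more complete version of the same argument.
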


\begin{proof}[\thref{thm:exact_all}]
The representation $Z$ being 0-fair and 0-discriminative implies the following,
{\small
\begin{equation*}
\begin{cases}
a -b = 0,\\
d_{TV}(\mathcal{Z}_0,\mathcal{Z}_1) = 0, \\
d_{TV}(\mathcal{Z}_0^y,\mathcal{Z}_1^y) = 0, ~\forall y\in\{0,1\}, \\
d_{TV}(\mathcal{Z}^0,\mathcal{Z}^1) = 1, \\
d_{TV}(\mathcal{Z}_s^0,\mathcal{Z}_s^1) = 1, ~\forall s\in\{0,1\}.\\
\end{cases}
\end{equation*}}

These conditions mean that (1) different sensitive groups must have equal positive rate ($a=b$) and (2) 0-fair and 0-discriminative representation anonymizes $S$ not only at the population level but also in the subgroups defined by $Y$.
They ensure that all considered group fairness notions are 0.
\end{proof}

One thing to note is that \thref{thm:exact_all} does not have conflict with \thref{thm:DPC} and the fairness impossibility results in the literature~\cite{chouldechova2017fair,barocas-hardt-narayanan}.
The twist is that 0-fair and 0-discriminative representation is only possible when $a=b$, where both \thref{thm:DPC} and the fairness impossibility result become unrestrictive.
We discuss further in~\secref{sec:discussion}.

\thref{thm:exact_all} justifies the use of a fair and discriminative representation: when achieved exactly, it guarantees perfect fairness for all seven group fairness notions. %simultaneously.

\section{Guaranteeing Approximate Fairness}
\label{sec:approximate_fairness}
Next we proceed to show that $\alpha$-fair and $\beta$-discriminative representations guarantee all seven group fairness notions approximately.
Notably, it is known in literature~\cite{mcnamara_arxiv_2017} that $SP$ is tightly upper bounded by $\alpha$.
Thus, in the following we only consider the other six fairness notions.
Complete proofs are deferred to the supplementary material.

\subsection{Results on DOpp, DR, DOdds}
We view the problem of guaranteeing approximate DOpp, DR, or DOdds for a $\alpha$-fair and $\beta$-discriminative representation as a constrained optimization problem.
We maximize DOpp, DR, or DOdds over all distributions of $Z$ subject to representation fairness and discriminativeness constraints, and all predictor $h$, see \eqref{eq:OP}.
The solution to this problem gives a tight upper bound on the respective fairness measure for any prediction on any representation that is $\alpha$-fair and $\beta$-discriminative.

The original problem, which we shorthand as \textbf{$OP(OBJ)$} with $OBJ$ denote the maximand (DOpp, DR, or DOdds), is as follows:
{\small
\begin{subequations} \label{eq:OP}
\begin{align}
\max_{\mathcal{Z}_0^0, \mathcal{Z}_0^1, \mathcal{Z}_1^0, \mathcal{Z}_1^1,h} &
~ OBJ(h;\mathcal{Z}_0^0, \mathcal{Z}_0^1, \mathcal{Z}_1^0, \mathcal{Z}_1^1), \\
s.t.\qquad &\mathcal{Z}_0^0, \mathcal{Z}_0^1, \mathcal{Z}_1^0, \mathcal{Z}_1^1 \in D(\R^{|Z|}) \label{eq:OP.a} \\
& h:\mathbb{R}^{|Z|}\rightarrow [0,1] \\
& d_{TV}(\mathcal{Z}_0, \mathcal{Z}_1) \leq \alpha,  \label{eq:OP.b}\\
& 1- d_{TV}(\mathcal{Z}^0, \mathcal{Z}^1) \leq \beta, \label{eq:OP.c}
\end{align}
\end{subequations}
}
where $D(\mathbb{R}^{|Z|})$ denotes the set of all distributions over $\mathbb{R}^{|Z|}$, and $\alpha$ and $\beta$ are the fairness and discriminativeness coefficients.
We make it explicit that the value of $OBJ$ is determinied by both the predictor $h$ and the distributions of $Z$ in different subgroups, \ie~$\mathcal{Z}_0^0, \mathcal{Z}_0^1, \mathcal{Z}_1^0, \mathcal{Z}_1^1$.
% Take $DOpp$ as an example,
% {\small
% \begin{align*}
% DOpp(\hat{Y},Y,S)
% &= |P(\hat{Y}=1|Y=1,S=1) - P(\hat{Y}=1|Y=1,S=0)|\\
% &= \left|\sum_{Z}h(Z)(\mathcal{Z}_1^1(Z) - \mathcal{Z}_0^1(Z))\right|.
% \end{align*}}
% Note that the only independent distributions in $OP(OBJ)$ are $\mathcal{Z}_0^0$, $\mathcal{Z}_0^1$, $\mathcal{Z}_1^0$, $\mathcal{Z}_1^1$, and the predictor $h$.
Other distributions, such as $\mathcal{Z}_0$, $\mathcal{Z}_1$, $\mathcal{Z}^0$, and $\mathcal{Z}^1$, can be expressed as follows,
{\small
\begin{equation} \label{eq:all_Zs}
\begin{cases}
  \Dist{Z}_0 = a\Dist{Z}_0^1 + (1-a)\Dist{Z}_0^0 , \\
  \Dist{Z}_1 = b\Dist{Z}_1^1 + (1-b)\Dist{Z}_1^0 , \\
  \Dist{Z}^0 = \frac{(1-b)r}{(1-a)(1-r)+(1-b)r}\Dist{Z}_1^0 + \Big(1-\frac{(1-b)r}{(1-a)(1-r)+(1-b)r}\Big)\Dist{Z}_0^0 , \\
  \Dist{Z}^1 = \frac{br}{a(1-r)+br}\Dist{Z}_1^1 + \Big(1-\frac{br}{a(1-r)+br}\Big)\Dist{Z}_0^1,
\end{cases}
\end{equation}
}
where $a$, $b$, and $r$ are the population base rates defined in~\eqref{eq:base_rates}.

The way we solve \textbf{$OP(OBJ)$} is by considering another problem, which we call the reduced problem and denote as \textbf{$RP1(OBJ)$}:
{\small
\begin{subequations} \label{eq.RP}
\begin{align}
\max_{\mathcal{Z}_0^0, \mathcal{Z}_0^1, \mathcal{Z}_1^0, \mathcal{Z}_1^1} ~& OBJ(h;\mathcal{Z}_0^0, \mathcal{Z}_0^1, \mathcal{Z}_1^0, \mathcal{Z}_1^1), \label{eq.RP.a}\\
s.t.\qquad~& \mathcal{Z}_0^0, \mathcal{Z}_0^1, \mathcal{Z}_1^0, \mathcal{Z}_1^1 \in D(\{0,1\}^{3}), \label{eq.RP.b}\\
& h( i,j,k)=i,  ~\forall i,j,k\in\{0,1\}, \label{eq.RP.i} \\
& d_{TV}(\mathcal{Z}_0, \mathcal{Z}_1) \leq \alpha, \label{eq.RP.c}\\
& 1- d_{TV}(\mathcal{Z}^0, \mathcal{Z}^1) \leq \beta, \label{eq.RP.d} \\
% & (-1)^{i+1}(\mathcal{Z}_1^0(i,j,k,l,m) - \mathcal{Z}_0^0(i,j,k,l,m) ) \nonumber \\
% & \hspace{15ex} \geq 0, ~\forall i,j,k,l,m \in\{0,1\}, \label{eq.RP.e} \\
% & (-1)^{j+1}(\mathcal{Z}_1^1(i,j,k,l,m) - \mathcal{Z}_0^1(i,j,k,l,m) ) \nonumber \\
% & \hspace{15ex} \geq 0, ~\forall i,j,k,l,m \in\{0,1\}, \label{eq.RP.f} \\
& \hspace{-10ex} (-1)^{j+1}(\mathcal{Z}_1(i,j,k) - \mathcal{Z}_0(i,j,k) ) \geq 0, ~\forall i,j,k \in\{0,1\}, \label{eq.RP.g} \\
& \hspace{-10ex} (-1)^{k+1}(\mathcal{Z}^1(i,j,k) - \mathcal{Z}^0(i,j,k) ) \geq 0, ~\forall i,j,k \in\{0,1\}, \label{eq.RP.h} 
% &(\mathcal{Z}_0^0,\mathcal{Z}_0^1,\mathcal{Z}_1^0, \mathcal{Z}_1^1) \in \{(\mathcal{Z}_0^0,\mathcal{Z}_0^1,\mathcal{Z}_1^0, \mathcal{Z}_1^1):\Xi\}, \label{eq.RP.j}
\end{align}
\end{subequations}}
where we use $(i,j,k)$ to denote a 3-dimension binary representation.
With a slight abuse of notation, we use $\mathcal{Z}_1^0(i,j,k)$ as a shorthand for $\mathcal{Z}_1^0((i,j,k))=P(Z=(i,j,k)\mid S=1,Y=0)$, and similarly for other distributions.
We also write $h(i,j,k)$ as a shorthand for $h((i,j,k))$.

The reduced problem differs from the original problem in the following ways.
First, instead of considering representation $Z$ from an infinite representation space $\mathbb{R}^{|Z|}$, we only consider a finite representation space $\{0,1\}^3$ (\eqref{eq.RP.b}), \ie~3-dimension binary representations.
We denote such a representation as $(i,j,k)$.
An important consequence is that now each of $\mathcal{Z}_0^0$, $\mathcal{Z}_0^1$, $\mathcal{Z}_1^0$, $\mathcal{Z}_1^1$ can be expressed as $2^3=8$ non-negative variables that sum to 1.
Second, instead of considering randomized predictors, we consider deterministic and fixed predictors (\eqref{eq.RP.i}).
For any representation $(i,j,k)$, $h$ is fixed to predict $i$, which is either 0 or 1.
Third, we additionally impose \eqref{eq.RP.g} and (\ref{eq.RP.h}), which we call the positivity constraints.
These two constraints allow us to convert $d_{TV}(\mathcal{Q}_1,\mathcal{Q}_0)$ and $d_{TV}(\mathcal{Q}^0,\mathcal{Q}^1)$ into linear expressions.\footnote{Note that TVD is the sum of the absolute differences. The positivity constraints determine the sign of the differences at every representation $(i,j,k)$, using which we can convert TVD into a linear expression.}

The following claim tells us that $OP(OBJ)$ reduces to $RP1(OBJ)$ for $OBJ=DOpp$, $DR$, and $DOdds$.
Thus, we can solve $RP1(OBJ)$ to obtain upper bounds on $DOpp$, $DR$, and $DOdds$.

\begin{claim} \label{claim:OP_RP1}
$OP(OBJ) = RP1(OBJ)$ for $OBJ=DOpp$, $DR$, and $DOdds$.
\end{claim}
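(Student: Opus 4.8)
The plan is to prove the two inequalities $OP(OBJ)\ge RP1(OBJ)$ and $OP(OBJ)\le RP1(OBJ)$ separately. The first is immediate: any feasible point of $RP1(OBJ)$ lives on the finite set $\{0,1\}^3$, which I embed as eight distinct atoms in $\mathbb{R}^{|Z|}$, and its fixed deterministic predictor $h(i,j,k)=i$ is a special ($\{0,1\}$-valued) case of an admissible randomized predictor $h:\mathbb{R}^{|Z|}\to[0,1]$. The positivity constraints (\ref{eq.RP.g})--(\ref{eq.RP.h}) are additional restrictions that play no role in $OP$, and the fairness/discriminativeness constraints coincide; hence every $RP1$-feasible point yields an $OP$-feasible point with the same objective, giving $OP(OBJ)\ge RP1(OBJ)$.

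For the reverse inequality I start from an arbitrary feasible tuple $(\mathcal{Z}_0^0,\mathcal{Z}_0^1,\mathcal{Z}_1^0,\mathcal{Z}_1^1,h)$ of $OP(OBJ)$ and transform it into an $RP1$-feasible point with at least as large an objective. The first reduction handles the predictor. Writing $P(\hat{Y}=1\mid Y{=}y,S{=}s)=\mathbb{E}_{\mathcal{Z}_s^y}[h]$, each of $DOpp$, $DR$ is the absolute value of a linear functional of $h$, and $DOdds$ is a nonnegative combination of two such; in all three cases the maximand is a convex function of $h$ over the convex set $\{h:\mathbb{R}^{|Z|}\to[0,1]\}$. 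Its maximum is therefore attained at an extreme point, i.e.\ at a deterministic $\{0,1\}$-valued predictor, so I may replace $h$ by such a predictor without decreasing the objective and without touching the distributional constraints.

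The second reduction collapses $\mathbb{R}^{|Z|}$ to $\{0,1\}^3$ by binning. I define three binary labels on $\mathbb{R}^{|Z|}$: $i(z)=h(z)$; $j(z)=\mathds{1}[\mathcal{Z}_1(z)\ge \mathcal{Z}_0(z)]$; and $k(z)=\mathds{1}[\mathcal{Z}^1(z)\ge \mathcal{Z}^0(z)]$, with ties broken arbitrarily since they sit on sets that carry no mass difference. The common refinement of these three partitions splits $\mathbb{R}^{|Z|}$ into eight regions indexed by $(i,j,k)$, and I push each $\mathcal{Z}_s^y$ forward along this map to obtain distributions on $\{0,1\}^3$. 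Because pushforward is linear, the identities (\ref{eq:all_Zs}) relating $\mathcal{Z}_0,\mathcal{Z}_1,\mathcal{Z}^0,\mathcal{Z}^1$ to the four conditionals are preserved with the same coefficients $a,b,r$. The objective is preserved because $DOpp$, $DR$, $DOdds$ depend on the conditionals only through their mass on $\{i{=}1\}$, which is exactly what survives binning, and $h(i,j,k)=i$ holds by construction. Crucially, within any bin the sign of $\mathcal{Z}_1-\mathcal{Z}_0$ is constant (fixed by $j$) and the sign of $\mathcal{Z}^1-\mathcal{Z}^0$ is constant (fixed by $k$); hence $\int_{\mathrm{bin}}|\mathcal{Z}_0-\mathcal{Z}_1| = |\mathcal{Z}_0(\mathrm{bin})-\mathcal{Z}_1(\mathrm{bin})|$ and likewise for $\mathcal{Z}^0,\mathcal{Z}^1$. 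Summing over the eight bins turns each total-variation integral into the corresponding finite sum with no loss, so $d_{TV}(\mathcal{Z}_0,\mathcal{Z}_1)$ and $d_{TV}(\mathcal{Z}^0,\mathcal{Z}^1)$ are unchanged and the two TVD constraints remain satisfied; moreover the sign choices encoded by $j,k$ are precisely the positivity constraints (\ref{eq.RP.g})--(\ref{eq.RP.h}), so these too hold automatically. This produces an $RP1$-feasible point of equal objective, giving $OP(OBJ)\le RP1(OBJ)$.

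I expect the main obstacle to be the total-variation-preservation step: justifying that the single common refinement makes the sign of $\mathcal{Z}_1-\mathcal{Z}_0$ and of $\mathcal{Z}^1-\mathcal{Z}^0$ simultaneously constant on each bin, so that $\int|\cdot|$ collapses exactly to $|\int\cdot|$ for both TVDs at once. Care is also needed to confirm that binning by $h$ (rather than by the optimal decision region of each separate objective) does not interfere with this sign-constancy, and to verify that the derived distributions $\mathcal{Z}_0,\mathcal{Z}_1,\mathcal{Z}^0,\mathcal{Z}^1$ computed from the pushed-forward conditionals via (\ref{eq:all_Zs}) indeed agree with the pushforwards of the original $\mathcal{Z}_0,\mathcal{Z}_1,\mathcal{Z}^0,\mathcal{Z}^1$; both follow from linearity but must be stated explicitly.
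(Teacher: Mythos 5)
Your proposal is correct and follows essentially the same route as the paper: first reduce to deterministic predictors, then collapse $\mathbb{R}^{|Z|}$ to $\{0,1\}^3$ via the pushforward along $g(Z)=\bigl(h(Z),\,\mathrm{sign}(\mathcal{Z}_1(Z)-\mathcal{Z}_0(Z)),\,\mathrm{sign}(\mathcal{Z}^1(Z)-\mathcal{Z}^0(Z))\bigr)$, using sign-constancy on each cell to preserve both TVDs and the objective. The only (minor) divergence is the derandomization step, where you invoke convexity of $|L(h)|$ in $h$ and extreme points of $\{h:\mathbb{R}^{|Z|}\to[0,1]\}$ rather than the paper's explicit pointwise case analysis on $h(Z^*)$; both arguments are sound.
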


\begin{proof}(sketch)
We start with the observation that for $OBJ=DOpp$, $DR$, and $DOdds$, for any $\mathcal{Z}_0^0, \mathcal{Z}_0^1, \mathcal{Z}_1^0, \mathcal{Z}_1^1$, deterministic predictor $h$ always maximize $OBJ(h;\mathcal{Z}_0^0, \mathcal{Z}_0^1, \mathcal{Z}_1^0, \mathcal{Z}_1^1)$.
Thus, we only need to show the deterministic version of $OP(OBJ)$, which we denote as $OP'(OBJ)$, is equivalent to $RP1(OBJ)$.

To prove $OP'(OBJ)=RP1(OBJ)$, we show both $OP'(OBJ) \geq RP1(OBJ)$ and $OP'(OBJ) \leq RP1(OBJ)$. 
The former is true because $RP1(OBJ)$ is strictly more restrictive than $OP'(OBJ)$: any feasible solution to $RP1(OBJ)$ is also feasible in $OP'(OBJ)$.
The latter can be shown by a construction that finds solutions of $RP1(OBJ)$ that achieves the same value of $OBJ$ as solutions of $OP'(OBJ)$.

The intuition behind our proof is, although $RP1(OBJ)$ is strictly more restrictive than $OP(OBJ)$, it still contains all extreme values of $OBJ$, for $OBJ=DOpp$, $DR$, and $DOdds$.
\end{proof}

Furthermore, $RP1(OBJ)$ is a linear program for $OBJ=DOpp$, $DR$, and $DOdds$, where all constraints are linear and the objective function involves absolute values.
Using the standard linear reformulation that expresses absolute values as multiple linear expressions, $RP1(OBJ)$ can be efficiently solved using simplex method~\cite{dantzig1998linear} or interior-point method~\cite{andersen2000mosek}.

\begin{lemma} \label{lma:RP1_is_linear}
In $RP1(OBJ)$, each of $\mathcal{Z}_0^0$, $\mathcal{Z}_0^1$, $\mathcal{Z}_1^0$, $\mathcal{Z}_1^1$ can be expressed as $2^3=8$ non-negative variables that sum to 1.
There are in total $4\times 8 =32$ independent variables.
For $OBJ=DOpp$, $DR$, and $DOdds$, $RP1(OBJ)$ is a linear program in these variables with an objective function that involves absolute values.
\end{lemma}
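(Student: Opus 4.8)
The plan is to verify two things in turn: that the decision variables can be taken to be the $32$ point-mass probabilities, and that once the positivity constraints are in force every remaining ingredient of $RP1(OBJ)$ is either linear or an absolute value of a linear expression. First I would make the variable count explicit. Since the representation space in \eqref{eq.RP.b} is the finite set $\{0,1\}^3$ with $2^3=8$ elements, each conditional law $\mathcal{Z}_0^0,\mathcal{Z}_0^1,\mathcal{Z}_1^0,\mathcal{Z}_1^1$ is determined by its $8$ point masses $\mathcal{Z}_s^y(i,j,k)\ge 0$, which are constrained to sum to $1$. These nonnegativity and normalization requirements are linear, and the four laws contribute $4\times 8=32$ free variables in total.

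Next I would express every other distribution appearing in the program as an explicit linear form in these $32$ variables. Using \eqref{eq:all_Zs}, each of $\mathcal{Z}_0$, $\mathcal{Z}_1$, $\mathcal{Z}^0$, $\mathcal{Z}^1$ is a fixed convex combination of the four conditional laws whose coefficients depend only on the constants $a,b,r$; hence every $\mathcal{Z}_s(i,j,k)$ and $\mathcal{Z}^y(i,j,k)$ is a linear functional of the variables. This immediately handles the objective: because the predictor is pinned to $h(i,j,k)=i$ by \eqref{eq.RP.i}, the group-conditional positive rates collapse to sums of variables, e.g.\ $P(\hat{Y}=1\mid Y=1,S=s)=\sum_{j,k}\mathcal{Z}_s^1(1,j,k)$. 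Thus $DOpp$ and $DR$ are each the absolute value of a single linear expression in the $32$ variables, and $DOdds=\tfrac12(DOpp+DR)$ is a sum of two such absolute values; each is representable in a linear program by the usual epigraph reformulation.

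The key step is showing that the two total-variation constraints \eqref{eq.RP.c} and \eqref{eq.RP.d}, which are a priori nonlinear because TVD is a sum of absolute values, in fact become linear once the positivity constraints \eqref{eq.RP.g} and \eqref{eq.RP.h} are imposed. I would argue that \eqref{eq.RP.g} fixes, coordinate by coordinate, the sign of $\mathcal{Z}_1(i,j,k)-\mathcal{Z}_0(i,j,k)$ to be $(-1)^{j+1}$, so that $|\mathcal{Z}_1(i,j,k)-\mathcal{Z}_0(i,j,k)|=(-1)^{j+1}(\mathcal{Z}_1(i,j,k)-\mathcal{Z}_0(i,j,k))$ on the feasible region, and hence $d_{TV}(\mathcal{Z}_0,\mathcal{Z}_1)=\tfrac12\sum_{i,j,k}(-1)^{j+1}(\mathcal{Z}_1(i,j,k)-\mathcal{Z}_0(i,j,k))$ is a linear functional there; the identical argument with \eqref{eq.RP.h} and the sign pattern $(-1)^{k+1}$ linearizes $d_{TV}(\mathcal{Z}^0,\mathcal{Z}^1)$ in \eqref{eq.RP.d}. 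Since \eqref{eq.RP.g} and \eqref{eq.RP.h} are themselves linear inequalities, I would then collect everything---the linear simplex constraints on the variables, the linearized TVD inequalities, and the positivity inequalities---to conclude that $RP1(OBJ)$ is a linear program whose objective is an absolute value (or a sum of absolute values) of linear expressions, as claimed.

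I expect the only delicate point to be the sign-linearization of the TVD terms: one must check that the prescribed signs in \eqref{eq.RP.g} and \eqref{eq.RP.h} are exactly those that make each absolute value equal to the corresponding signed linear term on the feasible set, so that the constraints genuinely encode TVD rather than some relaxation. Why these particular sign patterns are without loss of optimality is a separate matter settled in \claimref{claim:OP_RP1}; here it suffices that, given the constraints, the TVD expressions coincide with linear functionals. Everything else is routine bookkeeping in the $32$ variables.
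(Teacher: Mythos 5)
Your proposal is correct and takes essentially the same route as the paper's own proof: it uses the positivity constraints \eqref{eq.RP.g}--(\ref{eq.RP.h}) to fix the signs of the coordinate-wise differences and thereby linearize the two TVD constraints, and it reduces the objective to an absolute value of a linear expression by exploiting the fixed predictor $h(i,j,k)=i$, exactly as in the paper's computation of $DOpp$. The additional bookkeeping you supply (the $32$-variable count, the convex-combination formulas from \eqref{eq:all_Zs}, and the deferral of the ``without loss of optimality'' question to \claimref{claim:OP_RP1}) only makes the paper's terse argument more explicit.
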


\begin{proof}[~\lemref{lma:RP1_is_linear}]
The positivity constraints \eqref{eq.RP.g} and (\ref{eq.RP.h}) are inherently linear.
Using \eqref{eq.RP.g} and (\ref{eq.RP.h}), \eqref{eq.RP.c} and (\ref{eq.RP.d}) also become linear.
The last step is to show $OBJ(h;\mathcal{Z}_0^0, \mathcal{Z}_0^1, \mathcal{Z}_1^0, \mathcal{Z}_1^1)$ is a linear function coupled with absolute values.
We show this for $OBJ=DOpp$,
{\small
\begin{align*}
DOpp(h;\mathcal{Z}_0^0, \mathcal{Z}_0^1, \mathcal{Z}_1^0, \mathcal{Z}_1^1) 
=& |\sum_{Z\in \{0,1\}^3} h(Z)(\mathcal{Z}_1^1(Z) - \mathcal{Z}_0^1(Z))| \\
=& | \sum_{j,k \in \{0,1\}} (\mathcal{Z}_1^1(1,j,k) - \mathcal{Z}_0^1(1,j,k)) |. 
\end{align*}}
The same can be shown for $DR$ and $DOdds$.
\end{proof}

\subsection{Results on DPC, PNC, DC}
For DPC, DNC, and DC, we reduce $OP(OBJ)$ to another problem, which we denote as $RP2(OBJ)$:
{\small
\begin{subequations}  \label{eq.RP2}
\begin{align}
\max_{\mathcal{Z}_0^0, \mathcal{Z}_0^1, \mathcal{Z}_1^0, \mathcal{Z}_1^1} ~& \sup_{h:\mathbb{R}^{|Z|}\rightarrow [0,1]} OBJ(h;\mathcal{Z}_0^0, \mathcal{Z}_0^1, \mathcal{Z}_1^0, \mathcal{Z}_1^1), \label{eq.RP2.a}\\
s.t.\qquad~& \mathcal{Z}_0^0, \mathcal{Z}_0^1, \mathcal{Z}_1^0, \mathcal{Z}_1^1 \in D(\{0,1\}^{4}), \label{eq.RP2.b}\\
% & h( i,j,k,l,m )=m,  ~\forall i,j,k,l,m\in\{0,1\}, \label{eq.RP2.i} \\
& d_{TV}(\mathcal{Z}_0, \mathcal{Z}_1) \leq \alpha, \label{eq.RP2.c}\\
& 1- d_{TV}(\mathcal{Z}^0, \mathcal{Z}^1) \leq \beta, \label{eq.RP2.d} \\
& (-1)^{i+1}(\mathcal{Z}_1(i,j,k,l) - \mathcal{Z}_0(i,j,k,l) ) \geq 0, \nonumber \\
& \hspace{15ex}  ~\forall i,j,k,l \in\{0,1\}, \label{eq.RP2.g} \\
& (-1)^{j+1}(\mathcal{Z}^1(i,j,k,l) - \mathcal{Z}^0(i,j,k,l) ) \geq 0, \nonumber \\
& \hspace{15ex}  ~\forall i,j,k,l \in\{0,1\}, \label{eq.RP2.h} \\
& \hspace{-13ex} (-1)^{k+1}((1-b)\mathcal{Z}_1^0(i,j,k,l) - (1-a)\mathcal{Z}_0^0(i,j,k,l) ) \geq 0,  \nonumber \\
& \hspace{15ex}  ~\forall i,j,k,l \in\{0,1\}, \label{eq.RP2.e} \\
& \hspace{-1ex} (-1)^{l+1}(b\mathcal{Z}_1^1(i,j,k,l) - a\mathcal{Z}_0^1(i,j,k,l) ) \geq 0, \nonumber \\
& \hspace{15ex} ~\forall i,j,k,l \in\{0,1\}, \label{eq.RP2.f}
\end{align}
\end{subequations}}

In $RP2(OBJ)$, we consider 4-dimension binary representations, which we denote as $(i,j,k,l)$.
Instead of maximizing $OBJ$ over $\mathcal{Z}_0^0, \mathcal{Z}_0^1, \mathcal{Z}_1^0, \mathcal{Z}_1^1,h$, we maximize the supremum of $OBJ$ over $\mathcal{Z}_0^0, \mathcal{Z}_0^1, \mathcal{Z}_1^0, \mathcal{Z}_1^1$.
This is because although DPC, DNC, DC themselves are difficult to analyze, their supremum (over all $h$) can be easily computed.
We also additionally impose \eqref{eq.RP2.e} and (\ref{eq.RP2.f}), using which $\sup_{h:\mathbb{R}^{|Z|}\rightarrow [0,1]} OBJ(h;\mathcal{Z}_0^0, \mathcal{Z}_0^1, \mathcal{Z}_1^0, \mathcal{Z}_1^1)$ can be linearized.

The supremum (over all $h$) for DPC, DNC, and DC has the following expression.

\begin{lemma}
For any $\mathcal{Z}_0^0$, $\mathcal{Z}_0^1$, $\mathcal{Z}_1^0$, $\mathcal{Z}_1^1$,
{\small
\begin{align*}
\sup_{h:\mathbb{R}^{|Z|}\rightarrow [0,1]} DPC
=& \frac{1}{2} \sum_{Z} |b \mathcal{Z}_1^1(Z) - a\mathcal{Z}_0^1(Z)|,\\
\sup_{h:\mathbb{R}^{|Z|}\rightarrow [0,1]} DNC 
=& \frac{1}{2} \sum_{Z} |(1-b) \mathcal{Z}_1^0(Z) - (1-a)\mathcal{Z}_0^0(Z)|,\\
\sup_{h:\mathbb{R}^{|Z|}\rightarrow [0,1]} DC
=& \frac{1}{2} \sum_{Z} |b \mathcal{Z}_1^1(Z) - a\mathcal{Z}_0^1(Z)| \\
& + \frac{1}{2} \sum_{Z} |(1-b) \mathcal{Z}_1^0(Z) - (1-a)\mathcal{Z}_0^0(Z)|.
\end{align*}}
Furthermore, all supremum is achieved when $h(Z)$ is different for different $Z$.
\end{lemma}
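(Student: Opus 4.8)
The plan is to turn each disparity into a sum over the (finite) representation space and then optimise over the partition that the score function $h$ induces. I would carry out the argument for DPC in full; DNC is symmetric, and DC is obtained by combining the two. First I would rewrite the summands. Fixing a score value $t$ and grouping representations by the level set $\{z:h(z)=t\}$, the law of total probability gives
\[
P(Y=1,\,h(Z)=t\mid S=s)=\sum_{z:\,h(z)=t}P(Y=1,\,Z=z\mid S=s),
\]
and since $P(Y=1,Z=z\mid S=1)=b\,\mathcal{Z}_1^1(z)$ and $P(Y=1,Z=z\mid S=0)=a\,\mathcal{Z}_0^1(z)$ by the base-rate definitions, substituting into the definition of DPC yields $\mathrm{DPC}(h)=\tfrac12\sum_t\big|\sum_{z:h(z)=t}\big(b\mathcal{Z}_1^1(z)-a\mathcal{Z}_0^1(z)\big)\big|$. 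Because the representation space here is finite (the $\{0,1\}^4$ used in $RP2$), the outer sum ranges over the finitely many distinct scores of $h$ and every quantity is well defined.

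Next I would apply the triangle inequality inside each level set, $\big|\sum_{z:h(z)=t}(b\mathcal{Z}_1^1(z)-a\mathcal{Z}_0^1(z))\big|\le\sum_{z:h(z)=t}|b\mathcal{Z}_1^1(z)-a\mathcal{Z}_0^1(z)|$. Summing over $t$ and using that the level sets partition the representation space recombines the right-hand side into $\tfrac12\sum_z|b\mathcal{Z}_1^1(z)-a\mathcal{Z}_0^1(z)|$, a quantity independent of $h$ that therefore bounds the supremum from above for \emph{every} $h$. For the matching lower bound I would exhibit a maximiser: any injective $h$, assigning a distinct value in $[0,1]$ to each representation, makes every level set a singleton, so each absolute value wraps a single term and the triangle inequality is tight. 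This shows the supremum is attained, equals the claimed quantity, and is realised exactly by predictors giving different $h(Z)$ for different $Z$, proving the DPC formula together with the final clause. DNC is identical after replacing the event $Y=1$ by $Y=0$, i.e. replacing $b\mathcal{Z}_1^1,\,a\mathcal{Z}_0^1$ by $(1-b)\mathcal{Z}_1^0,\,(1-a)\mathcal{Z}_0^0$.

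The step I expect to be the main obstacle is DC. Since $\mathrm{DC}=\tfrac12(\mathrm{DPC}+\mathrm{DNC})$, the supremum of a sum need not equal the sum of the suprema, so $\sup_h\mathrm{DC}$ does not obviously decompose into the two expressions above. The resolution is that the maximiser constructed in the previous step is \emph{shared}: a single injective $h$ collapses every level set to a singleton, hence attains the DPC bound and the DNC bound simultaneously. Because a common maximiser exists we get $\sup_h(\mathrm{DPC}+\mathrm{DNC})=\sup_h\mathrm{DPC}+\sup_h\mathrm{DNC}$, and combining this with $\mathrm{DC}=\tfrac12(\mathrm{DPC}+\mathrm{DNC})$ pins down $\sup_h\mathrm{DC}$ as the average of the two suprema computed above, completing the DC case. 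I would flag this common-maximiser argument as the one genuinely non-routine point, since it is precisely what restores the additivity of an otherwise non-additive supremum.
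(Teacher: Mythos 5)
Your proof is correct and follows essentially the same route the paper takes: expand each disparity over the level sets of $h$ using $P(Y=1,Z=z\mid S=s)$ written in terms of the base rates, apply the triangle inequality within each level set to get an $h$-independent upper bound, and observe that any injective $h$ makes every level set a singleton so the bound is attained (this is the same level-set decomposition the paper uses for the lower bound in Theorem~\ref{thm:DPC}, run in the opposite direction, and your common-maximiser observation is exactly what justifies the ``achieved when $h(Z)$ is different for different $Z$'' clause for DC). One point to flag: your (correct) argument yields $\sup_h \mathrm{DC} = \tfrac12\bigl(\sup_h \mathrm{DPC} + \sup_h \mathrm{DNC}\bigr)$, consistent with the definition $\mathrm{DC}=\tfrac12(\mathrm{DPC}+\mathrm{DNC})$ in \tabref{tab:fairness_definitions}, whereas the lemma as printed states the \emph{sum} of the two suprema rather than their average; the printed formula is off by a factor of $2$, and your value is the internally consistent one.
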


Applying the same analysis from \claimref{claim:OP_RP1} and \lemref{lma:RP1_is_linear} but with $\sup_{h:\mathbb{R}^{|Z|}\rightarrow [0,1]} OBJ(h;\mathcal{Z}_0^0, \mathcal{Z}_0^1, \mathcal{Z}_1^0, \mathcal{Z}_1^1)$ as the maximand,
$OP(OBJ)$ also reduces to $RP2(OBJ)$, which is a linear program.
Thus, we can efficiently solve $RP2(OBJ)$ to obtain tight upper bounds on DPC, DNC, and DC.

\begin{claim} \label{claim:OP_RP2}
$OP(OBJ) = RP2(OBJ)$ for $OBJ=DPC$, $DNC$, and $DC$.
\end{claim}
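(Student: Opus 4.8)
The plan is to mirror the two-step argument behind \claimref{claim:OP_RP1}, adapting it to the supremum objective and to the four sign patterns that appear for calibration. First I would eliminate the predictor $h$ from $OP(OBJ)$. Since $OP(OBJ)$ maximizes jointly over $\mathcal{Z}_0^0, \mathcal{Z}_0^1, \mathcal{Z}_1^0, \mathcal{Z}_1^1$ and $h$, I push the maximization over $h$ inside and replace $OBJ(h;\cdot)$ by $\sup_{h} OBJ(h;\cdot)$, which the preceding Lemma supplies in closed form as a sum of absolute values of $b\mathcal{Z}_1^1(Z) - a\mathcal{Z}_0^1(Z)$ (for DPC) and of $(1-b)\mathcal{Z}_1^0(Z) - (1-a)\mathcal{Z}_0^0(Z)$ (for DNC), with DC their average. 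This reduces $OP(OBJ)$ to maximizing a purely distribution-dependent objective over $\mathcal{Z}_0^0, \mathcal{Z}_0^1, \mathcal{Z}_1^0, \mathcal{Z}_1^1 \in D(\mathbb{R}^{|Z|})$ subject only to \eqref{eq:OP.b} and \eqref{eq:OP.c}.

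Next I would establish $OP(OBJ) = RP2(OBJ)$ via two one-sided inequalities. The direction $OP(OBJ) \geq RP2(OBJ)$ is immediate: $RP2(OBJ)$ restricts the representation space to $\{0,1\}^4$ and imposes the positivity constraints \eqref{eq.RP2.g}, \eqref{eq.RP2.h}, \eqref{eq.RP2.e}, and \eqref{eq.RP2.f}, so every feasible point of $RP2(OBJ)$ embeds as a feasible point of the reduced $OP(OBJ)$ with the same objective value. The reverse direction $OP(OBJ) \leq RP2(OBJ)$ is the substance of the claim, and I would prove it by a merging construction. The closed-form supremum objective together with the two TVD constraints is governed by exactly four difference expressions evaluated at each $Z$, namely $\mathcal{Z}_1(Z)-\mathcal{Z}_0(Z)$, $\mathcal{Z}^1(Z)-\mathcal{Z}^0(Z)$, $(1-b)\mathcal{Z}_1^0(Z)-(1-a)\mathcal{Z}_0^0(Z)$, and $b\mathcal{Z}_1^1(Z)-a\mathcal{Z}_0^1(Z)$. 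Given any feasible solution over $\mathbb{R}^{|Z|}$, I partition the representation space into at most $2^4=16$ regions according to the joint sign pattern of these four expressions, and collapse each region to a single atom indexed by $(i,j,k,l)\in\{0,1\}^4$, where the four coordinates record the four signs. Because each absolute value in the objective and in both TVDs is a sum of terms of constant sign within a region, this collapse preserves the objective value and both TVD constraints, while the positivity constraints hold by the very choice of the sign-indexing; this is precisely why four binary coordinates are needed instead of the three used in \lemref{lma:RP1_is_linear}.

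The main obstacle I anticipate is verifying that the collapse is simultaneously sign-consistent for all four expressions and that the four positivity constraints can be attached without loss of optimality. The subtlety is that the calibration objective uses the \emph{weighted} differences scaled by $a,b,1-a,1-b$ rather than the raw differences $\mathcal{Z}_1^y-\mathcal{Z}_0^y$; I must confirm that assigning coordinate $k$ to the sign of $(1-b)\mathcal{Z}_1^0-(1-a)\mathcal{Z}_0^0$ and coordinate $l$ to the sign of $b\mathcal{Z}_1^1-a\mathcal{Z}_0^1$ remains consistent with the marginalization identities \eqref{eq:all_Zs} that define $\mathcal{Z}_0,\mathcal{Z}_1,\mathcal{Z}^0,\mathcal{Z}^1$, so that summing the collapsed atoms reproduces those marginals exactly and the region-wise signs of $\mathcal{Z}_1-\mathcal{Z}_0$ and $\mathcal{Z}^1-\mathcal{Z}^0$ are well defined. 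Once this bookkeeping is settled, fixing the signs linearizes every absolute value, which shows $RP2(OBJ)$ is a linear program and yields the equality $OP(OBJ)=RP2(OBJ)$ for $OBJ=DPC,DNC,DC$.
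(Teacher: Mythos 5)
Your proposal is correct and follows essentially the same route as the paper: the paper also first replaces the maximand by the closed-form supremum over $h$ from the preceding lemma, and then reuses the two-sided argument of \claimref{claim:OP_RP1} (the restrictive direction is immediate; the other direction pushes any feasible solution forward under the map recording the sign patterns of the four weighted differences, which preserves the objective and both TVD constraints region by region). Your identification of the four sign coordinates with constraints \eqref{eq.RP2.g}--\eqref{eq.RP2.f}, and your observation that the pushforward commutes with the mixtures in \eqref{eq:all_Zs}, is exactly the bookkeeping the paper's argument relies on.
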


\begin{lemma} \label{lma:RP2_is_linear}
In $RP2(OBJ)$, each of $\mathcal{Z}_0^0$, $\mathcal{Z}_0^1$, $\mathcal{Z}_1^0$, $\mathcal{Z}_1^1$ can be expressed as $2^4=16$ non-negative variables that sum to 1.
There are in total $4\times 16 =64$ independent variables.
For $OBJ=DPC$, $DNC$, and $DC$, $RP2(OBJ)$ is a linear program in these variables.
\end{lemma}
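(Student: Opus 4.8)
The plan is to follow the same template as the proof of \lemref{lma:RP1_is_linear}, adapting the sign bookkeeping to the four-dimensional representation. First I would count the variables: since each of $\mathcal{Z}_0^0$, $\mathcal{Z}_0^1$, $\mathcal{Z}_1^0$, $\mathcal{Z}_1^1$ ranges over $D(\{0,1\}^4)$ by \eqref{eq.RP2.b}, each is a probability mass function on the $2^4=16$ atoms $(i,j,k,l)$, i.e.\ $16$ non-negative numbers constrained to sum to $1$; the four of them give $4\times 16=64$ independent variables. Every other distribution appearing in the constraints and objective---$\mathcal{Z}_0$, $\mathcal{Z}_1$, $\mathcal{Z}^0$, $\mathcal{Z}^1$---is a fixed affine combination of these four via \eqref{eq:all_Zs}, hence also linear in the $64$ variables.

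Next I would linearize the constraints. The positivity constraints \eqref{eq.RP2.g}, \eqref{eq.RP2.h}, \eqref{eq.RP2.e}, and \eqref{eq.RP2.f} are signed linear inequalities and so are already linear. Their role is to fix, atom by atom, the sign of each difference that otherwise appears inside an absolute value. Concretely, \eqref{eq.RP2.g} forces $|\mathcal{Z}_1(i,j,k,l)-\mathcal{Z}_0(i,j,k,l)| = (-1)^{i+1}(\mathcal{Z}_1(i,j,k,l)-\mathcal{Z}_0(i,j,k,l))$, so the sum defining $d_{TV}(\mathcal{Z}_0,\mathcal{Z}_1)$ collapses to a single linear expression and \eqref{eq.RP2.c} becomes a linear inequality; the identical argument with \eqref{eq.RP2.h} linearizes \eqref{eq.RP2.d}.

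Then I would linearize the objective using the preceding supremum lemma. For $OBJ=DPC$ the maximand $\sup_h OBJ$ equals $\tfrac12\sum_Z |b\mathcal{Z}_1^1(Z)-a\mathcal{Z}_0^1(Z)|$, and \eqref{eq.RP2.f} fixes the sign of each summand as $(-1)^{l+1}$, making it linear; symmetrically \eqref{eq.RP2.e} linearizes $\sup_h DNC = \tfrac12\sum_Z |(1-b)\mathcal{Z}_1^0(Z)-(1-a)\mathcal{Z}_0^0(Z)|$, and $\sup_h DC$ is the sum of these two and therefore linear as well. The key structural observation is that the four binary coordinates $i,j,k,l$ are each dedicated to one of the four absolute-value sums---$i$ to the $\mathcal{Z}_0$ versus $\mathcal{Z}_1$ gap, $j$ to the $\mathcal{Z}^0$ versus $\mathcal{Z}^1$ gap, $l$ to the DPC term, and $k$ to the DNC term---so the sign assignments never conflict on a common atom.

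The step I expect to require the most care is not any single linearization but the verification that this coordinate-to-sum assignment is globally consistent: because each absolute value is controlled by a distinct coordinate, no atom $(i,j,k,l)$ has its sign overdetermined, and every imposed positivity constraint is simultaneously satisfiable together with the TVD inequalities and the normalization and non-negativity constraints defining the probability simplex. The genuinely substantive content---that replacing the free predictor by $\sup_h$ yields the closed forms above, and that imposing the positivity constraints does not change the optimum (\claimref{claim:OP_RP2})---lives in the preceding lemma and claim; granting those, \lemref{lma:RP2_is_linear} reduces to confirming that the resulting program is linear in the $64$ variables, which the above sign bookkeeping establishes.
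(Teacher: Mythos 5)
Your proposal is correct and follows essentially the same route the paper takes, namely repeating the proof of \lemref{lma:RP1_is_linear} with the extra positivity constraints \eqref{eq.RP2.e} and \eqref{eq.RP2.f} used to fix the signs in the closed-form suprema of DPC, DNC, and DC. Your additional remark that each of the four coordinates governs a distinct absolute-value sum, so the sign assignments never conflict on an atom, is a useful clarification the paper leaves implicit, but it does not change the argument.
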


The results in this section provide another justification for the use of a both fair and discriminative representation: when achieved approximately, \ie~with a $\alpha$-fair and $\beta$-discriminative representation, all seven group fairness notions can be guaranteed approximately.
Moreover, the obtained upper bounds are tight because we prove equivalence between $OP(OBJ)$ and $RP1(OBJ)$ / $RP2(OBJ)$.
Finding these quantitative fairness guarantees simply reduces to solving linear programs.

\section{Discussions on Fair Representation}
\label{sec:discussion}

\subsection{Guaranteeing Group Fairness}
\begin{table}[t]
    \centering
    \caption{Summarization of Our Theoretical Results. $RP1(\cdot)$ and $RP2(\cdot)$ Are Linear Programs That Depend on $\alpha$ and $\beta$, Defined In \eqref{eq.RP} and (\ref{eq.RP2}), Respectively.}
    \label{tab:fairness_bounds}
    \adjustbox{width=1.0\columnwidth}{{\small
    \begin{tabular}{p{2.4cm} p{2.8cm}  p{2.8cm}  c  c}
        \toprule
        \multirow{3.5}{*}{Fairness Notion}    & \multicolumn{2}{c}{Fairness Guarantee} & \multirow{3.5}{*}{Tight} & \multirow{3.5}{*}{Reference} \\
        \cmidrule{2-3}
        & using $0$-fair and $0$-disc. representation &  using $\alpha$-fair and $\beta$-disc. representation \\
        \midrule
        \midrule
        $\text{SP}(\hat{Y},S)$         & 0 &  $\alpha$ & \checkmark & ~\cite{feldman_SIGKDD_2015,mcnamara_arxiv_2017}  \\
        \midrule
        $\text{DOpp}(\hat{Y},Y,S)$     & 0 &  $RP1(DOpp)$ & \checkmark & This work \\
        \midrule
        $\text{DR}(\hat{Y},Y,S)$       & 0 &  $RP1(DR)$ & \checkmark & This work \\
        \midrule
        $\text{DOdds}(\hat{Y},Y,S)$    & 0 &  $RP1(DOdds)$ & \checkmark & This work \\
        \midrule
        % $\text{MUC}(h, Y, S)$          & unbounded &  unbounded  \\
        % \midrule
        $\text{DPC}(h, Y, S)$          & 0 &  $RP2(DPC)$  & \checkmark & This work \\
        \midrule
        $\text{DNC}(h, Y, S)$          & 0 &  $RP2(DNC)$  &  \checkmark & This work \\
        \midrule
        $\text{DC}(h, Y, S)$          & 0 &  $RP2(DC)$  &  \checkmark & This work \\
        \bottomrule
      \end{tabular}}}
      \vspace{-1ex}
\end{table}
We summarize our theoretical results in~\tabref{tab:fairness_bounds}.
Our findings justify the use of fair representation (both in the exact and approximate sense), \emph{for all prediction tasks that the representation is discriminative for}.
For the other prediction tasks for which the representation is undiscriminative, the downstream predictions can still be unfair.
But this is less of a compromise.
For these prediction tasks, the predictions cannot be accurate in the first place and thus are unlikely to be actually deployed.
Thus, discriminativeness establishes a dichotomy between the prediction tasks whose fairness can be guaranteed using fair representation and those that cannot.

An immediate corollary of our theoretical results is that the downstream predictions' fairness, for all seven group fairness notions, is linearly controlled by $\alpha$ and $\beta$.
Thus, using an approximately fair and approximately discriminative representation degrades the fairness guarantees gracefully. 

\begin{corrolary}
For any base rates $a$, $b$, $r$, and for $OBJ=SP,DOpp,DR,DOdds,DPC,DNC$, or $DC$, using a representation that is $\alpha$-fair and $\beta$-discriminative, we have
$$
OBJ \leq C \max(\alpha,\beta),
$$
where $C$ is a constant that depends on $a$, $b$, $r$, and the concerned fairness notion.
\end{corrolary}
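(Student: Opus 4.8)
The plan is to reduce the corollary to the parametric behaviour of the linear programs already built. For $OBJ=SP$ the bound is immediate, since $SP\le\alpha\le\max(\alpha,\beta)$ gives $C=1$. For the remaining six notions, \claimref{claim:OP_RP1} and \claimref{claim:OP_RP2} show that the fairness measure of \emph{any} $\alpha$-fair and $\beta$-discriminative representation is at most the optimal value of the associated linear program; write $V_{OBJ}(\alpha,\beta)$ for this optimum, where $OBJ\in\{DOpp,DR,DOdds\}$ uses $RP1$ from \eqref{eq.RP} and $OBJ\in\{DPC,DNC,DC\}$ uses $RP2$ from \eqref{eq.RP2}. It therefore suffices to prove $V_{OBJ}(\alpha,\beta)\le C\max(\alpha,\beta)$ on the set of $(\alpha,\beta)$ for which such a representation exists, for a finite $C=C(a,b,r,OBJ)$.

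First I would record the structural properties of $V_{OBJ}$ as a function of $(\alpha,\beta)$. In both $RP1$ and $RP2$ the parameters enter \emph{only} through the right-hand sides of the two distance constraints in \eqref{eq.RP} and \eqref{eq.RP2}, and---after the positivity constraints linearize the total-variation terms---they do so affinely, as $\le\alpha$ and $\ge 1-\beta$. Standard parametric linear-programming theory (via strong duality, writing $V_{OBJ}$ as a pointwise minimum over dual vertices of objectives that are affine in the right-hand side) then shows $V_{OBJ}$ is concave and piecewise linear in $(\alpha,\beta)$ on its feasibility domain. Moreover, relaxing either constraint enlarges the feasible set, so $V_{OBJ}$ is non-decreasing in each argument; hence, writing $m=\max(\alpha,\beta)$, we get $V_{OBJ}(\alpha,\beta)\le V_{OBJ}(m,m)=:g(m)$, reducing the claim to the one-dimensional bound $g(m)\le Cm$.

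I would then split on whether the base rates are balanced. If $a=b$, the point $(0,0)$ is feasible and \thref{thm:exact_all} forces $g(0)=V_{OBJ}(0,0)=0$; since $g$ is concave with $g(0)=0$ and piecewise linear, it lies below its right-tangent at the origin, so $g(m)\le g'(0^+)\,m$ with $g'(0^+)$ finite (the slope of the first linear piece), and $C=g'(0^+)$ works. If $a\ne b$, the proof of \thref{thm:exact_all} shows $(0,0)$ is infeasible, and---because the achievable pairs $\big(d_{TV}(\mathcal{Z}_0,\mathcal{Z}_1),\,1-d_{TV}(\mathcal{Z}^0,\mathcal{Z}^1)\big)$ form the continuous image of a compact product of simplices---the minimal feasible value $m_0:=\min\{\max(\alpha,\beta):(\alpha,\beta)\text{ feasible}\}$ is attained and strictly positive. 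On the feasibility domain we then have $m\ge m_0>0$ while $g(m)\le 1$ (every fairness measure lies in $[0,1]$, so the LP optimum is $\le 1$), whence $C=1/m_0$ suffices. Taking the larger of the two constants covers both cases, consistent with the intrinsic floor of \thref{thm:DPC}.

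The main obstacle is the $a=b$ regime, where the bound is genuinely linear-from-zero rather than trivially true: there I must guarantee that $g$ is right-differentiable at the origin with finite slope, i.e.\ that its first linear piece has bounded gradient. This is precisely what piecewise-linearity of the parametric LP value delivers, so the effort lies in carefully justifying that representation (strong duality together with finiteness of the set of dual vertices) rather than in any additional estimation. For $a\ne b$ the only delicate point is the strict positivity of $m_0$, which follows from compactness of the achievable-distance set combined with the infeasibility of $(0,0)$ established in \thref{thm:exact_all}.
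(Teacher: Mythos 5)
Your proposal is correct and takes essentially the same route as the paper, whose proof merely asserts that the bound ``directly follows'' from the exact guarantee at $(0,0)$ (\thref{thm:exact_all}) together with the linear-program characterizations (\claimref{claim:OP_RP1}, \claimref{claim:OP_RP2}); you are supplying the parametric-LP and case-split ($a=b$ versus $a\neq b$, the latter handled via the positive feasibility floor from \thref{thm:tradeoff}) details that the paper leaves implicit. One small repair: for $OBJ\in\{DOpp,DR,DOdds\}$ the objective contains absolute values, so the value function is a finite maximum of concave piecewise-linear LP values rather than itself concave; but since each of these branches also vanishes at the origin when $a=b$, the bound $g(m)\le g'(0^+)\,m$ applies branchwise and your conclusion is unaffected.
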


\begin{proof}
It directly follows from the fact that (1) 0-fair and 0-discriminative representations guarantee seven group fairness notions exactly and (2) $\alpha$-fair and $\beta$-discriminative representations guarantee seven group fairness notions via linear programs.
\end{proof}

Our theoretical results also indicate that representation discriminativeness and downstream predictions' fairness are aligned.
By encouraging fair representation to encode more information about the data, better fairness guarantees can be obtained for a larger class of prediction tasks.

\subsection{Fairness Impossibility Theorem}
A well-known result in fair machine learning is the fairness impossibility theorem~\cite{chouldechova2017fair,kleinberg2018inherent,barocas-hardt-narayanan}, which states that any two fairness notions cannot both hold under mild conditions.

\begin{thm}[Fairness Impossibility Theorem~\cite{barocas-hardt-narayanan}]
\label{th:fairness_impossibility}
Consider three kinds of fairness, \emph{independence} ($\hat{Y}\perp S$), \emph{separation} ($\hat{Y}\perp S \mid Y$), and \emph{sufficiency} ($Y\perp S \mid \hat{Y}$). When the base rate differ (\ie~$a\neq b$), any two out of three cannot both hold.
\end{thm}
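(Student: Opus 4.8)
The plan is to prove the contrapositive pairwise: for each of the three possible pairs of notions I would assume both hold and, together with a mild non-degeneracy hypothesis on the joint law of $(S,Y,\hat Y)$, derive $a=b$, contradicting the stated hypothesis $a\neq b$. Because there are only three notions, this reduces to three cases: (independence, sufficiency), (independence, separation), and (separation, sufficiency). Throughout I write $\beta_s\doteq P(Y=1\mid S=s)$, so $\beta_0=a$ and $\beta_1=b$, and I translate each notion into equalities of conditional probabilities that hold identically across $s\in\{0,1\}$.

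Two of the three pairs succumb to a single application of the law of total probability. For (independence, sufficiency), sufficiency makes $q_{\hat y}\doteq P(Y=1\mid\hat Y=\hat y,S=s)$ independent of $s$, so $\beta_s=q_0\,P(\hat Y=0\mid S=s)+q_1\,P(\hat Y=1\mid S=s)$; independence makes each $P(\hat Y=\hat y\mid S=s)$ independent of $s$ as well, so the right-hand side is the same for $s=0,1$ and hence $a=b$ with no further assumption. The pair (independence, separation) is the mirror image: separation makes $p_y\doteq P(\hat Y=1\mid Y=y,S=s)$ independent of $s$, giving $P(\hat Y=1\mid S=s)=p_0+(p_1-p_0)\beta_s$, and independence then forces $(p_1-p_0)(a-b)=0$; since $a\neq b$, this yields $p_1=p_0$, \ie~$\hat Y\perp Y$, the degenerate case of a predictor carrying no information about the target.

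The genuine obstacle is the pair (separation, sufficiency), which needs an actual algebraic manipulation rather than a one-line expansion. Separation fixes the group-independent rates $r_1\doteq P(\hat Y=1\mid Y=1)$ and $r_0\doteq P(\hat Y=1\mid Y=0)$, so within group $s$ the joint law of $(\hat Y,Y)$ is determined by $r_0,r_1,\beta_s$. Writing the positive predictive value in group $s$ as $\tfrac{r_1\beta_s}{r_1\beta_s+r_0(1-\beta_s)}$ and imposing that sufficiency makes it group-independent, I would equate the $s=0$ and $s=1$ expressions and cross-multiply; after expanding and cancelling common terms the identity collapses to $r_0 r_1(a-b)=0$. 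The symmetric computation of $P(Y=1\mid\hat Y=0,S=s)$ yields $(1-r_0)(1-r_1)(a-b)=0$. With $a\neq b$ these force $r_0 r_1=0$ and $(1-r_0)(1-r_1)=0$ at once, whose only joint solutions are $(r_0,r_1)=(0,1)$ and $(r_0,r_1)=(1,0)$, the perfect predictor and its negation.

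The subtlety I would be most careful about is precisely these degenerate escape routes, which is why the theorem requires a non-degeneracy hypothesis. I would state at the outset that the joint distribution of $(S,Y,\hat Y)$ has full support---equivalently $a,b\in(0,1)$ and $r_0,r_1\in(0,1)$---so that every conditional probability and every denominator above is well defined and strictly between $0$ and $1$. Under this hypothesis the cross-multiplications are valid, the solutions $\hat Y\perp Y$ (second pair) and $(r_0,r_1)\in\{(0,1),(1,0)\}$ (third pair) are all excluded, and in each of the three cases $a\neq b$ produces a contradiction, establishing that no two of the three notions can hold simultaneously.
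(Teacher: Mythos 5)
The paper does not prove this theorem: it is quoted from the fairness literature (the cited Barocas--Hardt--Narayanan text) as background, so there is no in-paper argument to compare yours against. Judged on its own, your pairwise case analysis is the standard route and most of the algebra is right. The (independence, sufficiency) pair does collapse to $a=b$ by one application of the law of total probability with no side conditions, and the (separation, sufficiency) pair correctly reduces, after cross-multiplication, to $r_0r_1(a-b)=0$ and $(1-r_0)(1-r_1)(a-b)=0$, whose only escapes when $a\neq b$ are $(r_0,r_1)=(0,1)$ or $(1,0)$ --- exactly the cases your positivity assumption $r_0,r_1\in(0,1)$ rules out.

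The gap is in the (independence, separation) case. You correctly derive $(p_1-p_0)(a-b)=0$, hence $p_0=p_1$, i.e.\ $\hat Y\perp Y$, but you then assert that this degenerate solution is excluded by the full-support hypothesis. It is not: a predictor with $P(\hat Y=1\mid Y=y,S=s)=1/2$ for all $y,s$ yields a joint law of $(S,Y,\hat Y)$ with full support, satisfies both independence and separation, and is compatible with any $a\neq b$. So, read literally, the impossibility fails for that pair unless one additionally assumes $\hat Y\not\perp Y$; this nontriviality assumption is logically independent of full support (and is precisely the extra hypothesis the original source imposes for this pair). You clearly saw the escape route --- you just filed it under the wrong side condition. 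Replacing ``full support excludes $\hat Y\perp Y$'' with an explicit standing assumption that the predictor is informative, $P(\hat Y=1\mid Y=1)\neq P(\hat Y=1\mid Y=0)$, closes the gap and makes the three-case proof complete.
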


Since we show that 0-fair and 0-discriminative representation guarantees multiple group fairness notions exactly, it is natural to ask how it relates to the impossibility theorem.
We make two comments.

\textbf{First, fair representation does not violate the impossibility theorem.}
Since the impossibility theorem reveals the inherent imcompatibility between different fairness notions, achieving them using fair representation cannot do any better.
In fact, both 0-fair and 0-discriminative representation is feasible \emph{exactly when the impossibility theorem becomes unrestrictive, \ie~when $a=b$}.

\begin{thm}[Representation Fairness-Discriminativeness Trade-off] \label{thm:tradeoff}
    For base rates $a$, $b$, $r$, and for any representation that is $\alpha$-fair and $\beta$-discriminative,
    {\small
    \begin{equation}
    \begin{aligned}
    \beta \geq &\frac{|a-b|^2}{ r\big(1-r\big) \big(\frac{a}{r}+\frac{b}{1-r}\big) \big(\frac{1-a}{r}+\frac{1-b}{1-r}\big) } \\
    &- \frac{|a-b|}{ r\big(1-r\big) \big(\frac{a}{r}+\frac{b}{1-r}\big) \big(\frac{1-a}{r}+\frac{1-b}{1-r}\big) } \alpha.
    \end{aligned}
    \end{equation}}
\end{thm}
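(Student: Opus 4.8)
The plan is to reduce the claim to an upper bound on $d_{TV}(\mathcal{Z}^0,\mathcal{Z}^1)$. By the definition of $\beta$-discriminativeness, $\beta \ge 1 - d_{TV}(\mathcal{Z}^0,\mathcal{Z}^1)$, so it suffices to show $d_{TV}(\mathcal{Z}^0,\mathcal{Z}^1) \le 1 - \tfrac{c(c-\alpha)}{K}$, where I abbreviate $c := |a-b|$. After noting the identities $\tfrac ar+\tfrac b{1-r}=\tfrac{p}{r(1-r)}$ and $\tfrac{1-a}r+\tfrac{1-b}{1-r}=\tfrac q{r(1-r)}$ with $p:=P(Y{=}1)=a(1-r)+br$ and $q:=1-p$, the constant collapses to $K=\tfrac{pq}{r(1-r)}$. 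I may assume $a>b$ (the case $a<b$ is symmetric, and $a=b$ makes the right-hand side $\le 0$, hence trivial since $\beta\ge0$) and $\alpha<c$ (otherwise the bound is again nonpositive).

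The key device is the variational form of total variation, $d_{TV}(\mathcal{Z}^0,\mathcal{Z}^1)=\max_A\big[P(Z\in A\mid Y{=}1)-P(Z\in A\mid Y{=}0)\big]$ over events $A$, attained at some $A^\ast$. The crucial observation is that $\alpha$-fairness constrains this \emph{same} test applied to $S$: for every $A$, $\big|P(Z\in A\mid S{=}1)-P(Z\in A\mid S{=}0)\big|\le d_{TV}(\mathcal{Z}_0,\mathcal{Z}_1)\le\alpha$. I would then collapse everything to four numbers $w_s^y:=P(Z\in A^\ast\mid S{=}s,Y{=}y)\in[0,1]$. Writing $v_y=P(Z\in A^\ast\mid Y{=}y)$ and $u_s=P(Z\in A^\ast\mid S{=}s)$ and applying the law of total probability within each group expresses $v_1-v_0$ and $u_1-u_0$ as explicit linear functions of the $w_s^y$ with coefficients built from $a,b,r$. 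Hence $d_{TV}(\mathcal{Z}^0,\mathcal{Z}^1)=v_1-v_0$ equals the value of a feasible point of the linear program that maximizes $v_1-v_0$ over $w\in[0,1]^4$ subject to $|u_1-u_0|\le\alpha$; since relaxing the attainable $w^\ast$ to the full box only enlarges the maximum, $d_{TV}(\mathcal{Z}^0,\mathcal{Z}^1)$ is at most this LP value.

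I would then solve the four-variable program. Its unconstrained maximizer is $w=(w_0^0,w_0^1,w_1^0,w_1^1)=(0,1,0,1)$, giving $v_1-v_0=1$ but $u_1-u_0=b-a=-c$, which violates fairness when $\alpha<c$; the optimum therefore lies on the face $u_1-u_0=-\alpha$. Evaluating the vertices of that face (equivalently, deviating minimally from the greedy corner along the two admissible edges) yields $d_{TV}(\mathcal{Z}^0,\mathcal{Z}^1)\le 1-(c-\alpha)\min\!\big(\tfrac rq,\tfrac{1-r}p\big)$.

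Finally I would weaken this to the stated closed form. The elementary identities $p-(1-r)c=b>0$ and $q-rc=1-a>0$ give $\tfrac rq\ge\tfrac{r(1-r)c}{pq}$ and $\tfrac{1-r}p\ge\tfrac{r(1-r)c}{pq}$, so $\min(\tfrac rq,\tfrac{1-r}p)\ge\tfrac{r(1-r)c}{pq}=\tfrac cK$. Combining, $\beta\ge1-d_{TV}(\mathcal{Z}^0,\mathcal{Z}^1)\ge(c-\alpha)\tfrac cK=\tfrac{c^2-c\alpha}{K}$, which is exactly the asserted inequality. I expect the main obstacle to be the reduction itself---arguing that fairness transfers to the single optimal discriminating test and that the box relaxation is legitimate---together with correctly identifying the optimal LP vertex; the concluding simplification is routine. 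It is worth remarking that this route in fact proves the sharper bound with $\min(\tfrac rq,\tfrac{1-r}p)$ in place of $\tfrac cK$, so the stated quadratic form is a clean, parameter-uniform relaxation of the exact trade-off.
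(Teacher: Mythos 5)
Your proposal is correct, but it takes a genuinely different route from the paper. The paper's proof is a short algebraic computation: writing $p=a(1-r)+br$ and $q=1-p$, it verifies the pointwise identity
\begin{equation*}
pq\big(\mathcal{Z}^1(Z)-\mathcal{Z}^0(Z)\big)=(a-b)r(1-r)\big(\mathcal{Z}_1(Z)-\mathcal{Z}_0(Z)\big)+a(1-a)(1-r)\big(\mathcal{Z}_0^1(Z)-\mathcal{Z}_0^0(Z)\big)+b(1-b)r\big(\mathcal{Z}_1^1(Z)-\mathcal{Z}_1^0(Z)\big),
\end{equation*}
sums absolute values, applies the triangle inequality, bounds the two within-group total variations $d_{TV}(\mathcal{Z}_s^0,\mathcal{Z}_s^1)$ crudely by $1$, and rearranges using $pq-a(1-a)(1-r)-b(1-b)r=r(1-r)(a-b)^2$. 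You instead use the variational (test-set) characterization of TVD, transfer the fairness constraint to the single optimal discriminating event $A^\ast$, and solve the resulting four-variable linear program exactly; I checked that your claimed optimum $1-(c-\alpha)\min(r/q,(1-r)/p)$ is certified by the dual multiplier $\lambda^\ast=\min(r/q,(1-r)/p)$, and your final weakening via $p-(1-r)c=b>0$ and $q-rc=1-a>0$ recovers precisely the stated quadratic bound. What each approach buys: the paper's argument is two lines once the identity is guessed, but it discards information (the $d_{TV}\le 1$ step) and so cannot certify tightness; your route is longer and the LP step needs a duality or exchange argument to be fully rigorous, but it yields the strictly sharper and in fact extremal trade-off $\beta\ge(|a-b|-\alpha)\min(r/q,(1-r)/p)$, showing the theorem's stated constant is not tight. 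The one step you should make explicit is the LP optimality certificate (or a vertex enumeration of the face $u_1-u_0=-\alpha$), since "deviating minimally from the greedy corner" is a heuristic until paired with weak duality.
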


\thref{thm:tradeoff} shows that there is a linear trade-off between $\alpha$ and $\beta$, depending on the base rates $a$, $b$, $r$.
When the target and sensitive attributes are coupled ($a\neq b$), the first term in the RHS becomes positive and prohibits $\alpha$ and $\beta$ to be both zero.
When $a=b$, the trade-off vanishes.
Thus, in the context of fair representation, the incompatibility between different fairness notions manifests as a trade-off between representation fairness and discriminatveness.

\thref{thm:tradeoff} should be interpreted positively because (1) the incompatibility is inherent in the fairness notions rather than specific to fair representation, and (2) fair representation is indeed able to achieve seven group fairness notions exactly whenever unrestricted by the impossibility theorem.

\textbf{Second, fair representation generalizes the impossibility theorem.}
The impossibility theorem only states that, when $a\neq b$, achieving multiple fairness notions \emph{exactly} is impossible.
A natural question (raised in Barocas~\etal~\cite{barocas-hardt-narayanan}) is whether meaningful trade-off can be achieved for \emph{approximate} fairness notions.
In the real-world, we are often more interested in requiring approximate instead of exact fairness, in order for important business goals to proceed.
Our results answer this question in the affirmative and can be seen as a generalization of the impossibility theorem.
Whenever restricted by the impossibility theorem, fair representation provides a concrete way to guarantees approximate fairness for all notions.

\section{Learning Fair and Discriminative Representation} 
\label{sec:learning_general_fair_representation}

\subsection{Learning Discriminative Representation}
\label{sec:learn_PD_representation}

Let's first consider learning discriminative representations.
The difficulty in our setting is, since the data owner does not know the downstream prediction tasks, they cannot directly use the target labels to learn discriminative representations.
Thus, we rely on self-supervised representation learning.
We set up a pretext loss, such as reconstruction loss or contrastive loss~\cite{deng_CVPR_2019_arcface}, to self-supervise the representation learning process.
Here, we do not overly emphasise the performance of the pretext loss.
Rather, we hope the process of solving the pretext loss guide the representation to summarize important semantics from the data.

We acknowledge that self-supervised representation learning is itself an active research area and different data typically requires different pretext loss.
Thus, we do not propose nor fix a pretext loss.
We assume different pretext loss will be used for different data, potentially advised by domain experts.

\subsection{Learning Fair Representation}
\label{sec:learn_PC_representation}

To learn fair representations, the ultimate goal is to minimize the Total Variation Distance (TVD).
However, directly minimizing TVD is impractical because (1) the gradient is 0, \ie~uninformative, when two distributions have disjoint support, and
(2) TVD cannot be easily estimated from finite samples.
Our motivation is that TVD is an instance of the Integral Probability Metrics (IPMs), with witness function class $\mathcal{F}_{TVD}=\{f:\|f\|_{\infty}\leq 1\}$.

\begin{defn}[Integral Probability Metrics~\cite{muller_1997_IPMs}]
	Let $\Dist{P},\Dist{Q}\in D(\Omega)$ and $\mathcal{F}$ be a class of real-valued bounded measurable functions on $\Omega$. The integral probability metric with witness function class $\mathcal{F}$ is defined as
	$$
	IPM_{\mathcal{F}}(\Dist{P},\Dist{Q})=\sup_{f\in \mathcal{F}}\left|\int_{\Omega} fd\Dist{P} - \int_{\Omega} fd\Dist{Q}\right|.
	$$
\end{defn}

By changing the witness function class to 
$$
\mathcal{F}_{MMD}=\{f:\|f\|_{\mathcal{H}}\leq 1\},
$$
where $\mathcal{H}$ is a reproducing kernel Hilbert space (RKHS) with reproducing kernel $k$, we obtain an alternative for TVD, the Maximum Mean Discrepancy (MMD).

\begin{defn}[Maximum Mean Discrepancy~\cite{gretton_JMLR_2012_mmd}]
	$$
	d_{MMD}(\Dist{P},\Dist{Q})=IPM_{\mathcal{F}_{MMD}}(\Dist{P},\Dist{Q}).
	$$
\end{defn}

MMD is desirable for several reasons.
First, MMD provides informative gradient even when two distributions have disjoint support.
Second, minimizing MMD is asymptotically equivalent to minimizing TVD because MMD attains the minimum, 0, iff two distributions are equivalent (provided that the RKHS $\mathcal{H}$ is universal)~\cite{steinwart2001influence}.
Third, MMD is computationally cheap 
and enjoys a faster convergence rate compared to other IPMs (see Corollary 3.5 in~\cite{sriperumbudur_EJS_2012}).
Using the kernel trick, the squared MMD has an unbiased finite-sample estimate.

\begin{defn}[Unbiased Estimate of Squared MMD~\cite{gretton_JMLR_2012_mmd}] \label{def:unbiased_MMD}
	Given $m$ i.i.d. samples $(x_1,...,x_m)\sim \Dist{P}^{m}$ and $n$ i.i.d. samples $(y_1,...,y_n)\sim \Dist{Q}^{n}$, an unbiased estimate of the squared MMD with kernel $k$ is
	{\small
	\begin{align*}
		\tilde{d}^2_{MMD} &(\Dist{P},\Dist{Q}) = \frac{1}{m(m-1)}\sum_{i=1}^{m}\sum_{j\neq i}^{m} k(x_i,x_j)\\
		&+ \frac{1}{n(n-1)}\sum_{i=1}^{n}\sum_{j\neq i}^{n} k(y_i,y_j) - \frac{2}{mn}\sum_{i=1}^{m}\sum_{j=i}^{n} k(x_i,y_j).
	\end{align*}
	}
\end{defn}

\begin{defn}[Gaussian/Rational Quadratic Kernel]
	{\small
	\begin{align*}
	k_{G}(x,y)& = \exp \bigg(-\frac{\|x-y\|^2}{2l_{g}^2} \bigg), \\
	k_{RQ}(x,y)& = \bigg(1+\frac{\|x-y\|^2}{2\alpha l_{rq}^2} \bigg)^{-\alpha}.
	\end{align*}
	}
\end{defn}

Gaussian kernel is the kernel of choice in MMD~\cite{gretton_JMLR_2012_mmd}.
However, setting the lengthscale $l_{g}$ requires heuristics~\cite{ramdas_AAAI_2015}.
In light of this, we use rational quadratic kernel, which equals an infinite sum of the Gaussian kernels with the inversed squared lengthscale {\small $1/l_{g}^2$} following a Gamma distribution of shape parameter $\alpha$ and mean {\small$1/l_{rq}^2$}~\cite{rasmussen_2003_GPML}.
Theoretically, the rational quadratic kernel is less sensitive to lengthscale selection compared to the Gaussian kernel.

\subsection{Two-Player Game Formulation}

\secref{sec:learn_PD_representation} and \ref{sec:learn_PC_representation} have explained the motivation of using pretext loss and MMD to learn fair and discriminative representations.
We then cast learning as a constrained optimization problem:
\begin{equation} \label{eq:constraint_problem}
\begin{aligned}
\min_{f\in \mathcal{F}} \quad &L_p(f, \{x_i, s_i\}^n_{i=1}), \\
s.t.\quad& \tilde{d}^2_{MMD}(f,\{x_i, s_i\}^n_{i=1}) \leq \epsilon,
\end{aligned}
\end{equation}
where $\mathcal{F}$ is a class of encoding functions, $L_{p}$ is the pretext loss, $\tilde{d}^2_{MMD}$ is the MMD constraint, $\{x_i, s_i\}^n_{i=1}$ is the finite samples of $(X,S)$, and $\epsilon$ is a small tolerance.

The equivalent Lagrangian dual problem is
\begin{equation} \label{eq:minimax_problem}
\begin{aligned}
\min_{f\in \mathcal{F}}\max_{\lambda\geq 0}\quad & L_p(f, \{x_i, s_i\}^n_{i=1}) \\
&+ \lambda \times (\tilde{d}^2_{MMD}(f,\{x_i, s_i\}^n_{i=1}) - \epsilon),
\end{aligned}
\end{equation}
where $\lambda$ is a non-negative weight.

Motivated by the close connection between online decision problem and game theory~\cite{freund1999adaptive,zinkevich2003online}, we propose to solve~\eqref{eq:minimax_problem} using a two-player game, shown in~\algoref{algo:learning_representation}.
The game is formulated between a learner and a regulator.
At each round $t$, a non-negative weight $\lambda$ is maintained.
The learner conducts regularized loss minimization (RLM), where MMD is used as a fair regularizer.
% , to compute an encoding function $f_t$.
% The regulator then increases (decreases) $\lambda$ if $\tilde{d}_{MMD}^2 > \epsilon$ ($\tilde{d}^2_{MMD} < \epsilon$).
The regulator increases $\lambda$ if $\tilde{d}_{MMD}^2 > \epsilon$ and vice versa.

%!TEX root = ../main.tex
\SetCommentSty{mycommfont}

\SetKwInput{KwInput}{Input}                % Set the Input
\SetKwInput{KwOutput}{Output}              % set the Output
\SetKwInput{KwInitialize}{Initialize}              % set the Output
\begin{algorithm}[t]
\DontPrintSemicolon

  \KwInput{pretext loss $L_{p}$, fair regularization $\tilde{d}^2_{MMD}$, representation function class $\mathcal{F}$, \# rounds $T$, tolerance $\epsilon$}
  \KwData{$\{x_i,s_i\}_{i=1}^{n}$}
  \KwInitialize{$\lambda = 1$}
  \For{$t=1:T$}    
    { 
      \tcc{Learner conducts regularized loss minimization.}
      $
      f_t=\arg\min_{f\in\mathcal{F}}L_p(f, \{x_i, s_i\}^n_{i=1}) + \lambda\times \tilde{d}^2_{MMD}(f,\{x_i, s_i\}^n_{i=1})
      $\;
      \tcc{Regulator adjusts the weight.}
      \uIf{$\tilde{d}^2_{MMD}(f,\{x_i, s_i\}^n_{i=1}) > \epsilon$}{
        increase $\lambda$ \;
      }
      \Else{
        decrease $\lambda$ \;
      }
    }
  \KwOutput{$h_{T}$}

\caption{Learning Fair \& Disc. Representation}
\label{algo:learning_representation}
\end{algorithm}

We make the following remark, where the learner plays a simpler strategy, to justify the design of \algoref{algo:learning_representation}.

\begin{remark}[last-iterate convergence in convex-concave minimax optimization~\cite{golowich2020last}] \label{remark:covergence}
If both $L_p$ and $\tilde{d}_{MMD}^2$ are convex and smooth in the parameters of $f$ (thus the objective of \eqref{eq:minimax_problem} is convex-concave), the learner can play extragradient descent and the regulator can play extragradient ascent to approximately solve~\eqref{eq:minimax_problem} in last-iterate.
\end{remark}

However, in practice we typically instantiate $f$ with a neural network, resulting in a nonconvex-concave optimization.
Thus, we ask the learner to conduct RLM rather than a one-step (extra)gradient update.
In our implementation, the regulator plays a coarse-to-fine line search strategy.
We show in supplementary material that our formulation is rather robust to $\lambda$ and thus the simple line search strategy works well.
We set $\epsilon$ to $0$ (note that {\small $\tilde{d}^2_{MMD}(f,\{x_i, s_i\}^n_{i=1})$} is an unbiased estimate and can be negative).

%!TEX root = ../main.tex
\section{Experiments}
\label{sec:experiment}

%!TEX root = ../main.tex
\begin{table*}[t]
    \definecolor{green}{rgb}{0,0.45,0}
    \centering
    \caption{Our Approach on Adult Dataset Compared with the Baseline (\ie~Representations Learned with Only Reconstruction) and Existing Fair Representation Learning Methods. Numbers in the Brackets (\textcolor{green}{green}) Are the Fairness Guarantees Computed From Our Theoretical Results. $\uparrow$/$\downarrow$ Means Higher/Lower Is Better.}
    \label{tab:Adult}
    \vspace{-1ex}
    \setlength\extrarowheight{-1pt}
    \adjustbox{width=1.0\textwidth}{
    \begin{tabular}{l c c c c l l l l l l l}
        \toprule
        \multirow{2}{*}{Methods} &
        Sen. Attr. ($\downarrow$) & &
        Tar Attr. ($\uparrow$) & &
        \multicolumn{7}{c}{Fairness ($\downarrow$)} \\
        \cmidrule{2-2} \cmidrule{4-4} \cmidrule{6-12}
        &  BA   & & BA  & & SP & DOpp & DR & DOdds & DPC & DNC & DC\\ 
        \midrule
        Reconstruction only
        & .846{\small \textcolor{gray}{$\pm$.003}} & &
        .811{\small \textcolor{gray}{$\pm$.002}} & &
        .177{\small \textcolor{gray}{$\pm$.011}} \textcolor{green}{(.692)} &
        .096{\small \textcolor{gray}{$\pm$.025}} \textcolor{green}{(1.)} &
        .108{\small \textcolor{gray}{$\pm$.010}} \textcolor{green}{(1.)} &
        .102{\small \textcolor{gray}{$\pm$.017}} \textcolor{green}{(1.)} &
        .103{\small \textcolor{gray}{$\pm$.002}} \textcolor{green}{(.219)} &
        .252{\small \textcolor{gray}{$\pm$.019}} \textcolor{green}{(.781)} &
        .178{\small \textcolor{gray}{$\pm$.010}} \textcolor{green}{(.5)}
        \\
        \midrule
        
        Beutel et al.~\cite{beutel_arxiv_2017}
        & .827{\small \textcolor{gray}{$\pm$.012}} & &
        \textbf{.800{\small \textcolor{gray}{$\pm$.003}}} & &
        .187{\small \textcolor{gray}{$\pm$.081}} \textcolor{green}{(.654)} &
        .172{\small \textcolor{gray}{$\pm$.114}} \textcolor{green}{(1.)} &
        .118{\small \textcolor{gray}{$\pm$.052}} \textcolor{green}{(1.)} &
        .145{\small \textcolor{gray}{$\pm$.081}} \textcolor{green}{(1.)} &
        \textbf{.099{\small \textcolor{gray}{$\pm$.032}} \textcolor{green}{(.219)}} &
        .246{\small \textcolor{gray}{$\pm$.057}} \textcolor{green}{(.781)} &
        .173{\small \textcolor{gray}{$\pm$.039}} \textcolor{green}{(.5)}
        \\
        
        LFR~\cite{zemel_ICML_2013_LFR}
        & .667{\small \textcolor{gray}{$\pm$.167}} & &
        .648{\small \textcolor{gray}{$\pm$.148}} & &
        .097{\small \textcolor{gray}{$\pm$.080}} \textcolor{green}{(.334)} &
        .141{\small \textcolor{gray}{$\pm$.059}} \textcolor{green}{(1.)} &
        .058{\small \textcolor{gray}{$\pm$.054}} \textcolor{green}{(.740)} &
        .099{\small \textcolor{gray}{$\pm$.007}} \textcolor{green}{(.870)} &
        .122{\small \textcolor{gray}{$\pm$.021}} \textcolor{green}{(.219)} &
        .188{\small \textcolor{gray}{$\pm$.058}} \textcolor{green}{(.553)} &
        .155{\small \textcolor{gray}{$\pm$.019}} \textcolor{green}{(.386)} 
        \\
        
        VFAE~\cite{louizos_arxiv_2015}
        & .745{\small \textcolor{gray}{$\pm$.124}} & &
        .727{\small \textcolor{gray}{$\pm$.114}} & &
        .145{\small \textcolor{gray}{$\pm$.015}} \textcolor{green}{(.490)} &
        .105{\small \textcolor{gray}{$\pm$.029}} \textcolor{green}{(1.)} &
        .091{\small \textcolor{gray}{$\pm$.010}} \textcolor{green}{(.917)} &
        .098{\small \textcolor{gray}{$\pm$.016}} \textcolor{green}{(.959)} &
        .106{\small \textcolor{gray}{$\pm$.007}} \textcolor{green}{(.219)} &
        .247{\small \textcolor{gray}{$\pm$.025}} \textcolor{green}{(.709)} &
        .176{\small \textcolor{gray}{$\pm$.015}} \textcolor{green}{(.464)}
        \\
        
        FFVAE~\cite{creager_ICML_2019}
        & .628{\small \textcolor{gray}{$\pm$.006}} & &
        .610{\small \textcolor{gray}{$\pm$.006}} & &
        .044{\small \textcolor{gray}{$\pm$.012}} \textcolor{green}{(.256)} &
        .055{\small \textcolor{gray}{$\pm$.022}} \textcolor{green}{(1.)} &
        .026{\small \textcolor{gray}{$\pm$.010}} \textcolor{green}{(.651)} &
        .040{\small \textcolor{gray}{$\pm$.015}} \textcolor{green}{(.826)} &
        .100{\small \textcolor{gray}{$\pm$.001}} \textcolor{green}{(.219)} &
        .124{\small \textcolor{gray}{$\pm$.010}} \textcolor{green}{(.475)} &
        .112{\small \textcolor{gray}{$\pm$.005}} \textcolor{green}{(.347)}
        \\
        \midrule

        % ours ({\small $l_{rq}=0.5$})
        % & .696{\small \textcolor{gray}{$\pm$.045}} & &
        % .781{\small \textcolor{gray}{$\pm$.010}} & &
        % .110{\small \textcolor{gray}{$\pm$.031}} \textcolor{green}{(.392)} & 
        % .044{\small \textcolor{gray}{$\pm$.032}} \textcolor{green}{(.392)} & 
        % .053{\small \textcolor{gray}{$\pm$.025}} \textcolor{green}{(.392)} & 
        % .049{\small \textcolor{gray}{$\pm$.028}} \textcolor{green}{(.392)} & 
        % .101{\small \textcolor{gray}{$\pm$.002}} \textcolor{green}{(.392)} & 
        % .165{\small \textcolor{gray}{$\pm$.038}} \textcolor{green}{(.392)} & 
        % .133{\small \textcolor{gray}{$\pm$.020}} \textcolor{green}{(.392)} 
        % \\

        % ours ({\small $l_{rq}=1$})
        % & .312{\small \textcolor{gray}{$\pm$.010}} & &
        % .456{\small \textcolor{gray}{$\pm$.003}} & &
        % .101{\small \textcolor{gray}{$\pm$.026}} \textcolor{green}{(.312)} & 
        % .032{\small \textcolor{gray}{$\pm$.029}} \textcolor{green}{(1.)} & 
        % .046{\small \textcolor{gray}{$\pm$.022}} \textcolor{green}{(.715)} & 
        % .039{\small \textcolor{gray}{$\pm$.023}} \textcolor{green}{(.857)} & 
        % .015{\small \textcolor{gray}{$\pm$.004}} \textcolor{green}{(.316)} \\

        Ours ({\small $l_{rq}=2$})
        & .611{\small \textcolor{gray}{$\pm$.040}} & &
        .733{\small \textcolor{gray}{$\pm$.042}} & &
        .056{\small \textcolor{gray}{$\pm$.036}} \textcolor{green}{(.222)} &
        \textbf{.035{\small \textcolor{gray}{$\pm$.018}} \textcolor{green}{(1.)}} &
        .021{\small \textcolor{gray}{$\pm$.020}} \textcolor{green}{(.612)} &
        .028{\small \textcolor{gray}{$\pm$.014}} \textcolor{green}{(.806)} &
        .101{\small \textcolor{gray}{$\pm$.002}} \textcolor{green}{(.219)} &
        .119{\small \textcolor{gray}{$\pm$.025}} \textcolor{green}{(.441)} &
        .110{\small \textcolor{gray}{$\pm$.013}} \textcolor{green}{(.330)}
        \\

        Ours ({\small $l_{rq}=2\sqrt{2}$})
        & \textbf{.573{\small \textcolor{gray}{$\pm$.029}}} & &
        .681{\small \textcolor{gray}{$\pm$.038}} & &
        \textbf{.029{\small \textcolor{gray}{$\pm$.022}} \textcolor{green}{(.146)}} &
        .038{\small \textcolor{gray}{$\pm$.030}} \textcolor{green}{(1.)} &
        \textbf{.010{\small \textcolor{gray}{$\pm$.008}} \textcolor{green}{(.526)}} &
        \textbf{.024{\small \textcolor{gray}{$\pm$.018}} \textcolor{green}{(.763)}} &
        .100{\small \textcolor{gray}{$\pm$.001}} \textcolor{green}{(.219)} &
        \textbf{.104{\small \textcolor{gray}{$\pm$.004}} \textcolor{green}{(.365)}} &
        \textbf{.102{\small \textcolor{gray}{$\pm$.002}} \textcolor{green}{(.292)}}
        \\
        \bottomrule
    \end{tabular}}
    % \vspace{-2ex}
\end{table*}

We conduct experiments to investigate both our theoretical results and the proposed fair representation learning approach.
% We aim to verify that (1) our proposed method learns both fair and discriminative representations, (2) using the learned representation, seven considered group fairness notions can be approximately guaranteed simultaneously, and (3) the quantitative fairness guarantees computed from our theoretical results are valid.
% Experiments on Adult and MPI3D (LFW, VGGFace2, and CelebA) are repeated 10 (5) times.
% Results are reported with mean and std.

%--------------------
\subsection{Datasets}

We experiment on five datasets, namely \textbf{Adult}, \textbf{MPI3D}, \textbf{VGGFace2}, \textbf{Labeled Faces in the Wild (LFW)}, and \textbf{CelebA}. 
Sample images are shown in \figref{fig:dataset_examples}.

\textbf{Adult}~\cite{kohavi_KDD_1996} is a tabular dataset extracted from the U.S. census data. 
It contains attributes such as age, gender, occupation, and education.
The prediction task is income ($\geq$50K or $<$50K) and the sensitive attribute is gender.
% We remove duplicates, NaN values, and entries with missing data before use.
We randomly split data into training, validation, and test sets, which contain 13,602, 13,602, and 13,603 instances, respectively.

\textbf{MPI3D}~\cite{gondal_NeurIPS_2019_MPI3D} contains images captured in a controlled platform, where an object is mounted on the tip of a manipulator.
We preprocess the labels so that the four attributes-of-interest become binary.
They are $\text{object color}\!\in\!\{\text{white}, \text{red}\}$, $\text{shape}\!\in\!\{\text{cube}, \text{pyramid}\}$, $\text{size}\!\in\!\{\text{small}, \text{big}\}$, and $\text{background color}\!\in\!\{\text{purple}, \text{sea green}\}$.
Object color is regarded as the sensitive attribute.
We sample 10\% of the data, \ie~7,680 images, each for training/validation/test set.

\textbf{VGGFace2}~\cite{cao_FG_2018_VGGFace2} is a face image dataset with large inter- and intra-identity variations.
Gender is the sensitive attribute.
We follow the preprocessing in \cite{liu2017sphereface} to obtain normalized face images.
The training set contains 3,140,709 images of 8631 identities.
% The test set contains 168,148 images of 500 identities.
We randomly sample 50 images per identity (total of 500 identities) from the original test set as our test set.
\textbf{LFW}~\cite{huang_2008_LFW} is a benchmark face image dataset.
We use the same preprocessing to obtain normalized face images.
We only use the test set to evaluate on the face verification task.

Since both VGGFace2 and LFW are without facial attributes, we additionally evaluate on \textbf{CelebA} dataset~\cite{liu2015deep}, where each image is labelled with 40 attributes including gender. We apply the same preprocessing to normalize face images.
The training and evaluation set contains 162,770 and 19,867 images, respectively.

\begin{figure}[!t]
    \begin{minipage}{1.0\linewidth}
        \begin{minipage}{1.0\linewidth}
        \centering
            \includegraphics[width=0.13\linewidth]{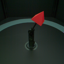}
            \includegraphics[width=0.13\linewidth]{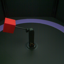}
            \includegraphics[width=0.13\linewidth]{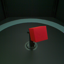}
            \includegraphics[width=0.13\linewidth]{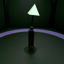}
            \includegraphics[width=0.13\linewidth]{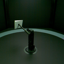}
            \includegraphics[width=0.13\linewidth]{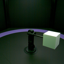}
            \includegraphics[width=0.13\linewidth]{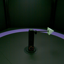}
        \end{minipage}\vspace{0.5ex}\\
        \begin{minipage}{1.0\linewidth}
        \centering
            \includegraphics[width=0.13\linewidth]{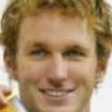}
            \includegraphics[width=0.13\linewidth]{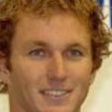}
            \includegraphics[width=0.13\linewidth]{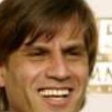}
            \includegraphics[width=0.13\linewidth]{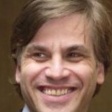}
            \includegraphics[width=0.13\linewidth]{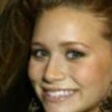}
            \includegraphics[width=0.13\linewidth]{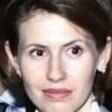}
            \includegraphics[width=0.13\linewidth]{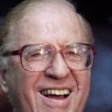}
        \end{minipage}\vspace{0.5ex}\\
        \begin{minipage}{1.0\linewidth}
        \centering
            \includegraphics[width=0.13\linewidth]{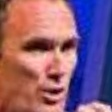} 
            \includegraphics[width=0.13\linewidth]{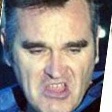}
            \includegraphics[width=0.13\linewidth]{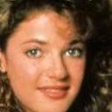}
            \includegraphics[width=0.13\linewidth]{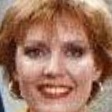} 
            \includegraphics[width=0.13\linewidth]{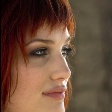}
            \includegraphics[width=0.13\linewidth]{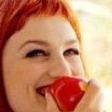}
            \includegraphics[width=0.13\linewidth]{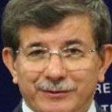}
        \end{minipage} \vspace{0.5ex} \\
        \begin{minipage}{1.0\linewidth}
        \centering
            \includegraphics[width=0.13\linewidth]{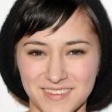} 
            \includegraphics[width=0.13\linewidth]{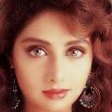}
            \includegraphics[width=0.13\linewidth]{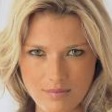}
            \includegraphics[width=0.13\linewidth]{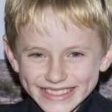} 
            \includegraphics[width=0.13\linewidth]{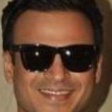}
            \includegraphics[width=0.13\linewidth]{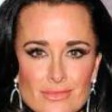}
            \includegraphics[width=0.13\linewidth]{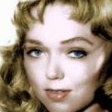}
        \end{minipage}
    \end{minipage}
    \caption{Sample images (from top to bottom) of MPI3D~\cite{gondal_NeurIPS_2019_MPI3D}, LFW~\cite{huang_2008_LFW}, VGGFace2~\cite{cao_FG_2018_VGGFace2}, and CelebA~\cite{liu2015deep}.}
    \label{fig:dataset_examples}
    \vspace{-2.5ex}
\end{figure}

%-----------------------------
\subsection{Evaluation}

We simulate the dataset owner-user scenario in our experiments.
We first learn representation using pretext loss, without knowing what prediction tasks the representation will be used for.
We then train predictors on top of the learned representation without any fairness constraint.
We evaluate on four aspects: the fairness and discriminativeness of the learned representation, the fairness of downstream predictions, and the theoretical fairness guarantees.

The \textbf{representation fairness} is evaluated by posing an adversary to predict the sensitive attribute.
We directly report the balanced accuracy, the lower the fairer, instead of the fairness coefficient $\alpha$.
Since $\alpha$ is equivalent to the \emph{Bayes-optimal} balanced accuracy, stronger adversary will produce a more accurate (and honest) evaluation. 
Our considered adversary is a 7-layer fully-connected network (7-Net) with width 8$\times$ the input dimension and skip-layer connections.
We validate in the supplementary material that 7-Net is considerably stronger than the adversarial models considered in the fair representation literature~\cite{feldman_SIGKDD_2015,xie_NeurIPS_2017,louizos_arxiv_2015}. 
Deeper and wider networks only have marginal improvements.

We use the same 7-Net as downstream predictors.
The \textbf{representation discriminativeness} is evaluated by the downstream predictions' balanced accuracy, which is an estimate of the discriminativeness coefficient $\beta$.
In order to evaluate if the learned representation leads to fair outcomes, we report \textbf{the predictions' fairness} w.r.t. seven group fairness notions (see~\tabref{tab:fairness_definitions}).
We also report the \textbf{theoretical fairness guarantees}, which are obtained by solving linear programs defined by the base rates $a$, $b$, $r$, and $\alpha$, $\beta$.

% In order to investigate whether the learned representation is discriminative (for specific tasks) and whether it leads to fair prediction, we pose vendors who use the learned representation to predict target labels.
% The vendors only maximize accuracy and do not care about fairness.
% We use the same 7-Net as predictors.
% We report the balaced accuracy of 
% The \textbf{PD coefficient} is estimated using the balanced accuracy of the vendor's predictions, as explained in \secref{sec:estimate_PC_PD}.
% We report these vendors' \textbf{fairness} w.r.t. the five group fairness notions.
% The \textbf{fairness guarantees} are calculated as discussed in \secref{sec:guarantee_fairness}.

%-------------------------------
\subsection{Experiment on Adult}
%!TEX root = ../main.tex
\begin{table*}[t]
    \definecolor{green}{rgb}{0,0.45,0}
    \centering
    \caption{Our Approach on Uncorrelated MPI3D Dataset Compared with Representations Learned with Only Reconstruction and Existing Fair Representation Learning Methods. The Sens. Attr. is Color. Notations are Consistent with \tabref{tab:Adult}.}
    % \vspace{-1ex}
    \label{tab:MPI3D_uncorrelated}
    \adjustbox{width=1\textwidth}{
    \begin{tabular}{l || l || l c l l l l l l l}
        \toprule
        
        \multirow{2}{*}{Methods} & \multirow{2}{*}{\makecell[t]{Sen. Attr.'s\\BA ($\downarrow$)}}  & \multirow{2}{*}{Tar. Attr.} & \multirow{2}{*}{BA ($\uparrow$)} 
        & \multicolumn{6}{c}{Fairness ($\downarrow$)} \\ 
        \cmidrule{5-11}
        & & & & SP & DOpp & DR & DOdds & DPC & DNC & DC \\
        \midrule
        \multirow{3}{*}{Reconstruction only} 
        & \multirow{3}{*}{.935{\small \textcolor{gray}{$\pm$.021}}} 
        & Shape
        & .804{\small \textcolor{gray}{$\pm$.014}} 
        & .053{\small \textcolor{gray}{$\pm$.040}} \textcolor{green}{(.870)} & 
        .064{\small \textcolor{gray}{$\pm$.036}} \textcolor{green}{(1.)} & 
        .056{\small \textcolor{gray}{$\pm$.037}} \textcolor{green}{(1.)} &
        .060{\small \textcolor{gray}{$\pm$.033}} \textcolor{green}{(1.)} &
        .081{\small \textcolor{gray}{$\pm$.018}} \textcolor{green}{(.5)} &
        .074{\small \textcolor{gray}{$\pm$.020}} \textcolor{green}{(.5)} &
        .077{\small \textcolor{gray}{$\pm$.015}} \textcolor{green}{(.5)} \\
        & & Size
        & .805{\small \textcolor{gray}{$\pm$.018}} 
        & .094{\small \textcolor{gray}{$\pm$.042}} \textcolor{green}{(.870)} & 
        .113{\small \textcolor{gray}{$\pm$.053}} \textcolor{green}{(1.)} &
        .074{\small \textcolor{gray}{$\pm$.034}} \textcolor{green}{(1.)} &
        .094{\small \textcolor{gray}{$\pm$.042}} \textcolor{green}{(1.)} &
        .113{\small \textcolor{gray}{$\pm$.053}} \textcolor{green}{(.5)} &
        .085{\small \textcolor{gray}{$\pm$.035}} \textcolor{green}{(.5)} &
        .099{\small \textcolor{gray}{$\pm$.043}} \textcolor{green}{(.5)} \\
        & & Bkgd color
        & .986{\small \textcolor{gray}{$\pm$.008}} 
        & .018{\small \textcolor{gray}{$\pm$.028}} \textcolor{green}{(.870)} & 
        .047{\small \textcolor{gray}{$\pm$.035}} \textcolor{green}{(1.)} & 
        .034{\small \textcolor{gray}{$\pm$.020}} \textcolor{green}{(1.)} & 
        .040{\small \textcolor{gray}{$\pm$.026}} \textcolor{green}{(.898)} &
        .059{\small \textcolor{gray}{$\pm$.033}} \textcolor{green}{(.5)} &
        .040{\small \textcolor{gray}{$\pm$.021}} \textcolor{green}{(.5)} &
        .049{\small \textcolor{gray}{$\pm$.025}} \textcolor{green}{(.449)} \\
        \midrule

        \multirow{3}{*}{Beutel et al.~\cite{beutel_arxiv_2017}} 
        & \multirow{3}{*}{.918{\small \textcolor{gray}{$\pm$.057}}} 
        & Shape
        & .732{\small \textcolor{gray}{$\pm$.025}} 
        & .077{\small \textcolor{gray}{$\pm$.027}} \textcolor{green}{(.836)} 
        & .052{\small \textcolor{gray}{$\pm$.027}} \textcolor{green}{(1.)} 
        & .107{\small \textcolor{gray}{$\pm$.048}} \textcolor{green}{(1.)} 
        & .080{\small \textcolor{gray}{$\pm$.026}} \textcolor{green}{(1.)} 
        & .109{\small \textcolor{gray}{$\pm$.035}} \textcolor{green}{(.5)}
        & .142{\small \textcolor{gray}{$\pm$.051}} \textcolor{green}{(.5)}
        & .126{\small \textcolor{gray}{$\pm$.041}} \textcolor{green}{(.5)} \\
        & & Size 
        & .767{\small \textcolor{gray}{$\pm$.025}} 
        & .094{\small \textcolor{gray}{$\pm$.039}} \textcolor{green}{(.836)} 
        & .143{\small \textcolor{gray}{$\pm$.066}} \textcolor{green}{(1.)} 
        & .065{\small \textcolor{gray}{$\pm$.033}} \textcolor{green}{(1.)} 
        & .104{\small \textcolor{gray}{$\pm$.042}} \textcolor{green}{(1.)} 
        & .167{\small \textcolor{gray}{$\pm$.074}} \textcolor{green}{(.5)}
        & .115{\small \textcolor{gray}{$\pm$.041}} \textcolor{green}{(.5)}
        & .141{\small \textcolor{gray}{$\pm$.055}} \textcolor{green}{(.5)}\\
        & & Bkgd color
        & .936{\small \textcolor{gray}{$\pm$.027}} 
        & .013{\small \textcolor{gray}{$\pm$.012}} \textcolor{green}{(.836)} 
        & .089{\small \textcolor{gray}{$\pm$.058}} \textcolor{green}{(1.)} 
        & .070{\small \textcolor{gray}{$\pm$.060}} \textcolor{green}{(1.)} 
        & .079{\small \textcolor{gray}{$\pm$.058}} \textcolor{green}{(.964)} 
        & .102{\small \textcolor{gray}{$\pm$.066}} \textcolor{green}{(.5)}
        & .090{\small \textcolor{gray}{$\pm$.059}} \textcolor{green}{(.5)}
        & .096{\small \textcolor{gray}{$\pm$.060}} \textcolor{green}{(.482)}\\
        \midrule

        \multirow{3}{*}{LFR~\cite{zemel_ICML_2013_LFR}} 
         & \multirow{3}{*}{.809{\small \textcolor{gray}{$\pm$.102}}} 
        & Shape
        & .726{\small \textcolor{gray}{$\pm$.078}} 
        & .087{\small \textcolor{gray}{$\pm$.049}} \textcolor{green}{(.618)} 
        & .084{\small \textcolor{gray}{$\pm$.042}} \textcolor{green}{(1.)} 
        & .096{\small \textcolor{gray}{$\pm$.050}} \textcolor{green}{(1.)} 
        & .090{\small \textcolor{gray}{$\pm$.045}} \textcolor{green}{(1.)} 
        & .083{\small \textcolor{gray}{$\pm$.036}} \textcolor{green}{(.5)}
        & .105{\small \textcolor{gray}{$\pm$.040}} \textcolor{green}{(.5)}
        & .094{\small \textcolor{gray}{$\pm$.037}} \textcolor{green}{(.5)} \\
        & & Size 
        & .733{\small \textcolor{gray}{$\pm$.071}} 
        & .125{\small \textcolor{gray}{$\pm$.068}} \textcolor{green}{(.618)} 
        & .141{\small \textcolor{gray}{$\pm$.074}} \textcolor{green}{(1.)} 
        & .109{\small \textcolor{gray}{$\pm$.064}} \textcolor{green}{(1.)} 
        & .125{\small \textcolor{gray}{$\pm$.068}} \textcolor{green}{(1.)} 
        & .115{\small \textcolor{gray}{$\pm$.050}} \textcolor{green}{(.5)}
        & .104{\small \textcolor{gray}{$\pm$.047}} \textcolor{green}{(.5)}
        & .110{\small \textcolor{gray}{$\pm$.048}} \textcolor{green}{(.5)}\\
        & & Bkgd color
        & .909{\small \textcolor{gray}{$\pm$.061}} 
        & .005{\small \textcolor{gray}{$\pm$.005}} \textcolor{green}{(.618)} 
        & .052{\small \textcolor{gray}{$\pm$.031}} \textcolor{green}{(1.)} 
        & .045{\small \textcolor{gray}{$\pm$.026}} \textcolor{green}{(1.)} 
        & .049{\small \textcolor{gray}{$\pm$.028}} \textcolor{green}{(.8)} 
        & .052{\small \textcolor{gray}{$\pm$.023}} \textcolor{green}{(.5)}
        & .050{\small \textcolor{gray}{$\pm$.023}} \textcolor{green}{(.5)}
        & .051{\small \textcolor{gray}{$\pm$.021}} \textcolor{green}{(.4)}\\
        \midrule

        \multirow{3}{*}{VFAE~\cite{louizos_arxiv_2015}} 
        & \multirow{3}{*}{.836{\small \textcolor{gray}{$\pm$.062}}} 
        & Shape
        & .668{\small \textcolor{gray}{$\pm$.059}} 
        & .122{\small \textcolor{gray}{$\pm$.027}} \textcolor{green}{(.672)} 
        & .094{\small \textcolor{gray}{$\pm$.024}} \textcolor{green}{(1.)} 
        & .149{\small \textcolor{gray}{$\pm$.039}} \textcolor{green}{(1.)} 
        & .122{\small \textcolor{gray}{$\pm$.027}} \textcolor{green}{(1.)} 
        & .072{\small \textcolor{gray}{$\pm$.009}} \textcolor{green}{(.5)}
        & .108{\small \textcolor{gray}{$\pm$.023}} \textcolor{green}{(.5)}
        & .090{\small \textcolor{gray}{$\pm$.014}} \textcolor{green}{(.5)}\\
        & & Size 
        & .715{\small \textcolor{gray}{$\pm$.049}} 
        & .160{\small \textcolor{gray}{$\pm$.040}} \textcolor{green}{(.672)} 
        & .204{\small \textcolor{gray}{$\pm$.052}} \textcolor{green}{(1.)} 
        & .116{\small \textcolor{gray}{$\pm$.037}} \textcolor{green}{(1.)} 
        & .160{\small \textcolor{gray}{$\pm$.040}} \textcolor{green}{(1.)} 
        & .085{\small \textcolor{gray}{$\pm$.021}} \textcolor{green}{(.5)}
        & .076{\small \textcolor{gray}{$\pm$.018}} \textcolor{green}{(.5)}
        & .081{\small \textcolor{gray}{$\pm$.017}} \textcolor{green}{(.5)}\\
        & & Bkgd color
        & .821{\small \textcolor{gray}{$\pm$.034}} 
        & .012{\small \textcolor{gray}{$\pm$.008}} \textcolor{green}{(.672)} 
        & .113{\small \textcolor{gray}{$\pm$.058}} \textcolor{green}{(1.)} 
        & .093{\small \textcolor{gray}{$\pm$.074}} \textcolor{green}{(1.)} 
        & .103{\small \textcolor{gray}{$\pm$.066}} \textcolor{green}{(1.)} 
        & .058{\small \textcolor{gray}{$\pm$.028}} \textcolor{green}{(.5)}
        & .071{\small \textcolor{gray}{$\pm$.046}} \textcolor{green}{(.5)}
        & .064{\small \textcolor{gray}{$\pm$.037}} \textcolor{green}{(.5)}\\
        \midrule

        \multirow{3}{*}{FFVAE~\cite{creager_ICML_2019}} 
        & \multirow{3}{*}{.958{\small \textcolor{gray}{$\pm$.013}}} 
        & Shape
        & .671{\small \textcolor{gray}{$\pm$.024}} 
        & .026{\small \textcolor{gray}{$\pm$.020}} \textcolor{green}{(.916)} 
        & .025{\small \textcolor{gray}{$\pm$.017}} \textcolor{green}{(1.)} 
        & .064{\small \textcolor{gray}{$\pm$.025}} \textcolor{green}{(1.)} 
        & .044{\small \textcolor{gray}{$\pm$.014}} \textcolor{green}{(1.)} 
        & \textbf{.037{\small \textcolor{gray}{$\pm$.007}} \textcolor{green}{(.5)}}
        & .060{\small \textcolor{gray}{$\pm$.014}} \textcolor{green}{(.5)}
        & .049{\small \textcolor{gray}{$\pm$.009}} \textcolor{green}{(.5)} \\
        & & Size 
        & .684{\small \textcolor{gray}{$\pm$.019}} 
        & .022{\small \textcolor{gray}{$\pm$.014}} \textcolor{green}{(.916)} 
        & .026{\small \textcolor{gray}{$\pm$.017}} \textcolor{green}{(1.)} 
        & .029{\small \textcolor{gray}{$\pm$.022}} \textcolor{green}{(1.)} 
        & .027{\small \textcolor{gray}{$\pm$.012}} \textcolor{green}{(1.)} 
        & \textbf{.037{\small \textcolor{gray}{$\pm$.006}} \textcolor{green}{(.5)}}
        & .039{\small \textcolor{gray}{$\pm$.010}} \textcolor{green}{(.5)}
        & .038{\small \textcolor{gray}{$\pm$.004}} \textcolor{green}{(.5)}\\
        & & Bkgd color
        & .686{\small \textcolor{gray}{$\pm$.027}} 
        & .014{\small \textcolor{gray}{$\pm$.010}} \textcolor{green}{(.916)} 
        & .232{\small \textcolor{gray}{$\pm$.060}} \textcolor{green}{(1.)} 
        & .249{\small \textcolor{gray}{$\pm$.053}} \textcolor{green}{(1.)} 
        & .241{\small \textcolor{gray}{$\pm$.055}} \textcolor{green}{(1.)} 
        & .119{\small \textcolor{gray}{$\pm$.026}} \textcolor{green}{(.5)}
        & .159{\small \textcolor{gray}{$\pm$.038}} \textcolor{green}{(.5)}
        & .139{\small \textcolor{gray}{$\pm$.031}} \textcolor{green}{(.5)}\\
        \midrule
        
        \multirow{3}{*}{Ours} 
        & \multirow{3}{*}{\textbf{.507{\small \textcolor{gray}{$\pm$.012}}}}
        & Shape
        & \textbf{.846{\small \textcolor{gray}{$\pm$.030}}}
        & \textbf{.006{\small \textcolor{gray}{$\pm$.004}} \textcolor{green}{(.014)}}
        & \textbf{.007{\small \textcolor{gray}{$\pm$.005}} \textcolor{green}{(.336)}}
        & \textbf{.008{\small \textcolor{gray}{$\pm$.005}} \textcolor{green}{(.336)}}
        & \textbf{.007{\small \textcolor{gray}{$\pm$.004}} \textcolor{green}{(.322)}} 
        & \textbf{.037{\small \textcolor{gray}{$\pm$.004}} \textcolor{green}{(.168)}}
        & \textbf{.037{\small \textcolor{gray}{$\pm$.006}} \textcolor{green}{(.168)}}
        & \textbf{.037{\small \textcolor{gray}{$\pm$.005}} \textcolor{green}{(.161)}}\\
        & & Size 
        & \textbf{.868{\small \textcolor{gray}{$\pm$.034}}} 
        & \textbf{.009{\small \textcolor{gray}{$\pm$.007}} \textcolor{green}{(.014)}} 
        & \textbf{.012{\small \textcolor{gray}{$\pm$.008}} \textcolor{green}{(.292)}} 
        & \textbf{.008{\small \textcolor{gray}{$\pm$.006}} \textcolor{green}{(.292)}} 
        & \textbf{.010{\small \textcolor{gray}{$\pm$.007}} \textcolor{green}{(.278)}} 
        & \textbf{.037{\small \textcolor{gray}{$\pm$.009}} \textcolor{green}{(.146)}}
        & \textbf{.034{\small \textcolor{gray}{$\pm$.008}} \textcolor{green}{(.146)}}
        & \textbf{.036{\small \textcolor{gray}{$\pm$.008}} \textcolor{green}{(.139)}}\\
        & & Bkgd color
        & \textbf{.961{\small \textcolor{gray}{$\pm$.042}}} 
        & \textbf{.002{\small \textcolor{gray}{$\pm$.002}} \textcolor{green}{(.014)}}
        & \textbf{.004{\small \textcolor{gray}{$\pm$.003}} \textcolor{green}{(.106)}} 
        & \textbf{.003{\small \textcolor{gray}{$\pm$.002}} \textcolor{green}{(.106)}} 
        & \textbf{.004{\small \textcolor{gray}{$\pm$.002}} \textcolor{green}{(.092)}} 
        & \textbf{.019{\small \textcolor{gray}{$\pm$.011}} \textcolor{green}{(.053)}} 
        & \textbf{.018{\small \textcolor{gray}{$\pm$.012}} \textcolor{green}{(.053)}}
        & \textbf{.019{\small \textcolor{gray}{$\pm$.012}} \textcolor{green}{(.046)}}\\
        \bottomrule
    \end{tabular}}
\end{table*}

\noindent\textbf{Experimental Settings~}
We consider \textit{gender} and \textit{income} as the sensitive and target attributes, respectively.
Other attributes are posed as the features.
For representation learning, we use the same 7-Net as the network architecture and reconstruction as the pretext loss.
The representation dimension is 16 and the final weight $\lambda$ is 200.
The shape parameter for the rational quadratic kernel is 2 and the length scale is varied from $\{2,2\sqrt{2}\}$.
We train the model with SGD optimizer for 110 epochs with batch size 256.
The initial learning rate is 1 and is divided by 10 at 70th and 90th epochs.

\vspace{1ex}
\noindent\textbf{Results~}
\tabref{tab:Adult} compares our proposed approach with the baseline and existing fair representation methods.
LFR~\cite{zemel_ICML_2013_LFR} is a clustering-based method.
Beutel~\etal~\cite{beutel_arxiv_2017} uses adversarial training.
VFAE~\cite{louizos_arxiv_2015} and FFVAE~\cite{creager_ICML_2019} are VAE-based methods and derive information-theoretic regularizations.
They have been shown effective for learning fair representation when the downstream tasks are known (\ie~using the target labels) and for specific fairness notions.
We replace the prediction task therein with the same pretext loss, \ie~reconstruction loss, to adapt to our setting where target labels are unavailable.

The baseline learns discriminative but unfair representations, and thus is unable to guarantee that downstream predictions are fair.
The sensitive attribute is predictable with 84.6\% balanced accuracy and the downstream predictions are indeed shown to be unfair.
Existing methods either fail to anonymize the sensitive attribute against the postulated adversary or incur a large cost in discriminativeness.
For Beutel~\etal, LFR, and VFAE, the downstream predictions' fairness is only slightly improved.
FFVAE achieves the best fairness among existing methods, but incurs a large reduction in the target attribute's prediction accuracy.
% cost in PD coefficient, meaning that the prediction accuracy for income prediction is significantly reduced.
% The result shows that existing methods do rely on the target labels to ensure the learned representation is both fair and discriminative.
% They do not work well with unknown downstream prediction tasks.

Compared with existing fair representation learning approaches \emph{for unknown prediction tasks}, our approach learns fair representation with a smaller cost in discriminativeness.\footnote{This is unavoidable because gender and income are correlated. In Adult dataset, $|a-b|=|.316-.121|=.195$ and, by~\thref{thm:tradeoff}, it incurs a trade-off between representation fairness and discriminativeness.}
The sensitive attribute can only be predicted with 57.3\% balanced accuracy, close to random guessing (50\%).
The downstream predictions become fairer w.r.t. all fairness notions except DPC, which remains at the same level.
This is in agreement with our theoretical results because DPC, DNC, and DC are also lower bounded by $\frac{1}{2}|a-b|=.098$ (\thref{thm:DPC}) and our approach indeed achieves this lower bound.
Compared with FFVAE, which achieves the best fairness among existing methods, ours ($l_{rq}=2\sqrt{2}$) learns fairer representations, incurs a smaller cost in discriminativeness, and further improves downstream predictions' fairness.

In~\tabref{tab:Adult}, we also report the quantitative fairness guarantees computed from our theoretical results.
These guarantees are all valid and are indicative of the downstream predictions' actual fairness.
% The downstream predictions' actual fairness is improved using fairer and more discriminative representations, which guarantees better worst-case fairness by our theoretical results.
However, some of the worst-case fairness guarantees (\eg~DOpp) aren't very useful.\footnote{By useful we mean comparable to the 80\% rule recommended by US EEOC~\cite{equal_1978_department} which says a selection rate for any group which is less than 80\% of that of the highest group is generally regarded as evidence of disparate impact.}
But for DOpp, we still observe an improvement on the predictions' actual fairness.

% We show in the experiment on MPI3D dataset that practically useful fairness guarantees are possible when the target and sensitive attributes are less correlated.

%-------------------------------
\subsection{Experiment on MPI3D}

\noindent\textbf{Experimental Settings~}
We consider \emph{object color} as the sensitive attribute, and \emph{object shape, size}, and \emph{background color} as the target attributes.
We use ResNet-34~\cite{he_CVPR_2016_resnet} as the network architecture and reconstruction as the pretext task.
% The decoder additionally takes object color as input during training.
Representation dimension is 32 and the final $\lambda$ is 10.
The lengthscale and shape parameter for the rational quadratic kernel are 1 and 4, respectively.
We train the model with SGD optimizer for 150 epochs with batch size 256. 
The initial learning rate is 0.1 and is divided by 10 at 80th, 110th, and 130th epochs.

%%---------------------
\vspace{1ex}
\noindent\textbf{Uncorrelated MPI3D~}
\tabref{tab:MPI3D_uncorrelated} shows results on uncorrelated MPI3D dataset, where the attributes are mutually independent.
For existing methods, we similarly replace the prediction task with reconstruction when learning representation.

Although the considered attributes are all mutually independent, the baseline still produces slightly discriminative predictions.
This is because it does not anonymize the sensitive attribute, which can be falsely picked up by downstream predictors.
Existing methods, similar to the findings on Adult dataset, either fail to anonymize the sensitive attribute or incur a large reduction in the target attributes' prediction accuracy.
Consequently, they also do not improve fairness for all seven fairness notions and for all three target attributes.
For example, FFVAE improves fairness for the target attributes shape and size, but fails for background color.

\begin{table*}[t]
    \centering
    \caption{Our Approach on Face Datasets Compared with the Baseline and Existing Gender-blind Face Representation Learning Methods.
    For Existing Methods, Their Gender Predictability Is Taken From the Respective Sources and Is Not Directly Comparable Due to Different Evaluation Details (See Footnote).
    % For Face Verification, We Report True Accept Rate (TAR) at a Specific False Accept Rate (FAR). 
    % For Face Identification, We Report Top-k Accuracy. 
    % For Gender Prediction, We Report Various Adversarial Classifiers' Balanced Accuracy, and Existing Methods' Reported Gender Predictability.
    % Evaluation Details are in the Footnote.
    }
    \label{tab:face}
    % \vspace{-1ex}
    \setlength\extrarowheight{-1pt}
    \adjustbox{width=\textwidth}{
    \begin{tabular}{l c c c c c c c c c}
        \toprule
        \multirow{2}{*}{Methods} & \multicolumn{4}{c}{Face verification on LFW (\%, $\uparrow$)} & & \multicolumn{4}{c}{Face identification on VGGFace2 (\%, $\uparrow$)}\\ 
        \cmidrule{2-5} \cmidrule{7-10}
        & TAR@1e-5 & TAR@1e-4 & TAR@1e-3 & TAR@1e-2 & & Top-1 & Top-10 & Top-100 & Top-1k \\ 
        \midrule
        ArcFace only  &
        82.8{\small\textcolor{gray}{$\pm$4.0}} &
        82.7{\small\textcolor{gray}{$\pm$3.8}} & 
        94.4{\small\textcolor{gray}{$\pm$1.3}} &
        99.0{\small\textcolor{gray}{$\pm$0.2}} & & 
        49.9{\small\textcolor{gray}{$\pm$0.3}} &
        66.4{\small\textcolor{gray}{$\pm$0.2}} &
        81.5{\small\textcolor{gray}{$\pm$0.1}} &
        92.8{\small\textcolor{gray}{$\pm$0.1}} 
        \\
        \rowcolor[HTML]{D0D0D0} Ours &
        69.2{\small \textcolor{gray}{$\pm$3.5}} &
        69.4{\small \textcolor{gray}{$\pm$3.4}} &
        86.8{\small \textcolor{gray}{$\pm$2.4}} &
        96.2{\small \textcolor{gray}{$\pm$0.5}} & & 
        34.3{\small \textcolor{gray}{$\pm$1.3}} &
        50.5{\small \textcolor{gray}{$\pm$1.3}} &
        68.1{\small \textcolor{gray}{$\pm$1.2}} &
        85.4{\small \textcolor{gray}{$\pm$0.6}} \\
        \midrule
        \midrule
        \multirow{2}{*}{Methods} & \multicolumn{9}{c}{Gender prediction (\%, $\downarrow$)} \\ 
        \cmidrule{2-10}
        & Logistic Regression & Decision Tree &Random Forest & 5-Nearest Neighbors & & 7-Net (width $8\times$) &7-Net (width $16\times$) & 12-Net & SVM \\ 
        \midrule
        ArcFace only$^{*}$
        & 96.9{\small\textcolor{gray}{$\pm$0.05}}
        & 94.9{\small\textcolor{gray}{$\pm$0.2}}
        & 96.8{\small\textcolor{gray}{$\pm$0.08}}
        & 96.8{\small\textcolor{gray}{$\pm$0.09}}
        & & 97.0{\small\textcolor{gray}{$\pm$0.04}}
        & 97.0{\small\textcolor{gray}{$\pm$0.02}}
        & 97.3{\small\textcolor{gray}{$\pm$0.06}}
        & 97.3{\small\textcolor{gray}{$\pm$0.06}}
        \\
        \rowcolor[HTML]{D0D0D0} Ours$^{*}$
        & 52.6{\small \textcolor{gray}{$\pm$0.9}}
        & 56.6{\small \textcolor{gray}{$\pm$0.6}}
        & 61.7{\small \textcolor{gray}{$\pm$1.1}}
        & 64.3{\small \textcolor{gray}{$\pm$0.2}}
        & & 74.5{\small \textcolor{gray}{$\pm$0.5}}
        & 74.1{\small \textcolor{gray}{$\pm$0.2}}
        & 75.6{\small \textcolor{gray}{$\pm$0.4}}
        & 75.7{\small \textcolor{gray}{$\pm$1.0}}
        \\
        \midrule 
        Mirjalili~\etal~\cite{mirjalili2018semi}$^{**}$
        & \multicolumn{9}{l}{86.4 (G-COTS),\qquad 60.7 (IntraFace), \qquad 98.5 (Random Forest), \qquad 99.3 (Neural Network), \qquad 98.3 (SVM).}

        \\
        Dhar~\etal~\cite{dhar2020genderneutral}$^{***}$
        & \multicolumn{9}{l}{64.01 (Logistic Regression).}
        \\
        SensitiveNet~\cite{morales_2020_sensitivenets}$^{****}$
        & \multicolumn{9}{l}{65.2 (Random Forest), \qquad 65.7 (Neural Network), \qquad 67.3 (SVM).}
        \\
        \toprule
        \multicolumn{10}{l}{{\footnotesize $*$: For both the ArcFace only and ours entries: (1) all adversarial models are trained on full VGGFace2 training set ($\sim$3,140K images) and tested on our VGGFace2 test set}} \\
        \multicolumn{10}{l}{{\footnotesize \quad (25K images); (2) reported in balanced accuracy.}} \\
        \multicolumn{10}{l}{{\footnotesize $**$: (1) G-COTS and IntraFace are off-the-shelf softwares and are \emph{untrained}; their accuracies are from~\cite{mirjalili2018semi}. (2) Random Forest, Neural Network, and SVM are trained on CelebA}} \\
        \multicolumn{10}{l}{{\footnotesize \quad training set ($\sim$157K images); their accuracies are from~\cite{morales_2020_sensitivenets}. (3) The width, depth, and architecture of the Neutral Network are unknown. (4) All adversarial models are tested}} \\
        \multicolumn{10}{l}{{\footnotesize \quad  on CelebA test set ($\sim$19K images). (5) reported in accuracy.}} \\
        \multicolumn{10}{l}{{\footnotesize $**$$*$: (1) The adversarial model is trained on 60K IJB-C face images. (2) Test on 20K IJB-C face images. (3) The reported accuracy is from~\cite{dhar2020genderneutral}.}} \\
        \multicolumn{10}{l}{{\footnotesize $**$$**$: (1) The width, depth, and architecture of the Neutral Network are unknown. (2) All adversarial models are trained on CelebA training set ($\sim$157K images) and tested}} \\
        \multicolumn{10}{l}{{\footnotesize \quad on CelebA test set ($\sim$19K images). (3) The reported accuracies are from~\cite{morales_2020_sensitivenets}.}} \\
    \end{tabular}}
    % \vspace{-2ex}
\end{table*}

When there is no correlation, our approach is able to learn nearly $0$-fair representation (\ie~the sensitive attribute's balanced accuracy is reduced to $\sim$50\%), with almost no cost in discriminativeness.
Using the learned representation, all three target attributes' predictions become much fairer for all considered fairness notions, in agreement with the theoretical fairness guarantees.
% Moreover, we are able to issue practically useful fairness guarantees w.r.t. all three target attributes and all five group fairness notions.
% Similar to the experiment on Adult dataset, vendors' actual fairness is much lower than the fairness guarantees.

%%-----------------------
\vspace{1ex}
\noindent\textbf{Correlated MPI3D~}
In the real world, it is common that the sensitive attribute correlates with some of the target attributes.
To investigate the efficacy of our approach on such scenarios, we create data splits with controlled correlation.
Specifically, we let object shape correlate with the sensitive attribute object color.
Object size and background color remain independent.
Let
{\small
\begin{equation*}
\begin{cases}
    p_1 & \overset{\vartriangle}{=} P(\text{object shape}=\text{pyramid} \mid \text{object color}=\text{white}) ,\\
    p_2 & \overset{\vartriangle}{=} P(\text{object shape}=\text{pyramid} \mid \text{object color}=\text{red}) .
\end{cases}
\end{equation*}
}
Notably, $|p_1-p_2|$ is a measure of correlation between object shape and object color.
We create correlated data splits by varying $|p_1-p_2|\in \{0,0.2,0.4,0.6,0.8,1\}$ while maintaining $p_1+p_2=1$.

\begin{figure}[!t]
    \centering
    \includegraphics[width=0.75\linewidth]{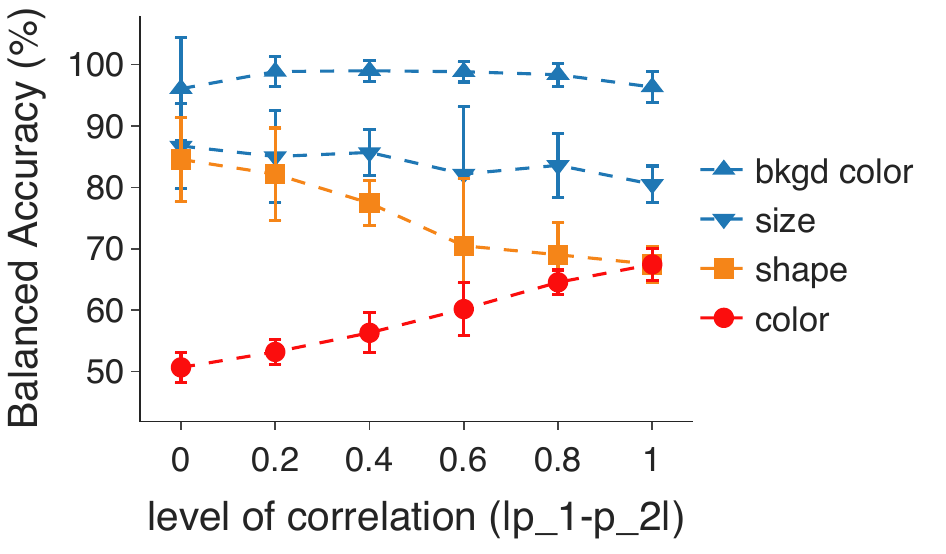}
    \caption{Our approach on correlated MPI3D dataset with varying level of correlation. Color is the sensitive attribute. Shape is the target attribute that correlates with color. Size and bkgd color are independent target attributes.}
    \label{fig:MPI3D_correlation}
    \vspace{-2ex}
\end{figure}

As shown in \figref{fig:MPI3D_correlation}, 
our method robustly learns fair and discriminative representation in the presence of correlation, although the sensitive attribute gradually becomes more predictable with increasing correlation.
The uncorrelated target attributes' prediction accuracy is only slightly affected.
The correlated target attribute's performance inevitably deteriorates because it increasingly correlates with the sensitive attribute.

\begin{table*}[t]
    \definecolor{green}{rgb}{0,0.45,0}
    \centering
    \caption{The Learned Face Representation Evaluated on CelebA Dataset Compared with Representations Learned with ArcFace Only (AO). We Report 10 Facial Attributes With Highest Balanced Accuracy (Complete Results Reported in the Supplementary Material). CLB is short for Calibration Lower Bound, which lower bounds DPC, DNC, and DC, see~\thref{thm:DPC}.}
    \label{tab:CelebA}
    \vspace{-1ex}
    \setlength\extrarowheight{0pt}
    \setlength{\tabcolsep}{2pt}
    \adjustbox{width=\textwidth}{
    \begin{tabular}{l l l l l l l l l l l l l l l l l l l l l l l l l c}
        \toprule
        \multirow{3}{*}{Attribute} &
        \multicolumn{2}{c}{Discriminativeness ($\uparrow$)} & &
        \multicolumn{20}{c}{Fairness ($\downarrow$)} & &
        CLB \\
        % \multirow{2}{*}{\makecell[c]{Calibration Lower \\Bound ($|a-b|$)}} \\
        % \cmidrule{2-3}
        \cmidrule{2-3}
        \cmidrule{5-24}
        \cmidrule{26-26}
        & \multicolumn{2}{c}{BA} & & \multicolumn{2}{c}{SP} & & \multicolumn{2}{c}{DOpp} & & \multicolumn{2}{c}{DR} & & \multicolumn{2}{c}{DOdds} & & \multicolumn{2}{c}{DPC} & & \multicolumn{2}{c}{DNC} & & \multicolumn{2}{c}{DC} & & $\frac{1}{2}|a-b|$\\ 
        \midrule
        & AO & \cellcolor[HTML]{D0D0D0} Ours & & AO & \cellcolor[HTML]{D0D0D0} Ours & & AO & \cellcolor[HTML]{D0D0D0} Ours & & AO & \cellcolor[HTML]{D0D0D0} Ours & & AO & \cellcolor[HTML]{D0D0D0} Ours & & AO & \cellcolor[HTML]{D0D0D0} Ours & & AO & \cellcolor[HTML]{D0D0D0} Ours & & AO & \cellcolor[HTML]{D0D0D0} Ours &\\ 
        % & & & AO & ours & & AO & ours & AO & ours & AO & ours & AO & ours & 
        % \midrule
        Bald
        & .953{\small \textcolor{gray}{$\pm$ .001 }} 
        & \cellcolor[HTML]{D0D0D0} .943{\small \textcolor{gray}{$\pm$.002}} 
        & & .114{\small \textcolor{gray}{$\pm$.001}} 
        & \cellcolor[HTML]{D0D0D0} .073{\small \textcolor{gray}{$\pm$.001}} 
        & & .650{\small \textcolor{gray}{$\pm$.073}} 
        & \cellcolor[HTML]{D0D0D0} .234{\small \textcolor{gray}{$\pm$.062}} 
        & & .072{\small \textcolor{gray}{$\pm$.001}} 
        & \cellcolor[HTML]{D0D0D0} .031{\small \textcolor{gray}{$\pm$.001}} 
        & & .361{\small \textcolor{gray}{$\pm$.037}} 
        & \cellcolor[HTML]{D0D0D0} .132{\small \textcolor{gray}{$\pm$.031}} 
        & & .027{\small \textcolor{gray}{$\pm$.0002}} 
        & \cellcolor[HTML]{D0D0D0} .027{\small \textcolor{gray}{$\pm$.0002}} 
        & & .228{\small \textcolor{gray}{$\pm$.003}} 
        & \cellcolor[HTML]{D0D0D0} .076{\small \textcolor{gray}{$\pm$.002}} 
        & & .127{\small \textcolor{gray}{$\pm$.001}} 
        & \cellcolor[HTML]{D0D0D0} .051{\small \textcolor{gray}{$\pm$.001}} 
        & & .027 \\
        
        Gray hair
        & .925{\small \textcolor{gray}{$\pm$ .002 }} 
        & \cellcolor[HTML]{D0D0D0} .926{\small \textcolor{gray}{$\pm$.001}} 
        & & .132{\small \textcolor{gray}{$\pm$.001}} 
        & \cellcolor[HTML]{D0D0D0} .075{\small \textcolor{gray}{$\pm$.0003}} 
        & & .243{\small \textcolor{gray}{$\pm$.013}} 
        & \cellcolor[HTML]{D0D0D0} .076{\small \textcolor{gray}{$\pm$.009}} 
        & & .072{\small \textcolor{gray}{$\pm$.001}} 
        & \cellcolor[HTML]{D0D0D0} .017{\small \textcolor{gray}{$\pm$.0004}} 
        & & .157{\small \textcolor{gray}{$\pm$.006}} 
        & \cellcolor[HTML]{D0D0D0} .046{\small \textcolor{gray}{$\pm$.005}} 
        & & .039{\small \textcolor{gray}{$\pm$.0003}} 
        & \cellcolor[HTML]{D0D0D0} .038{\small \textcolor{gray}{$\pm$.0004}} 
        & & .171{\small \textcolor{gray}{$\pm$.016}} 
        & \cellcolor[HTML]{D0D0D0} .059{\small \textcolor{gray}{$\pm$.0004}} 
        & & .105{\small \textcolor{gray}{$\pm$.008}} 
        & \cellcolor[HTML]{D0D0D0} .049{\small \textcolor{gray}{$\pm$.0003}} 
        & & .037 \\
        
        Sideburns
        & .921{\small \textcolor{gray}{$\pm$ .001 }} 
        & \cellcolor[HTML]{D0D0D0} .906{\small \textcolor{gray}{$\pm$.005}} 
        & & .295{\small \textcolor{gray}{$\pm$.004}} 
        & \cellcolor[HTML]{D0D0D0} .138{\small \textcolor{gray}{$\pm$.003}} 
        & & .261{\small \textcolor{gray}{$\pm$.096}} 
        & \cellcolor[HTML]{D0D0D0} .201{\small \textcolor{gray}{$\pm$.103}} 
        & & .214{\small \textcolor{gray}{$\pm$.004}} 
        & \cellcolor[HTML]{D0D0D0} .052{\small \textcolor{gray}{$\pm$.002}} 
        & & .237{\small \textcolor{gray}{$\pm$.047}} 
        & \cellcolor[HTML]{D0D0D0} .126{\small \textcolor{gray}{$\pm$.052}} 
        & & .067{\small \textcolor{gray}{$\pm$.0002}} 
        & \cellcolor[HTML]{D0D0D0} .067{\small \textcolor{gray}{$\pm$.0002}} 
        & & .639{\small \textcolor{gray}{$\pm$.013}} 
        & \cellcolor[HTML]{D0D0D0} .124{\small \textcolor{gray}{$\pm$.004}} 
        & & .353{\small \textcolor{gray}{$\pm$.006}} 
        & \cellcolor[HTML]{D0D0D0} .095{\small \textcolor{gray}{$\pm$.002}} 
        & & .068 \\
        
        Goatee
        & .919{\small \textcolor{gray}{$\pm$ .002 }} 
        & \cellcolor[HTML]{D0D0D0} .906{\small \textcolor{gray}{$\pm$.003}} 
        & & .330{\small \textcolor{gray}{$\pm$.008}} 
        & \cellcolor[HTML]{D0D0D0} .159{\small \textcolor{gray}{$\pm$.003}} 
        & & .314{\small \textcolor{gray}{$\pm$.033}} 
        & \cellcolor[HTML]{D0D0D0} .163{\small \textcolor{gray}{$\pm$.069}} 
        & & .243{\small \textcolor{gray}{$\pm$.008}} 
        & \cellcolor[HTML]{D0D0D0} .062{\small \textcolor{gray}{$\pm$.002}} 
        & & .278{\small \textcolor{gray}{$\pm$.018}} 
        & \cellcolor[HTML]{D0D0D0} .112{\small \textcolor{gray}{$\pm$.035}} 
        & & .075{\small \textcolor{gray}{$\pm$.0003}} 
        & \cellcolor[HTML]{D0D0D0} .075{\small \textcolor{gray}{$\pm$.0003}} 
        & & .731{\small \textcolor{gray}{$\pm$.021}} 
        & \cellcolor[HTML]{D0D0D0} .139{\small \textcolor{gray}{$\pm$.003}} 
        & & .403{\small \textcolor{gray}{$\pm$.010}} 
        & \cellcolor[HTML]{D0D0D0} .107{\small \textcolor{gray}{$\pm$.002}} 
        & & .075 \\
        
        Blond hair
        & .918{\small \textcolor{gray}{$\pm$ .001 }} 
        & \cellcolor[HTML]{D0D0D0} .912{\small \textcolor{gray}{$\pm$.002}} 
        & & .298{\small \textcolor{gray}{$\pm$.003}} 
        & \cellcolor[HTML]{D0D0D0} .188{\small \textcolor{gray}{$\pm$.004}} 
        & & .330{\small \textcolor{gray}{$\pm$.010}} 
        & \cellcolor[HTML]{D0D0D0} .176{\small \textcolor{gray}{$\pm$.008}} 
        & & .136{\small \textcolor{gray}{$\pm$.003}} 
        & \cellcolor[HTML]{D0D0D0} .021{\small \textcolor{gray}{$\pm$.004}} 
        & & .233{\small \textcolor{gray}{$\pm$.006}} 
        & \cellcolor[HTML]{D0D0D0} .098{\small \textcolor{gray}{$\pm$.005}} 
        & & .112{\small \textcolor{gray}{$\pm$.0002}} 
        & \cellcolor[HTML]{D0D0D0} .111{\small \textcolor{gray}{$\pm$.0002}} 
        & & .226{\small \textcolor{gray}{$\pm$.005}} 
        & \cellcolor[HTML]{D0D0D0} .128{\small \textcolor{gray}{$\pm$.004}} 
        & & .169{\small \textcolor{gray}{$\pm$.002}} 
        & \cellcolor[HTML]{D0D0D0} .119{\small \textcolor{gray}{$\pm$.002}} 
        & & .109 \\
        
        Eyeglasses
        & .916{\small \textcolor{gray}{$\pm$ .010 }} 
        & \cellcolor[HTML]{D0D0D0} .940{\small \textcolor{gray}{$\pm$.003}} 
        & & .133{\small \textcolor{gray}{$\pm$.009}} 
        & \cellcolor[HTML]{D0D0D0} .098{\small \textcolor{gray}{$\pm$.002}} 
        & & .240{\small \textcolor{gray}{$\pm$.029}} 
        & \cellcolor[HTML]{D0D0D0} .086{\small \textcolor{gray}{$\pm$.004}} 
        & & .051{\small \textcolor{gray}{$\pm$.010}} 
        & \cellcolor[HTML]{D0D0D0} .017{\small \textcolor{gray}{$\pm$.003}} 
        & & .146{\small \textcolor{gray}{$\pm$.017}} 
        & \cellcolor[HTML]{D0D0D0} .051{\small \textcolor{gray}{$\pm$.003}} 
        & & .051{\small \textcolor{gray}{$\pm$.001}} 
        & \cellcolor[HTML]{D0D0D0} .050{\small \textcolor{gray}{$\pm$.0002}} 
        & & .133{\small \textcolor{gray}{$\pm$.024}} 
        & \cellcolor[HTML]{D0D0D0} .071{\small \textcolor{gray}{$\pm$.007}} 
        & & .092{\small \textcolor{gray}{$\pm$.012}} 
        & \cellcolor[HTML]{D0D0D0} .061{\small \textcolor{gray}{$\pm$.004}} 
        & & .051 \\

        Mustache
        & .905{\small \textcolor{gray}{$\pm$ .007 }} 
        & \cellcolor[HTML]{D0D0D0} .901{\small \textcolor{gray}{$\pm$.004}} 
        & & .270{\small \textcolor{gray}{$\pm$.007}} 
        & \cellcolor[HTML]{D0D0D0} .115{\small \textcolor{gray}{$\pm$.002}} 
        & & .476{\small \textcolor{gray}{$\pm$.127}} 
        & \cellcolor[HTML]{D0D0D0} .483{\small \textcolor{gray}{$\pm$.093}} 
        & & .219{\small \textcolor{gray}{$\pm$.009}} 
        & \cellcolor[HTML]{D0D0D0} .053{\small \textcolor{gray}{$\pm$.001}} 
        & & .348{\small \textcolor{gray}{$\pm$.064}} 
        & \cellcolor[HTML]{D0D0D0} .268{\small \textcolor{gray}{$\pm$.047}} 
        & & .049{\small \textcolor{gray}{$\pm$.0001}} 
        & \cellcolor[HTML]{D0D0D0} .049{\small \textcolor{gray}{$\pm$.0001}} 
        & & .655{\small \textcolor{gray}{$\pm$.023}} 
        & \cellcolor[HTML]{D0D0D0} .122{\small \textcolor{gray}{$\pm$.003}} 
        & & .352{\small \textcolor{gray}{$\pm$.011}} 
        & \cellcolor[HTML]{D0D0D0} .085{\small \textcolor{gray}{$\pm$.001}} 
        & & .050 \\

        Wearing lipstick
        & .899{\small \textcolor{gray}{$\pm$ .001 }} 
        & \cellcolor[HTML]{D0D0D0} .788{\small \textcolor{gray}{$\pm$.003}} 
        & & .889{\small \textcolor{gray}{$\pm$.002}} 
        & \cellcolor[HTML]{D0D0D0} .511{\small \textcolor{gray}{$\pm$.006}} 
        & & .587{\small \textcolor{gray}{$\pm$.019}} 
        & \cellcolor[HTML]{D0D0D0} .249{\small \textcolor{gray}{$\pm$.016}} 
        & & .744{\small \textcolor{gray}{$\pm$.005}} 
        & \cellcolor[HTML]{D0D0D0} .286{\small \textcolor{gray}{$\pm$.006}} 
        & & .665{\small \textcolor{gray}{$\pm$.011}} 
        & \cellcolor[HTML]{D0D0D0} .268{\small \textcolor{gray}{$\pm$.011}} 
        & & .402{\small \textcolor{gray}{$\pm$.001}} 
        & \cellcolor[HTML]{D0D0D0} .399{\small \textcolor{gray}{$\pm$.0004}} 
        & & .542{\small \textcolor{gray}{$\pm$.001}} 
        & \cellcolor[HTML]{D0D0D0} .418{\small \textcolor{gray}{$\pm$.002}} 
        & & .472{\small \textcolor{gray}{$\pm$.001}} 
        & \cellcolor[HTML]{D0D0D0} .408{\small \textcolor{gray}{$\pm$.001}} 
        & & .400 \\

        5 o'clock shadow
        & .883{\small \textcolor{gray}{$\pm$ .001 }} 
        & \cellcolor[HTML]{D0D0D0} .867{\small \textcolor{gray}{$\pm$.005}} 
        & & .574{\small \textcolor{gray}{$\pm$.008}} 
        & \cellcolor[HTML]{D0D0D0} .289{\small \textcolor{gray}{$\pm$.006}} 
        & & .205{\small \textcolor{gray}{$\pm$.030}} 
        & \cellcolor[HTML]{D0D0D0} .142{\small \textcolor{gray}{$\pm$.033}} 
        & & .482{\small \textcolor{gray}{$\pm$.010}} 
        & \cellcolor[HTML]{D0D0D0} .166{\small \textcolor{gray}{$\pm$.004}} 
        & & .344{\small \textcolor{gray}{$\pm$.014}} 
        & \cellcolor[HTML]{D0D0D0} .154{\small \textcolor{gray}{$\pm$.016}} 
        & & .133{\small \textcolor{gray}{$\pm$.001}} 
        & \cellcolor[HTML]{D0D0D0} .133{\small \textcolor{gray}{$\pm$.001}} 
        & & .815{\small \textcolor{gray}{$\pm$.002}} 
        & \cellcolor[HTML]{D0D0D0} .265{\small \textcolor{gray}{$\pm$.007}} 
        & & .474{\small \textcolor{gray}{$\pm$.001}} 
        & \cellcolor[HTML]{D0D0D0} .199{\small \textcolor{gray}{$\pm$.003}} 
        & & .133 \\

        Wearing hat
        & .875{\small \textcolor{gray}{$\pm$ .010 }} 
        & \cellcolor[HTML]{D0D0D0} .893{\small \textcolor{gray}{$\pm$.006}} 
        & & .083{\small \textcolor{gray}{$\pm$.009}} 
        & \cellcolor[HTML]{D0D0D0} .055{\small \textcolor{gray}{$\pm$.003}} 
        & & .221{\small \textcolor{gray}{$\pm$.014}} 
        & \cellcolor[HTML]{D0D0D0} .080{\small \textcolor{gray}{$\pm$.010}} 
        & & .038{\small \textcolor{gray}{$\pm$.010}} 
        & \cellcolor[HTML]{D0D0D0} .014{\small \textcolor{gray}{$\pm$.004}} 
        & & .129{\small \textcolor{gray}{$\pm$.008}} 
        & \cellcolor[HTML]{D0D0D0} .047{\small \textcolor{gray}{$\pm$.007}} 
        & & .030{\small \textcolor{gray}{$\pm$.0003}} 
        & \cellcolor[HTML]{D0D0D0} .029{\small \textcolor{gray}{$\pm$.0002}} 
        & & .172{\small \textcolor{gray}{$\pm$.036}} 
        & \cellcolor[HTML]{D0D0D0} .053{\small \textcolor{gray}{$\pm$.011}} 
        & & .101{\small \textcolor{gray}{$\pm$.018}} 
        & \cellcolor[HTML]{D0D0D0} .041{\small \textcolor{gray}{$\pm$.005}} 
        & & .028 \\

        \bottomrule

    \end{tabular}}
    % \vspace{-2ex}
\end{table*}
%------------------------------------------
\subsection{Experiment on Face Datasets}

\vspace{1ex}
\noindent\textbf{Experimental Settings~}
Gender is the sensitive attribute.
We use SphereNet-20~\cite{liu2017sphereface} as the neural architecture and additive Angular Margin Loss (ArcFace)~\cite{deng_CVPR_2019_arcface} as the pretext loss.
The representation dimension is 32 and the final $\lambda$ (after line search) is 30.
We train the model on VGGFace2 training set for 200k iterations with Adam optimizer~\cite{kingma_2014_adam} and batch size 256.
The learning rate starts from 1e-4 and is divided by 10 at 140k and 180k iterations.

We first evaluate the learned representation on face verification, face identification, and gender prediction.

For \textbf{face verification}, the objective is to predict whether a given pair of face images is of the same identity.
We evaluate on the LFW test set, which contains 10 data splits, each with 300 positive and 300 negative pairs.
Cosine similarity is computed and 10-fold cross validation is used to select the threshold.
We report the True Accept Rates (TAR) at 1e-5, 1e-4, 1e-3, and 1e-2 False Accept Rate (FAR).

\textbf{Face Identification} assumes a probe image and a gallery set that contains only one target image of the same identity and many other distractors.
The task is, given a probe image, to find the target image from the gallery set.
Our VGGFace2 test set consists of 50 images per identity for 500 identities.
Each image is posed as the probe 49 times.
Each time one of the other 49 images of same identity is posed as the target.
All other images of different identities are used as distractors.
This results in total 1.25 million tests.
Cosine similarity is used as the similarity measure.
We report the top-K accuracy with $K\in\{1,10,100,1000\}$.

For \textbf{gender prediction}, we pose various adversarial classifiers to predict gender using the learned representation.
They are trained on the full VGGFace2 training set. 
We report the Balanced Accuracy on our VGGFace2 test set.

\vspace{1ex}
\noindent\textbf{Results on Face Verification, Identification, and Gender Prediction~}
\tabref{tab:face} reports the experimental results.
Our approach learns gender-blind face representations, from which various classifiers cannot infer gender with high accuracy, with minor cost in the performances of face verification and identification.

Although we cannot directly compare the gender unpredictability with existing methods due to difference in evaluation (see footnote of \tabref{tab:face}), it is clear that our approach is competitive.
First, our approach readily transforms a pretext loss---ArcFace here---for gender-blind face representation learning and is much simpler compared to specifically designed methods~\cite{mirjalili2018semi,terhorst2019unsupervised,dhar2020genderneutral,morales_2020_sensitivenets}.
Second, we evaluate against a wide range of classifiers and allow them to be trained on a much larger training set ($52\times$ larger compared to~\cite{dhar2020genderneutral} and $20\times$ larger compared to~\cite{morales_2020_sensitivenets}).
Third, we also test on a larger test set (25K compared to 20K in~\cite{dhar2020genderneutral} and 19K in~\cite{morales_2020_sensitivenets}).
Even under significantly stricter evaluation, the achieved gender unpredictability is still on par with existing work.
This demonstrates the effectiveness of our proposed approach for learning gender-blind face representations.

\vspace{1ex}
\noindent\textbf{Results on Facial Attribute Prediction~}
% Next, we evaluate on \textbf{facial attribute prediction}.
Since both LFW and VGGFace2 datasets are without facial attributes, we evaluate the learned face representation for facial attribute prediction on CelebA dataset instead.
This results in a generalization evaluation because the training and the test datasets are no longer from the same distribution.
% Although our theoretical results do not strictly apply and we are unable to issue quantitative fairness guarantees, we find empirically that the fairness property generalizes well.
% Nonetheless, in the following we show that the learned gender-blind face representation generalizes well and still leads to fair prediction for various facial attributes.

We report the results in~\tabref{tab:CelebA}.
We report 10 facial attributes with highest prediction accuracy for brevity, with complete results reported in the supplementary material.
Interestingly, predictions using naively learned face representations are often calibrated w.r.t. DPC (achieve the lower bound) but uncalibrated w.r.t. DNC.
Our approach maintains the same level of balanced accuracy for facial attribute prediction and greatly improves their fairness for all seven group fairness notions.
The achieved values for DPC, DNC, and DC are all close to their lower bounds.
As the complete result in the supplementary material demonstrates, fairness is improved even for the predictions that are less accurate.

We note that we are unaware of these facial attributes when learning face representation.
Thus, \tabref{tab:CelebA} empirically demonstrates that using a both fair and discriminative representation, we can indeed approximately achieve seven group fairness notions \emph{for downstream unknown prediction tasks}.

% Accurate predictions are naturally well-calibrated.~\cite{niculescu2005predicting}
% For these attributes, we improve over the other four fairness measures and maintain the same level of DWC.
% For the attributes ``wearing lipstick'' and ``heavy makeup'', they are heavily correlated with gender and have $\frac{1}{2}|a-b|=0.3997$ and $\frac{1}{2}|a-b|=0.3293$, respectively, which are very close to the values achieved.
% Thus, we are unable to further improve their DWC.
% For the other attributes with large DWC, we are indeed able to improve over all five fairness measures.
% \tabref{tab:CelebA} shows that our theoretical findings generalize well under distribution shifts.

\section{Conclusion} \label{sec:conclusion}
In this work, we prove that fair representation guarantees approximate seven group fairness notions for all prediction tasks for which the representation is discriminative.
With a sharp characterization, we provide a better understanding of what fair representation can and cannot guarantee.
We have considered a setting where the dataset users can be adversarial.
An intriguing open question is, what fair representation can guarantee if we restrict to accuracy-driven dataset users.
% This is a stronger assumption because it assumes not only representation discriminativeness but also the dataset user's objective.
Such knowledge allows the data owner to guarantee better fairness when they are able to verify the accuracy of downstream predictions.

\section*{ACKNOWLEDGMENTS}

This research/project is supported by the National Research Foundation, Singapore under its Strategic Capability Research Centres Funding Initiative. 
Any opinions, findings and conclusions or recommendations expressed in this material are those of the author(s) and do not reflect the views of National Research Foundation, Singapore.

\bibliographystyle{ieeetr}
\bibliography{reference}

\begin{thebibliography}{10}

\bibitem{obermeyer_Science_2019}
Z.~Obermeyer, B.~Powers, C.~Vogeli, and S.~Mullainathan, ``Dissecting racial
  bias in an algorithm used to manage the health of populations,'' {\em
  Science}, vol.~366, no.~6464, pp.~447--453, 2019.

\bibitem{buolamwini_FAccT_2018_gender}
J.~Buolamwini and T.~Gebru, ``Gender shades: Intersectional accuracy
  disparities in commercial gender classification,'' in {\em Conference on
  Fairness, Accountability and Transparency}, vol.~81 of {\em Proceedings of
  Machine Learning Research}, pp.~77--91, 2018.

\bibitem{angwin_2016_propublica}
J.~Angwin, J.~Larson, S.~Mattu, and L.~Kirchner, ``Machine bias,'' {\em
  ProPublica, May}, vol.~23, p.~2016, 2016.

\bibitem{berg2018rise}
T.~Berg, V.~Burg, A.~Gombovi{\'e}, and M.~Puri, ``On the rise of the
  fintechs---credit scoring using digital footprints. federal deposit insurance
  corporation,'' {\em Center for Financial Research WP}, vol.~4, 2018.

\bibitem{jernigan2009gaydar}
C.~Jernigan and B.~F. Mistree, ``Gaydar: Facebook friendships expose sexual
  orientation,'' {\em First Monday}, 2009.

\bibitem{kosinski2013private}
M.~Kosinski, D.~Stillwell, and T.~Graepel, ``Private traits and attributes are
  predictable from digital records of human behavior,'' {\em Proceedings of the
  national academy of sciences}, vol.~110, no.~15, pp.~5802--5805, 2013.

\bibitem{preoctiuc2017beyond}
D.~Preo{\c{t}}iuc-Pietro, Y.~Liu, D.~Hopkins, and L.~Ungar, ``Beyond binary
  labels: political ideology prediction of twitter users,'' in {\em Proceedings
  of the 55th Annual Meeting of the Association for Computational Linguistics
  (Volume 1: Long Papers)}, pp.~729--740, 2017.

\bibitem{beutel_arxiv_2017}
A.~Beutel, J.~Chen, Z.~Zhao, and E.~H. Chi, ``Data decisions and theoretical
  implications when adversarially learning fair representations,'' in {\em
  SIGKDD Conference on Knowledge Discovery and Data Mining, Workshop on
  Fairness, Accountability, and Transparency in Machine Learning}, 2017.

\bibitem{edwards_arxiv_2015}
H.~Edwards and A.~Storkey, ``Censoring representations with an adversary,'' in
  {\em International Conference on Learning Representations}, 2015.

\bibitem{xie_NeurIPS_2017}
Q.~Xie, Z.~Dai, Y.~Du, E.~Hovy, and G.~Neubig, ``Controllable invariance
  through adversarial feature learning,'' in {\em Advances in Neural
  Information Processing Systems}, pp.~585--596, 2017.

\bibitem{zhao_ICLR_2020_conditional}
H.~Zhao, A.~Coston, T.~Adel, and G.~J. Gordon, ``Conditional learning of fair
  representations,'' in {\em International Conference on Learning
  Representations}, 2020.

\bibitem{kingma_arxiv_2013_VAE}
D.~P. Kingma and M.~Welling, ``Auto-encoding variational bayes,'' in {\em
  International Conference on Learning Representations.}, 2013.

\bibitem{moyer_NeurIPS_2018}
D.~Moyer, S.~Gao, R.~Brekelmans, A.~Galstyan, and G.~Ver~Steeg, ``Invariant
  representations without adversarial training,'' in {\em Advances in Neural
  Information Processing Systems}, pp.~9084--9093, 2018.

\bibitem{song_AISTATS_2019}
J.~Song, P.~Kalluri, A.~Grover, S.~Zhao, and S.~Ermon, ``Learning controllable
  fair representations,'' in {\em International Conference on Artificial
  Intelligence and Statistics}, pp.~2164--2173, 2019.

\bibitem{louizos_arxiv_2015}
C.~Louizos, K.~Swersky, Y.~Li, M.~Welling, and R.~Zemel, ``The variational fair
  autoencoder,'' in {\em International Conference on Learning Representations},
  2016.

\bibitem{creager_ICML_2019}
E.~Creager, D.~Madras, J.-H. Jacobsen, M.~Weis, K.~Swersky, T.~Pitassi, and
  R.~Zemel, ``Flexibly fair representation learning by disentanglement,'' in
  {\em International Conference on Machine Learning}, vol.~97 of {\em
  Proceedings of Machine Learning Research}, pp.~1436--1445, 2019.

\bibitem{kehrenberg2020null}
T.~Kehrenberg, M.~Bartlett, O.~Thomas, and N.~Quadrianto, ``Null-sampling for
  interpretable and fair representations,'' in {\em European Conference on
  Computer Vision}, pp.~565--580, Springer, 2020.

\bibitem{quadrianto2019discovering}
N.~Quadrianto, V.~Sharmanska, and O.~Thomas, ``Discovering fair representations
  in the data domain,'' in {\em Proceedings of the IEEE/CVF Conference on
  Computer Vision and Pattern Recognition}, pp.~8227--8236, 2019.

\bibitem{karkkainen2021fairface}
K.~Karkkainen and J.~Joo, ``Fairface: Face attribute dataset for balanced race,
  gender, and age for bias measurement and mitigation,'' in {\em Proceedings of
  the IEEE/CVF Winter Conference on Applications of Computer Vision},
  pp.~1548--1558, 2021.

\bibitem{choi2020fair}
K.~Choi, A.~Grover, T.~Singh, R.~Shu, and S.~Ermon, ``Fair generative modeling
  via weak supervision,'' in {\em International Conference on Machine
  Learning}, vol.~119, pp.~1887--1898, PMLR, 2020.

\bibitem{ramaswamy2021fair}
V.~V. Ramaswamy, S.~S. Kim, and O.~Russakovsky, ``Fair attribute classification
  through latent space de-biasing,'' in {\em Proceedings of the IEEE/CVF
  Conference on Computer Vision and Pattern Recognition}, pp.~9301--9310, 2021.

\bibitem{bolukbasi_NeurIPS_2016_word_embeddings}
T.~Bolukbasi, K.-W. Chang, J.~Y. Zou, V.~Saligrama, and A.~T. Kalai, ``Man is
  to computer programmer as woman is to homemaker? debiasing word embeddings,''
  in {\em Advances in Neural Information Processing Systems}, pp.~4349--4357,
  2016.

\bibitem{caliskan2017semantics}
A.~Caliskan, J.~J. Bryson, and A.~Narayanan, ``Semantics derived automatically
  from language corpora contain human-like biases,'' {\em Science}, vol.~356,
  no.~6334, pp.~183--186, 2017.

\bibitem{zhao2018learning}
J.~Zhao, Y.~Zhou, Z.~Li, W.~Wang, and K.-W. Chang, ``Learning gender-neutral
  word embeddings,'' in {\em Proceedings of the 2018 Conference on Empirical
  Methods in Natural Language Processing}, pp.~4847--4853, 2018.

\bibitem{may2019measuring}
C.~May, A.~Wang, S.~Bordia, S.~Bowman, and R.~Rudinger, ``On measuring social
  biases in sentence encoders,'' in {\em Proceedings of the 2019 Conference of
  the North American Chapter of the Association for Computational Linguistics:
  Human Language Technologies, Volume 1 (Long and Short Papers)}, pp.~622--628,
  2019.

\bibitem{gardner2020evaluating}
M.~Gardner, Y.~Artzi, V.~Basmov, J.~Berant, B.~Bogin, S.~Chen, P.~Dasigi,
  D.~Dua, Y.~Elazar, A.~Gottumukkala, {\em et~al.}, ``Evaluating models’
  local decision boundaries via contrast sets,'' in {\em Proceedings of the
  2020 Conference on Empirical Methods in Natural Language Processing:
  Findings}, pp.~1307--1323, 2020.

\bibitem{feldman_SIGKDD_2015}
M.~Feldman, S.~A. Friedler, J.~Moeller, C.~Scheidegger, and
  S.~Venkatasubramanian, ``Certifying and removing disparate impact,'' in {\em
  ACM SIGKDD International Conference on Knowledge Discovery and Data Mining},
  pp.~259--268, 2015.

\bibitem{mcnamara_arxiv_2017}
D.~McNamara, C.~S. Ong, and R.~C. Williamson, ``Provably fair
  representations,'' {\em arXiv preprint arXiv:1710.04394}, 2017.

\bibitem{madras_ICML_2018}
D.~Madras, E.~Creager, T.~Pitassi, and R.~Zemel, ``Learning adversarially fair
  and transferable representations,'' in {\em International Conference on
  Machine Learning}, vol.~80, pp.~3384--3393, 2018.

\bibitem{calders_DMKD_2010}
T.~Calders and S.~Verwer, ``Three naive bayes approaches for
  discrimination-free classification,'' {\em Data Mining and Knowledge
  Discovery}, vol.~21, no.~2, pp.~277--292, 2010.

\bibitem{hardt_NeurIPS_2016_equality}
M.~Hardt, E.~Price, and N.~Srebro, ``Equality of opportunity in supervised
  learning,'' in {\em Advances in Neural Information Processing Systems},
  pp.~3315--3323, 2016.

\bibitem{narayanan2018translation}
A.~Narayanan, ``Translation tutorial: 21 fairness definitions and their
  politics,'' in {\em Conference on Fairness, Accountability, and
  Transparency}, vol.~1170, 2018.

\bibitem{chouldechova2017fair}
A.~Chouldechova, ``Fair prediction with disparate impact: A study of bias in
  recidivism prediction instruments,'' {\em Big data}, vol.~5, no.~2,
  pp.~153--163, 2017.

\bibitem{kleinberg2018inherent}
J.~Kleinberg, ``Inherent trade-offs in algorithmic fairness,'' in {\em
  Abstracts of the 2018 ACM International Conference on Measurement and
  Modeling of Computer Systems}, pp.~40--40, 2018.

\bibitem{barocas-hardt-narayanan}
S.~Barocas, M.~Hardt, and A.~Narayanan, {\em Fairness and Machine Learning}.
\newblock fairmlbook.org, 2019.
\newblock \url{http://www.fairmlbook.org}.

\bibitem{oneto_2020_MMD}
L.~Oneto, M.~Donini, G.~Luise, C.~Ciliberto, A.~Maurer, and M.~Pontil,
  ``Exploiting mmd and sinkhorn divergences for fair and transferable
  representation learning,'' {\em Advances in Neural Information Processing
  Systems}, vol.~33, 2020.

\bibitem{gretton_JMLR_2012_mmd}
A.~Gretton, K.~M. Borgwardt, M.~J. Rasch, B.~Sch{\"o}lkopf, and A.~Smola, ``A
  kernel two-sample test,'' {\em The Journal of Machine Learning Research},
  vol.~13, no.~1, pp.~723--773, 2012.

\bibitem{zhao_NeurIPS_2019}
H.~Zhao and G.~Gordon, ``Inherent tradeoffs in learning fair representations,''
  in {\em Advances in Neural Information Processing Systems}, pp.~15675--15685,
  2019.

\bibitem{kim_AAAI_2020}
J.-Y. Kim and S.-B. Cho, ``Fair representation for safe artificial intelligence
  via adversarial learning of unbiased information bottleneck,'' in {\em
  SafeAI@ AAAI}, pp.~105--112, 2020.

\bibitem{zhao_NC_2015_fastmmd}
J.~Zhao and D.~Meng, ``Fastmmd: Ensemble of circular discrepancy for efficient
  two-sample test.,'' {\em Neural Computation}, vol.~27, no.~6, pp.~1345--1372,
  2015.

\bibitem{menon_FAcct_2018}
A.~K. Menon and R.~C. Williamson, ``The cost of fairness in binary
  classification,'' in {\em Conference on Fairness, Accountability and
  Transparency}, vol.~81, pp.~107--118, PMLR, 2018.

\bibitem{dutta2020there}
S.~Dutta, D.~Wei, H.~Yueksel, P.-Y. Chen, S.~Liu, and K.~Varshney, ``Is there a
  trade-off between fairness and accuracy? a perspective using mismatched
  hypothesis testing,'' in {\em International Conference on Machine Learning},
  vol.~119, pp.~2803--2813, PMLR, 2020.

\bibitem{lechner2021impossibility}
T.~Lechner, S.~Ben-David, S.~Agarwal, and N.~Ananthakrishnan, ``Impossibility
  results for fair representations,'' {\em arXiv preprint arXiv:2107.03483},
  2021.

\bibitem{levin_2017_markov_chains}
D.~A. Levin and Y.~Peres, {\em Markov chains and mixing times}, vol.~107.
\newblock American Mathematical Soc., 2017.

\bibitem{liu2019implicit}
L.~T. Liu, M.~Simchowitz, and M.~Hardt, ``The implicit fairness criterion of
  unconstrained learning,'' in {\em International Conference on Machine
  Learning}, vol.~97, pp.~4051--4060, PMLR, 2019.

\bibitem{dantzig1998linear}
G.~B. Dantzig, {\em Linear programming and extensions}, vol.~48.
\newblock Princeton university press, 1998.

\bibitem{andersen2000mosek}
E.~D. Andersen and K.~D. Andersen, ``The mosek interior point optimizer for
  linear programming: an implementation of the homogeneous algorithm,'' in {\em
  High performance optimization}, pp.~197--232, Springer, 2000.

\bibitem{deng_CVPR_2019_arcface}
J.~Deng, J.~Guo, N.~Xue, and S.~Zafeiriou, ``Arcface: Additive angular margin
  loss for deep face recognition,'' in {\em Proceedings of the IEEE Conference
  on Computer Vision and Pattern Recognition}, pp.~4690--4699, 2019.

\bibitem{muller_1997_IPMs}
A.~M{\"u}ller, ``Integral probability metrics and their generating classes of
  functions,'' {\em Advances in Applied Probability}, pp.~429--443, 1997.

\bibitem{steinwart2001influence}
I.~Steinwart, ``On the influence of the kernel on the consistency of support
  vector machines,'' {\em Journal of machine learning research}, vol.~2,
  no.~Nov, pp.~67--93, 2001.

\bibitem{sriperumbudur_EJS_2012}
B.~K. Sriperumbudur, K.~Fukumizu, A.~Gretton, B.~Sch{\"o}lkopf, G.~R.
  Lanckriet, {\em et~al.}, ``On the empirical estimation of integral
  probability metrics,'' {\em Electronic Journal of Statistics}, vol.~6,
  pp.~1550--1599, 2012.

\bibitem{ramdas_AAAI_2015}
A.~Ramdas, S.~J. Reddi, B.~P\'{o}czos, A.~Singh, and L.~Wasserman, ``On the
  decreasing power of kernel and distance based nonparametric hypothesis tests
  in high dimensions,'' in {\em Proceedings of the Twenty-Ninth AAAI Conference
  on Artificial Intelligence}, pp.~3571--3577, 2015.

\bibitem{rasmussen_2003_GPML}
C.~E. Rasmussen, ``Gaussian processes in machine learning,'' in {\em Summer
  School on Machine Learning}, pp.~63--71, Springer, 2003.

\bibitem{freund1999adaptive}
Y.~Freund and R.~E. Schapire, ``Adaptive game playing using multiplicative
  weights,'' {\em Games and Economic Behavior}, vol.~29, no.~1-2, pp.~79--103,
  1999.

\bibitem{zinkevich2003online}
M.~Zinkevich, ``Online convex programming and generalized infinitesimal
  gradient ascent,'' in {\em Proceedings of the 20th international conference
  on machine learning}, pp.~928--936, 2003.

\bibitem{golowich2020last}
N.~Golowich, S.~Pattathil, C.~Daskalakis, and A.~Ozdaglar, ``Last iterate is
  slower than averaged iterate in smooth convex-concave saddle point
  problems,'' in {\em Conference on Learning Theory}, pp.~1758--1784, PMLR,
  2020.

\bibitem{zemel_ICML_2013_LFR}
R.~Zemel, Y.~Wu, K.~Swersky, T.~Pitassi, and C.~Dwork, ``Learning fair
  representations,'' in {\em International Conference on Machine Learning},
  pp.~325--333, 2013.

\bibitem{kohavi_KDD_1996}
R.~Kohavi, ``Scaling up the accuracy of naive-bayes classifiers: a
  decision-tree hybrid,'' in {\em Proceedings of the Second International
  Conference on Knowledge Discovery and Data Mining}, pp.~202--207, 1996.

\bibitem{gondal_NeurIPS_2019_MPI3D}
M.~W. Gondal, M.~Wuthrich, D.~Miladinovic, F.~Locatello, M.~Breidt,
  V.~Volchkov, J.~Akpo, O.~Bachem, B.~Sch{\"o}lkopf, and S.~Bauer, ``On the
  transfer of inductive bias from simulation to the real world: a new
  disentanglement dataset,'' in {\em Advances in Neural Information Processing
  Systems}, pp.~15740--15751, 2019.

\bibitem{cao_FG_2018_VGGFace2}
Q.~Cao, L.~Shen, W.~Xie, O.~M. Parkhi, and A.~Zisserman, ``Vggface2: A dataset
  for recognising faces across pose and age,'' in {\em International Conference
  on Automatic Face and Gesture Recognition}, pp.~67--74, 2018.

\bibitem{liu2017sphereface}
W.~Liu, Y.~Wen, Z.~Yu, M.~Li, B.~Raj, and L.~Song, ``Sphereface: Deep
  hypersphere embedding for face recognition,'' in {\em Proceedings of the IEEE
  Conference on Computer Vision and Pattern Recognition}, pp.~212--220, 2017.

\bibitem{huang_2008_LFW}
G.~B. Huang, M.~Ramesh, T.~Berg, and E.~Learned-Miller, ``Labeled faces in the
  wild: A database for studying face recognition in unconstrained
  environments,'' Tech. Rep. 07-49, University of Massachusetts, Amherst,
  October 2007.

\bibitem{liu2015deep}
Z.~Liu, P.~Luo, X.~Wang, and X.~Tang, ``Deep learning face attributes in the
  wild,'' in {\em Proceedings of the IEEE International Conference on Computer
  Vision}, pp.~3730--3738, 2015.

\bibitem{equal_1978_department}
E.~E.~O. Commission, C.~S. Commission, {\em et~al.}, ``Department of labor, \&
  department of justice.(1978). uniform guidelines on employee selection
  procedures,'' {\em Federal Register}, vol.~43, no.~166, pp.~38290--38315,
  1978.

\bibitem{he_CVPR_2016_resnet}
K.~He, X.~Zhang, S.~Ren, and J.~Sun, ``Deep residual learning for image
  recognition,'' in {\em Proceedings of the IEEE Conference on Computer Vision
  and Pattern Recognition}, pp.~770--778, 2016.

\bibitem{mirjalili2018semi}
V.~Mirjalili, S.~Raschka, A.~Namboodiri, and A.~Ross, ``Semi-adversarial
  networks: Convolutional autoencoders for imparting privacy to face images,''
  in {\em 2018 International Conference on Biometrics}, pp.~82--89, IEEE, 2018.

\bibitem{dhar2020genderneutral}
P.~Dhar, J.~Gleason, H.~Souri, C.~D. Castillo, and R.~Chellappa, ``Towards
  gender-neutral face descriptors for mitigating bias in face recognition,''
  {\em arXiv preprint arXiv:2006.07845}, 2020.

\bibitem{morales_2020_sensitivenets}
A.~Morales, J.~Fierrez, R.~Vera-Rodriguez, and R.~Tolosana, ``{SensitiveNets}:
  Learning agnostic representations with application to face images,'' {\em
  IEEE Transactions on Pattern Analysis and Machine Intelligence}, vol.~43,
  no.~06, pp.~2158--2164, 2021.

\bibitem{kingma_2014_adam}
D.~P. Kingma and J.~Ba, ``Adam: {A} method for stochastic optimization,'' in
  {\em International Conference on Learning Representations}, 2015.

\bibitem{terhorst2019unsupervised}
P.~Terh{\"o}rst, N.~Damer, F.~Kirchbuchner, and A.~Kuijper, ``Unsupervised
  privacy-enhancement of face representations using similarity-sensitive noise
  transformations,'' {\em Applied Intelligence}, vol.~49, no.~8,
  pp.~3043--3060, 2019.

\end{thebibliography}

\begin{IEEEbiography}[{\includegraphics[width=1in,height=1.25in,clip,keepaspectratio]{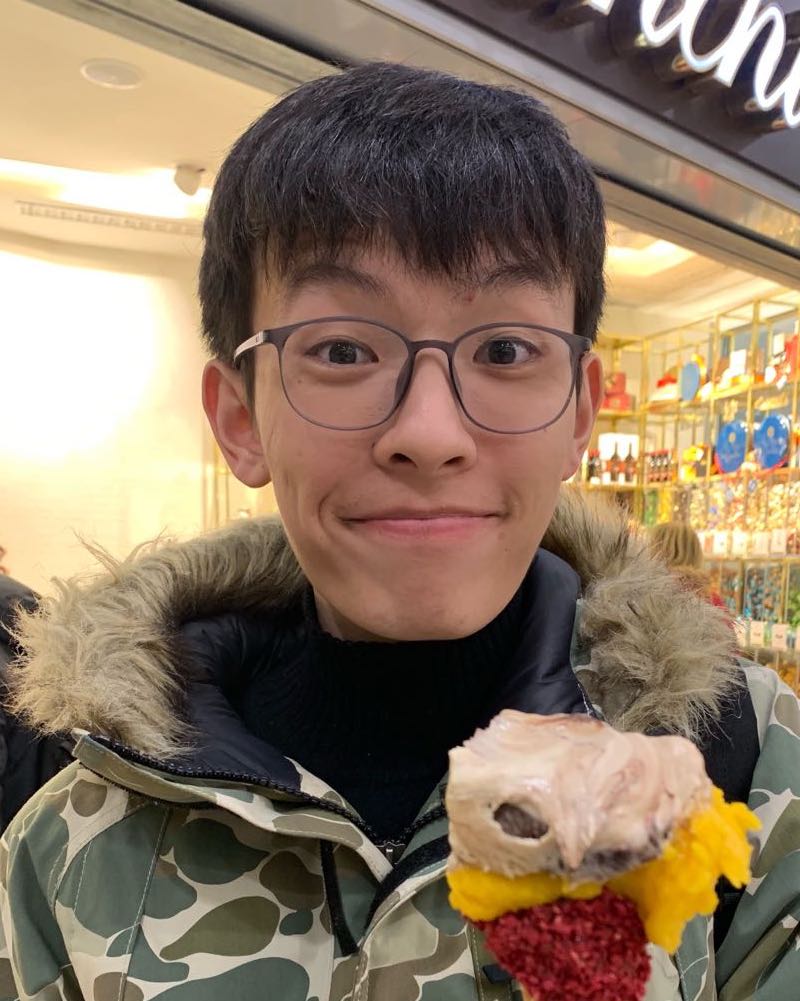}}]{Xudong~Shen}
    is a PhD candidate at the Integrative Sciences and Engineering Programme, Graduate School, National University of Singapore.
    He obtained his BEng from Zhejiang University, China, in 2019.
    His current research interest is fairness in machine learning.
\end{IEEEbiography}

\begin{IEEEbiography}[{\includegraphics[width=1in,height=1.25in,clip,keepaspectratio]{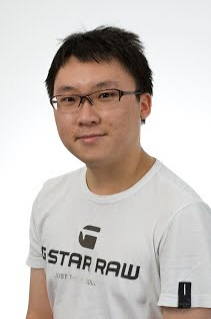}}]{Yongkang~Wong}
	is a senior research fellow at the School of Computing, National University of Singapore. 
	He is also the Assistant Director of the NUS Centre for Research in Privacy Technologies (N-CRiPT). 
	He obtained his BEng from the University of Adelaide and PhD from the University of Queensland. 
	He has worked as a graduate researcher at NICTA's Queensland laboratory, Brisbane, OLD, Australia, from 2008 to 2012. 
	His current research interests are in the areas of Image/Video Processing, Machine Learning, and Human Centric Analysis. 
	He is a member of the IEEE since 2009.
\end{IEEEbiography}

\begin{IEEEbiography}[{\includegraphics[width=1in,height=1.25in,clip,keepaspectratio]{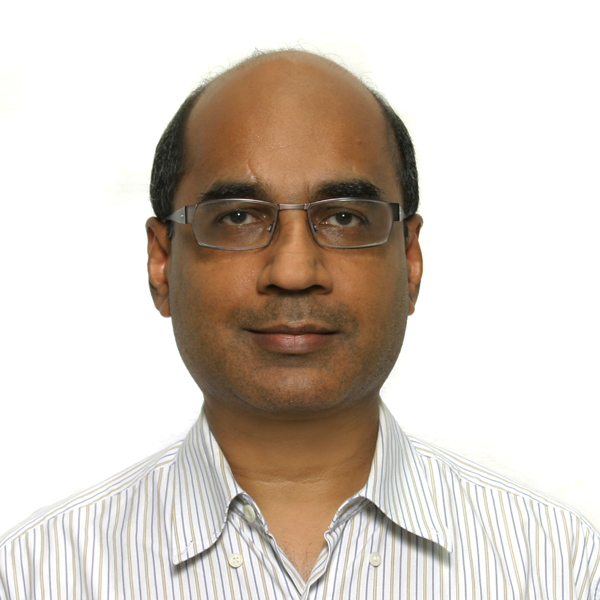}}]{Mohan~S.~Kankanhalli}
% 	is the Provost's Chair Professor at the Department of Computer Science of the National University of Singapore. 
% 	He is the director with the N-CRiPT and also the Dean, School of Computing at NUS. 
% 	Mohan obtained his BTech from IIT Kharagpur and MS \& PhD from the Rensselaer Polytechnic Institute. 
% 	His current research interests are in Multimedia Computing, Multimedia Security and Privacy, Image/Video Processing and Social Media Analysis. 
% 	He is on the editorial boards of several journals. 
% 	Mohan is a Fellow of IEEE.
is Provost's Chair Professor of Computer Science at the National University of Singapore (NUS). He is the Dean of NUS School of Computing and he also directs N-CRiPT (NUS Centre for Research in Privacy Technologies) which conducts research on privacy on structured as well as unstructured (multimedia, sensors, IoT) data. Mohan obtained his BTech from IIT Kharagpur and MS \& PhD from the Rensselaer Polytechnic Institute. Mohan’s research interests are in Multimedia Computing, Computer Vision, Information Security \& Privacy, and Image/Video Processing. He has made many contributions in the area of multimedia \& vision---image and video understanding, data fusion, visual saliency as well as multimedia security---content authentication and privacy, and multi-camera surveillance. Mohan is a Fellow of IEEE.
\end{IEEEbiography}

%\vfill

% Can be used to pull up biographies so that the bottom of the last one
% is flush with the other column.
%\enlargethispage{-5in}

% that's all folks
\end{document}